\documentclass[10pt]{article} % For LaTeX2e
\usepackage[preprint]{tmlr}

\usepackage{amsmath,amsfonts,bm}

\def\eqref#1{equation~\ref{#1}}

\def\1{\bm{1}}

\DeclareMathAlphabet{\mathsfit}{\encodingdefault}{\sfdefault}{m}{sl}
\SetMathAlphabet{\mathsfit}{bold}{\encodingdefault}{\sfdefault}{bx}{n}

\definecolor{linkblue}{RGB}{0, 51, 153}      % dark blue for internal links
\definecolor{citegreen}{RGB}{8, 203, 0} 
\definecolor{urlmagenta}{RGB}{153, 0, 153}   % magenta for URLs

\usepackage{hyperref}
\hypersetup{
    colorlinks=true,
    linkcolor=linkblue,      % sections, equations, figures
    citecolor=citegreen,     % citations
    urlcolor=urlmagenta,     % external links
    filecolor=urlmagenta,
    pdfborder={0 0 0},
    pdfcreator={LaTeX with hyperref},
    pdfproducer={LaTeX}
}
\usepackage{url}

\usepackage{booktabs} % for professional tables
\usepackage{wrapfig}

\usepackage{amsmath}
\usepackage{amssymb}
\usepackage{mathtools}
\usepackage{amsthm}
\usepackage{xcolor}
\usepackage{fontawesome}

\usepackage{dsfont} % for indicator
\usepackage{multirow}
\usepackage{subcaption}
\usepackage{enumitem}
\usepackage{algorithm}
\usepackage[noend]{algpseudocode}
\newtheoremstyle{refplain}
  {3pt}        % space above
  {3pt}        % space below
  {\itshape}   % body font
  {}           % indent
  {\bfseries}  % head font
  {.}          % punctuation after head
  {.5em}       % space after head
  {\thmname{#1}\thmnote{ #3}} % head format

\theoremstyle{refplain}

\newtheorem*{prop1*}{Proposition}
\newtheorem*{theorem*}{Theorem}
\newtheorem*{prop*}{Proposition}
\newtheorem*{dfn*}{Definition}
\newtheorem*{lem*}{Lemma}
\newtheorem*{theowithref}{Theorem}
\newtheorem*{corwithref}{Corollary}

\theoremstyle{plain}
\newtheorem{theorem}{Theorem}[section]
\newtheorem{proposition}[theorem]{Proposition}
\newtheorem{lemma}[theorem]{Lemma}
\newtheorem{corollary}[theorem]{Corollary}
\theoremstyle{definition}
\newtheorem{definition}[theorem]{Definition}

\theoremstyle{definition}
\newtheorem{remark}{Remark}

\title{Hyperspherical Simplex Representations for Backward-Compatible Retrieval}

\author{\name Simone Ricci\thanks{The first two authors contributed equally to this work.}
    \thanks{Corresponding author: Simone Ricci (simone.ricci@unifi.it).}
    \email simone.ricci@unifi.it \\
    \addr Media Integration and Communication Center (MICC)\\
    Dipartimento di Ingegneria dell'Informazione\\
    University of Florence
    \AND
    \name Niccolò Biondi\footnotemark[1]
    \email niccolo.biondi@unitn.it \\
    \addr Multimedia \& Human Understanding (MHUG)\\
    Department of Information Engineering and Computer Science \\
    University of Trento
    \AND
    \name Federico Pernici
    \email federico.pernici@unifi.it\\
    \addr Media Integration and Communication Center (MICC)\\
    Dipartimento di Ingegneria dell'Informazione\\
    University of Florence
    \AND
    \name Alberto Del Bimbo
    \email alberto.delbimbo@unifi.it\\
    \addr Media Integration and Communication Center (MICC)\\
    Dipartimento di Ingegneria dell'Informazione\\
    University of Florence
    \\\\
      \textbf{\href{https://miccunifi.github.io/PSP/}
     {\textcolor{blue!60!black}{\faGlobe\enspace{Website}}}
    \quad
  \href{https://github.com/miccunifi/PSP}
     {\textcolor{blue!60!black}{\faGithub\enspace{Code}}}}
}

\def\month{MM}  % Insert correct month for camera-ready version
\def\year{YYYY} % Insert correct year for camera-ready version
\def\openreview{\url{https://openreview.net/forum?id=XXXX}} % Insert correct link to OpenReview for camera-ready version

\begin{document}

\maketitle

\begin{abstract}
Training modern AI models has become increasingly expensive, and updating deployed models can alter their internal feature representations, affecting applications built upon them.
In retrieval systems, such updates often require a complete gallery re-indexing, which is computationally expensive at scale.
Existing backward-compatibility methods enable direct comparison between representations produced by updated and earlier models, avoiding re-indexing.
However, they typically require compatibility-specific training, learned mapping functions, or modifications to the model architecture, and often depend explicitly on the previous model.
In this paper, we propose hyperspherical simplex representations derived from softmax outputs and logits for backward-compatible retrieval across independently trained models.
These representations leverage the simplex structure induced by softmax cross-entropy training with one-hot labels, which provides a common geometric reference across model updates.
Since this reference depends on the number of classes, introducing new ones during an update expands the simplex, thereby misaligning the prototypes of existing classes with their earlier counterparts.
We therefore derive a deterministic linear transformation to restore their alignment.
We further provide a theoretical analysis of the formal backward-compatibility definition, identifying two geometric conditions under which its inequalities hold in expectation, assuming von Mises--Fisher class distributions whose concentration increases across updates.
Aligned regular-simplex representations satisfy these conditions, providing a theoretical basis for the proposed representations under the stated assumptions.
Without using additional compatibility-specific losses, learned mappings, or modifications to the model architecture, the proposed representations achieve competitive results on standard compatibility benchmarks, including scenarios with expanded class sets, more expressive architectures, and independently trained models from public repositories.
\end{abstract}

\section{Introduction}\label{sec:intro}
Training modern AI models has become increasingly expensive, limiting accessibility to a few well-resourced organizations \citep{radford2021learning, dubey2024llama, jiang2023mistral, anthropic2024claude, yang2025qwen3, xu2026deepseek, team2026gemma}. Despite reductions in model parameters and computing costs \citep{dubey2024llama}, training from scratch, fine-tuning, and inference remain economically challenging \citep{wang2024planningnaturallanguageimproves, openai_learning_2024}. 
As a result, models are increasingly released through public repositories \citep{marcel2010torchvision, wolf2019huggingface, rw2019timm} or provided as services through APIs, simplifying application development and enabling widespread adoption across domains.

As these models become more integrated into real-world systems, the need to update them is also growing. This demand is driven by several factors \citep{raffel2023building, yadav2024survey}, including evolving training strategies \citep{biondiallyouneed, biondi2024stationary, echterhoff2024muscle, shen2020towards}, advancements in model architectures \citep{touvron2023llama, kaplan2020scaling}, the availability of higher-quality datasets \citep{gunasekar2023textbooks}, the expansion of training classes, or extended training periods \citep{biderman2023pythia, raffel2023building}. 
Such updates not only improve performance but also bring the rapid progress of the field into deployed models, simplifying their adoption despite the continued growth in the scale and complexity of models, datasets, and infrastructures \citep{bommasani2021opportunities, sorscher2022beyond}.

However,  a significant drawback of model updates is that they can compromise compatibility with earlier versions, altering the behavior of applications or services built upon them \citep{raffel2023building}.
For example, image retrieval services might experience unexpected changes in image rankings following a model update, forcing users to adapt when previously top-ranked images no longer appear first \citep{shen2020towards}. 
Similarly, end users, such as drivers of semi-autonomous cars, often develop a mental model of the machine learning system’s capabilities \citep{bansal2019beyond}. 
When updates modify system behavior, users must adjust their mental model of the system’s functionality, a task that can be challenging, dissatisfying, and even unsafe \citep{bansal2019updates, templeton2019tesla}.
Recent observations have highlighted similar issues with large language models (LLMs) \citep{echterhoff2024muscle, goswami2025query}, raising concerns about AI safety and alignment. This is particularly critical when decisions and textual outputs from these agents are translated into actions \citep{amodei2016concrete,ngoalignment}.
In such settings, it is essential that the predictions of the updated model mimic those of the old model for any correct classification \citep{milani2016launch, yan2021positive, riccimargin}.

Updating a model while ensuring compatibility with its earlier versions has been explored along three complementary dimensions:
1) Weights-level, which explores the merging of model parameters under the assumption of linear mode connectivity, to unify multiple versions into a single, robust model \citep{frankle2020linear, wortsman2022model, ainsworth2023git}; 2) Representation-level, which seeks to learn backward-compatible representations that are comparable across model updates, so they can be used interchangeably over time \citep{shen2020towards, Meng_2021_ICCV, biondiallyouneed}; and 3) Classifier decision-level, which ensures that the updated model preserves the correct predictions of the old model, preventing negative flips \citep{milani2016launch, yan2021positive, DBLP:journals/pami/ZhaoSXYXTSS24}.
They target distinct notions of compatibility and accordingly differ in their objectives, mechanisms, and evaluation criteria.

Among these paradigms, we focus on representation-level compatibility, which is particularly relevant to retrieval systems, where feature representations produced by different model versions must remain directly comparable.
Backward-compatibility training, introduced by \citet{shen2020towards} to avoid image re-indexing of the gallery set, has been established as a standard training paradigm to achieve backward-compatibility between model representations. 
However, most existing studies on backward-compatibility require training a new model using the old model as a reference \citep{shen2020towards, Meng_2021_ICCV, duggal2021compatibility, pan2023boundary, zhou2023bt}. 
This approach has become increasingly challenging due to restricted access to old model architectures and weights---often limited by proprietary constraints and non-disclosure policies---as well as the high computational costs associated with training large-scale modern AI models.
Recent studies \citep{biondi2023cores, biondi2024stationary, biondiallyouneed} have addressed this limitation by learning compatible representations without explicitly using the previous model as a reference. However, these approaches require all parties to agree in advance on a shared, pre-allocated fixed classifier~\citep{Pernici_2019_CVPR_Workshops, pernici2021regular, zhu2021geometric}.

In this paper, we show that hyperspherical simplex representations derived from the softmax outputs or logits of independently trained\footnote{In this work, the term \emph{independently trained} means that no model is trained using an earlier model’s parameters, representations, outputs, or compatibility losses. Models may nevertheless share the task, data, initialization, and label-to-output coordinate convention.} models provide backward-compatible retrieval.
Our formulation leverages the simplex geometry induced during training by the softmax function and one-hot encoded class labels as a model-agnostic reference, 
yielding hyperspherical simplex representations that are aligned across model updates.
% Since their simplex geometries evolve with the introduction of new classes, we propose a deterministic linear transformation that compensates for the resulting angular misalignment between the old and new simplex geometries, thereby preserving representation alignment across updates.
Since their simplex geometries evolve with the introduction of new classes, we propose a deterministic linear transformation that compensates for the resulting angular misalignment between the prototypes of existing classes and their earlier counterparts, thereby preserving their alignment.
We theoretically analyze the formal backward-compatibility definition and identify geometric conditions under which its inequalities \citep{shen2020towards} hold in expectation, given explicit assumptions on the feature distributions and their concentration across updates.
We then show that these conditions are satisfied by aligned regular-simplex representations, and therefore by the proposed hyperspherical simplex representations, which exhibit this geometry.
Although the proposed hyperspherical simplex representations satisfy the geometric conditions established in our analysis, their dimensionality increases linearly with the number of training classes, resulting in substantial storage costs for large-scale galleries.
To mitigate this limitation, we adopt a top-$k$ sparsification strategy motivated by the rank-preserving property of the softmax function and the top-$k$ calibration of the softmax cross-entropy loss \citep{lapin2016loss, DBLP:conf/icml/YangK20}, which reduces gallery storage with negligible changes in retrieval accuracy and without degrading compatibility.

Our main contributions are the following:
\begin{enumerate}
    \item We introduce a novel approach for backward-compatible retrieval based on hyperspherical simplex representations derived from softmax outputs and logits.

    \item We provide a theoretical analysis of the formal definition of backward-compatibility, identifying geometric conditions under which its inequalities are satisfied in expectation, given explicit distributional and concentration assumptions. We show that hyperspherical simplex representations derived from softmax outputs and logits satisfy these geometric conditions.

    \item We propose a top-$k$ sparsification method for hyperspherical simplex representations, motivated by the rank-preserving property of the softmax function and the top-$k$ calibration of the softmax cross-entropy loss, to reduce representation storage.

    \item We achieve competitive results on standard compatibility benchmarks, obtaining substantial improvements in scenarios involving a large number of model updates, without using additional compatibility-specific losses, learned mappings, or modifications to the model architecture.
    We provide empirical evidence that representations derived from softmax outputs and logits substantially improve backward-compatibility, even between independently pretrained models obtained from a public repository.

\end{enumerate}

The remainder of the paper is organized as follows.
In Sec.~\ref{sec:related_works}, we review related work on backward-compatibility.
In Sec.~\ref{sec:how_psp}, we derive hyperspherical simplex representations from softmax outputs and logits and analyze their backward-compatibility under stated theoretical assumptions.
In Sec.~\ref{sec:when_experimental_results}, we present extensive experiments and compare our proposed method with state-of-the-art alternatives.
Finally, in Sec.~\ref{sec:conclusions}, we draw conclusions and discuss the limitations of this work.

\section{Related Work} \label{sec:related_works}

Backward-compatibility was first formalized by \citet{shen2020towards}, who defined both theoretical and empirical criteria to verify whether learned representations are interoperable across model updates, thus eliminating the need to re-index the gallery set in retrieval systems.
To learn backward-compatible representations, they proposed Backward-Compatible Training (BCT), where the previously learned classifier is frozen and serves as a fixed reference during training of the new model, aligning new feature vectors with their corresponding old class prototypes. 
Subsequent work proposed additional regularization objectives to enforce alignment between old and new representations \citep{budnik2021asymmetric, duggal2021compatibility, Meng_2021_ICCV, zhang2021hot, ijcai2022uniBCT, pan2023boundary, wu2022neighborhood, goswami2025query}.
In \citet{Meng_2021_ICCV}, a combination of loss functions is used to align class means between different model updates and to achieve more compact intra-class distributions while learning new information, directly optimizing a point-to-set formulation of backward-compatibility.
In \citet{zhang2021hot}, compatible representations are learned with a contrastive loss that encourages each new-to-old positive pair to be more similar than both new-to-old and new-to-new negative pairs.
\citet{pan2023boundary} introduce an adversarial discriminator within BCT to minimize the distribution gap between features from the old and new models.
In these regularization-based approaches, due to the constraints imposed to achieve compatibility, the performance of the new backward-compatible model often does not reach that of a newly independently trained model~\citep{zhou2023bt}, trained without using the old model as a reference.
To address this issue, \citet{zhou2023bt} and \citet{ricci2024backward} focus on modifying the model architecture by expanding the feature space at each model update, such that the original subspace preserves compatibility with previous models, while information from the new data is learned in the additional dimensions. However, even if compatibility is achieved while retaining the discriminative performance of a newly independently trained model, these approaches require explicit modifications to the model architecture that introduce growing complexity with each successive feature space expansion.

Along a different line, several studies have proposed mapping-based approaches to achieve compatibility across model updates~\citep{Chen_2019_CVPR, wang2020unified, Meng_2021_ICCV, ramanujan2022forward, hu2022learning, jaecklefastfill, ricci2025orthogonality}.
These methods learn mapping functions to align corresponding representations of the same inputs obtained from two different models, enabling direct comparison between them. 
Such approaches are closely related to prior work in cross-model and cross-modal representation learning, where alignment between feature spaces of different models or modalities is a central objective~\citep{Moschella2022-yf, cannistraci2023boot, cannistraci2024from, maniparambil2024vision, maniparambil2025harnessing, ricci2026spherical}.

However, feature-space alignment alone does not guarantee satisfaction of the formal compatibility inequalities, and mapping-based approaches introduce significant overhead in scenarios with many model updates, as they require composing multiple learned mapping functions across successive updates.
To address these limitations, the solution proposed by~\citet{biondi2023cores, biondi2023cl2r, biondi2024stationary, biondiallyouneed} involves learning with a pre-allocated $d$-simplex fixed classifier, which enforces feature alignment across updates by fixing the position of class prototypes a priori (i.e., from the beginning of training), while accommodating new classes within pre-allocated regions of the feature space.
Although this approach achieves compatibility without using the old model as a reference, it relies on the same pre-allocated fixed classifier across model updates and requires architectural modifications.

In this paper, we show that hyperspherical representations with an aligned regular-simplex geometry can be obtained directly from the logits or softmax outputs of independently trained models, without requiring a pre-allocated fixed classifier shared across updates. Building on this observation, we derive hyperspherical simplex representations for backward-compatible retrieval that do not require additional compatibility-specific losses, learned mapping functions, or modifications to the model architecture.
At the theoretical level, while \citet{biondiallyouneed} numerically show that stationary $d$-simplex representations satisfy the compatibility inequalities in expectation, our analysis complements this result by providing closed-form geometric conditions for aligned hyperspherical representations under explicit distributional model assumptions.

\section{Compatibility of Hyperspherical Simplex Representations}\label{sec:how_psp}

In this section, we first introduce hyperspherical simplex representations derived from softmax outputs and logits. We then identify geometric conditions under which the compatibility inequalities of~\citet{shen2020towards} hold in expectation given explicit assumptions on the feature distributions and their increasing concentration across model updates. Finally, we show that the proposed representations satisfy these conditions.

\subsection{Preliminaries and Motivation} \label{sec:prelims_and_motivation}
According to~\citet{shen2020towards}, backward-compatibility across model updates is achieved when the representation produced by the updated model at step $t$ can be compared with the representation produced by the earlier model at step $k$. 
Formally, it requires that the following two inequalities hold: 
\begin{definition}[\textbf{\textit{Backward-Compatibility}}]
\label{def:compatibility}
\begin{subequations}\label{eq:compatible_set_dist}
\begin{align}
 &d \big(\mathbf{h}^t_{i}, \mathbf{h}^k_{j} \big) \leq d \big(\mathbf{h}^k_{i}, \mathbf{h}^k_{j} \big), \forall (i, j) \in \{(i, j) \, | \, y_i = y_j \} \label{eq:first} \\
 &d \big(\mathbf{h}^t_{i}, \mathbf{h}^k_{j}  \big) \geq d \big( \mathbf{h}^k_{i}, \mathbf{h}^k_{j}  \big), \forall (i, j) \in \{(i, j) \, | \, y_i \neq y_j \} \label{eq:second}
\end{align}
\end{subequations}
where $\mathbf{h}^t$ and $\mathbf{h}^k$ are feature vectors extracted respectively with the updated model at step $t$ and the earlier model at step $k$, $d$ is a distance function (e.g., cosine distance), and $y_i$ and $y_j$ are class labels associated with $\mathbf{h}_{i}$ and $\mathbf{h}_{j}$, respectively.
\end{definition}
In retrieval tasks, the inequalities in Eq.~\ref{eq:first} and Eq.~\ref{eq:second} suggest that the updated model representation, when used to compare against the old representation, should perform at least as well as the old one in grouping images from the same classes and separating those from different classes.
While Eq.~\ref{eq:first} and Eq.~\ref{eq:second} describe desirable goals for compatible representations, directly verifying them requires exhaustive pairwise comparisons between the two representations, which is impractical at scale \citep{shen2020towards}.

To obtain a numerical assessment of these inequalities, \citet{biondi2024stationary,biondiallyouneed} estimate the corresponding expected distances through Monte Carlo integration under the unconstrained feature model (UFM) assumption \citep{mixon2022neural}, considering feature distributions centered around the prototypes of a pre-allocated $d$-simplex fixed classifier.
In their initial analysis, \citet{biondi2024stationary} model class features as samples from Euclidean hyperballs and evaluate compatibility using Euclidean distance. Under this formulation, the different-class inequality (Eq.~\ref{eq:second}) is not satisfied, and stationary $d$-simplex representations are therefore characterized as an optimal approximation to backward-compatibility. Extending this analysis, \citet{biondiallyouneed} model class features as samples from hyperspherical caps on the unit hypersphere and evaluate compatibility using cosine distance. Assuming that the hyperspherical caps shrink across model updates, they demonstrate that both compatibility inequalities hold in expectation for representations learned with a pre-allocated $d$-simplex fixed classifier.

While these studies assess compatibility numerically for stationary $d$-simplex representations, we analytically characterize the compatibility inequalities and identify geometric conditions under which they are satisfied in expectation.
Specifically, we model hyperspherical features using von Mises--Fisher (vMF) distributions on the unit hypersphere \citep{fisher1953dispersion, Banerjee2005ClusteringOT}, which yield closed-form expressions for the expected cosine distances in terms of class prototype directions and concentration parameters.
Assuming that the concentration of the class distributions increases across model updates, we prove in Theorem~\ref{theo:compatibility} of Sec.~\ref{sec:theo_section} that both inequalities of Def.~\ref{def:compatibility} are satisfied in expectation if the following two geometric conditions hold:
\begin{enumerate}[label=(\alph*), ref=(\alph*),itemsep=0pt, topsep=3pt]
  \item \label{cond:align} corresponding class prototypes are aligned across updates;
  \item \label{cond:obtuse} the angle between prototypes of different classes is greater than $\pi/2$.
\end{enumerate}
% These conditions directly guide the design of the proposed representations: condition~\ref{cond:obtuse} motivates the centering projection introduced in Sec.~\ref{sec:psp}, while condition~\ref{cond:align} motivates the angular correction under class expansion described in Sec.~\ref{sec:projection}.
% We then show that aligned regular-simplex representations satisfy both conditions, and that the proposed hyperspherical simplex representations derived from softmax outputs and logits exhibit this geometry across model updates.
% Since their simplex structure emerges naturally from softmax cross-entropy training with one-hot labels, the proposed representations provide a practical and theoretically grounded approach to backward-compatibility between independently trained models.

In particular, both conditions are satisfied by regular-simplex representations that are centered at the origin and aligned across model updates, as established in Cor.~\ref{cor:regulsimplex_compatibility}.
Since softmax cross-entropy training with one-hot labels induces a simplex structure in both the softmax outputs and the logits, these provide a principled basis for constructing backward-compatible representations without compatibility-specific training.
Accordingly, the two conditions directly guide the design of the proposed representations: condition~\ref{cond:obtuse} motivates the centering projection introduced in Sec.~\ref{sec:psp}, whereas condition~\ref{cond:align} motivates the angular correction under class expansion described in Sec.~\ref{sec:projection}.
% Sec.~\ref{sec:theo_section} then shows that, the proposed hyperspherical simplex representations satisfying both conditions and providing a practical and theoretically grounded approach to backward-compatibility between independently trained models.

% 
% In particular, both conditions are satisfied by regular-simplex representations that are centered at the origin and aligned across model updates (Cor.~\ref{cor:regulsimplex_compatibility}).
% Since softmax cross-entropy training with one-hot labels induces a simplex structure in both the softmax outputs and the logits, these provide a natural starting point for constructing backward-compatible representations without compatibility-specific training.
% Accordingly, the two conditions directly guide the design of the proposed representations: condition~\ref{cond:obtuse} motivates the centering projection introduced in Sec.~\ref{sec:psp}, whereas condition~\ref{cond:align} motivates the angular correction under class expansion described in Sec.~\ref{sec:projection}.
% The resulting hyperspherical simplex representations exhibit an aligned regular-simplex geometry across model updates, thereby providing a practical and theoretically grounded approach to backward-compatibility between independently trained models.
% }

\subsection{Hyperspherical Simplex Representations}

We now formalize two hyperspherical simplex representations, which are derived directly from the softmax outputs and logits of a trained classification model.

\subsubsection{Features Derived From Softmax Outputs}
\label{sec:psp}

Given a model at update step $k$, the softmax output \(\sigma(\mathbf{z}^k)\), obtained by applying the softmax function \(\sigma(\cdot)\) to the logits $\mathbf{z}^k$, represents a probability distribution over the $C^k$ classes and lies within the probability simplex, which is defined as follows:
\begin{definition}[\textbf{\textit{Probability Simplex}}]
\label{def:probability_simplex}
The probability simplex $\Delta^{C^k-1}$ is the \( (C^k-1) \)-dimensional simplex in \( \mathbb{R}^{C^k} \) whose vertices are the canonical basis vectors \( \mathbf{e}_1, \mathbf{e}_2, \ldots, \mathbf{e}_{C^k} \), and its center is \( \mathbf{o}^k = \frac{1}{C^k} \sum_{c=1}^{C^k} \mathbf{e}_c \), such that 
$
\Delta^{C^k-1} = \{ \mathbf{u} \in \mathbb{R}^{C^k} : u_1+\dots + u_{C^k} = 1, \, u_j \geq 0 \;\;\; \text{for} \;\;\; j=1,\dots,C^k \}.
$
\end{definition}

\begin{wrapfigure}[26]{r}{0.5\textwidth}
\vspace{-1.2cm}
  \begin{center}
    \includegraphics{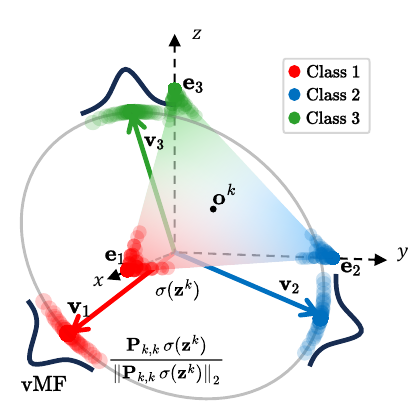}
  \end{center}
  \vspace{-0.4cm}
    \caption{
    Softmax outputs lie in the probability simplex $\Delta^{C^k-1}$ (colored area), centered at $\mathbf{o}^k$, and are driven during training toward its vertices $\mathbf{e}_c$, which correspond to the one-hot target labels.
    The projection $\mathbf{P}_{k,k}$ shifts the simplex center to the origin, mapping softmax outputs onto a regular simplex whose vertices define the class prototypes $\mathbf{v}^k_c$ (colored vectors).
    The features $\mathbf{h}^k$ of Eq.~\ref{eq:softmax_outputs_as_features} are then obtained by $\ell_2$-normalization onto the hypersphere, where each class is modeled by a vMF distribution with mean direction along the corresponding class prototype.
  }
  \label{fig:vmF}
\end{wrapfigure}
For a training sample of class $c$, the cross-entropy loss is minimized over the probability simplex $\Delta^{C^k-1}$ at the vertex $\mathbf{e}_c$, which corresponds to the one-hot encoded target label. Hence, the optimization of the cross-entropy loss drives the softmax outputs toward the vertices $\mathbf{e}_c$ of $\Delta^{C^k-1}$ during training (see Fig.~\ref{fig:vmF}).
A straightforward way to obtain hyperspherical representations from softmax outputs is to apply $\ell_2$-normalization directly.
However, this approach does not satisfy condition~\ref{cond:obtuse} of Sec.~\ref{sec:prelims_and_motivation}. Since the probability simplex $\Delta^{C^k-1}$ lies entirely in the non-negative orthant, with its center $\mathbf{o}^k$ offset from the origin, any two softmax outputs have a non-negative inner product: the vertices $\mathbf{e}_c$ toward which the softmax outputs are driven are mutually orthogonal, and the $\ell_2$-normalized softmax outputs of different classes are separated by angles of at most $\pi/2$.

To overcome this limitation and obtain angles greater than $\pi/2$ between distinct classes, we apply a projection matrix $\mathbf{P}_{k,k}$ before the $\ell_2$-normalization of the softmax outputs. This projection centers the probability simplex $\Delta^{C^k-1}$ at the origin, mapping each softmax output to the corresponding point of a $(C^k-1)$-regular simplex\footnote{A $(C^k-1)$-regular simplex centered at the origin has equidistant vertices and a constant angle between every pair of distinct vertices, equal to $\arccos(-1/(C^k-1))$.}, while preserving pairwise distances, as shown in Fig.~\ref{fig:vmF}.

To formulate the projection matrix, we define the class prototypes $\mathbf{v}^k_c$ as the vectors from the center $\mathbf{o}^k$ to the vertices $\mathbf{e}_c$ toward which the softmax outputs align:
\begin{equation} \label{eq:class_prototypes}
\mathbf{v}^k_c = \mathbf{e}_c - \mathbf{o}^k \quad \forall c \;,\; 1 \leq c \leq C^k, \quad \mathbf{v}^k_c \in \mathbb{R}^{C^k}.
\end{equation}

These prototypes span the $(C^k-1)$-dimensional hyperplane of the regular simplex and therefore define the projection matrix. Accordingly, the deterministic projection $\mathbf{P}_{k,k}$ is defined as the following $C^k \times C^k$ matrix:
\begin{equation} \label{eq:projection_prot}
\mathbf{P}_{k,k} = [ \mathbf{v}^k_c \; ]_{c=1}^{C^k} = \mathbf{I}_{C^k} - \frac{1}{C^k} \mathbf{J}_{C^k}
\end{equation}
The hyperspherical simplex representation $\mathbf{h}^k$ is then obtained as the $\ell_2$-normalized projection of the softmax output\footnote{When the softmax output corresponds to the uniform distribution (i.e., maximum predictive uncertainty) so that $\sigma(\mathbf{z}^k)=\mathbf{o}^k$, the projected vector $\mathbf{P}_{k,k}\,\sigma(\mathbf{z}^k)$ is the zero vector, making $\ell_2$-normalization undefined. In this particular case, we define $\mathbf{h}^k \triangleq \mathbf{0}$. Although theoretically possible, this case is unlikely in practice because trained classifiers typically produce non-uniform output distributions~\citep{DBLP:conf/iclr/HendrycksG17}.}:
\begin{equation}
    \mathbf{h}^k = \frac{\mathbf{P}_{k,k} \, \sigma(\mathbf{z}^k)}{\left\lVert \mathbf{P}_{k,k} \, \sigma(\mathbf{z}^k) \right\rVert_2}, \quad \mathbf{h}^k \in \mathbb{R}^{C^k}.
    \label{eq:softmax_outputs_as_features}
\end{equation}

% The set of these vectors spans the ($C^k-1$)-dimensional hyperplane of the regular simplex, a fundamental property that allows these vectors to directly define the subspace for the projection. 
% Thus, the deterministic projection \(\mathbf{P}_{k,k}\)  
% is defined as the following ${C^k \times C^k}$ matrix:  
% \begin{equation} \label{eq:projection_prot}
% \mathbf{P}_{k,k} = \mathbf{V}^k = [ \mathbf{v}^k_c \; ]_{c=1}^{C^k} = \mathbf{I}_{C^k} - \frac{1}{C^k} \mathbf{J}_{C^k}
% \end{equation}
% The hyperspherical simplex representation \( \mathbf{h}^k \) is the $\ell_2$-normalized projection\footnote{When the softmax output corresponds to the uniform distribution (i.e., maximum predictive uncertainty) so that $\sigma(\mathbf{z}^k)=\mathbf{o}^k$, the projected vector $\mathbf{P}_{k,k}\,\sigma(\mathbf{z}^k)$ is the zero vector, making $\ell_2$-normalization undefined. In this particular case, we define $\mathbf{h}^k \triangleq \mathbf{0}$. Although theoretically possible, this case is unlikely in practice because trained classifiers typically produce non-uniform output distributions~\citep{DBLP:conf/iclr/HendrycksG17}.} of softmax output vector via $\mathbf{P}_{k,k}$, defined as:
% \begin{equation}
%     \mathbf{h}^k = \frac{\mathbf{P}_{k,k} \, \sigma(\mathbf{z}^k)}{\left\lVert \mathbf{P}_{k,k} \, \sigma(\mathbf{z}^k) \right\rVert_2}, \quad \mathbf{h}^k \in \mathbb{R}^{C^k}.
%     \label{eq:softmax_outputs_as_features}
% \end{equation}

The class prototypes of the resulting representation $\mathbf{h}^k$ satisfy condition~\ref{cond:obtuse} by construction, as they form a regular simplex centered at the origin with pairwise angles greater than $\pi/2$~\citep{pernici2021regular}.
Moreover, since the class prototypes $\mathbf{v}^k_c$ are defined with respect to the canonical basis vectors $\mathbf{e}_c$ associated with the one-hot encoding of class $c$, their directions are fixed for a given set of classes, so that condition~\ref{cond:align} holds across model updates when the class set is unchanged; the case in which new classes are introduced during an update is addressed in Sec.~\ref{sec:projection}.
On the hypersphere, we model each class distribution by a vMF distribution, the hyperspherical analogue of an isotropic Gaussian, whose mean direction is aligned with the class prototype $\mathbf{v}^k_c$ (see Fig.~\ref{fig:vmF}).

% \textcolor{teal}{On the hypersphere, we model each class distribution of the hyperspherical simplex representation by a vMF distribution \citep{fisher1953dispersion, Banerjee2005ClusteringOT}, the hyperspherical analogue of an isotropic Gaussian distribution (see Fig.~\ref{fig:vmF}), where the mean direction of each vMF is aligned with the class prototype $\mathbf{v}_c$.}
% 
% This modeling assumption enables a closed-form analysis of the expected cosine distances for both Eq.~\ref{eq:first} and Eq.~\ref{eq:second}, thereby allowing the representation in Eq.~\ref{eq:softmax_outputs_as_features} to be studied within the theoretical framework of Theorem~\ref{theo:compatibility}.
% }

\subsubsection{Features Derived From Logits}\label{sec:logits_theory}

The simplex geometry induced by softmax cross-entropy training is reflected not only in the softmax outputs but also in the underlying logits.
Motivated by the UFM analyses of \citet{mixon2022neural} and \citet{fang2021exploring}, we analyze this relationship directly at the level of the logits.
Specifically, rather than modeling the shared backbone and classifier, we treat the logit vector of each training sample as an independent optimization variable, initialized at the origin (i.e., at the equiprobable softmax distribution), and study its evolution under gradient descent on the softmax cross-entropy loss; we refer to this abstraction as the logit-UFM.
This formulation isolates the geometry induced directly by the loss from the additional interactions introduced by the network architecture and shared model parameters, leading to the following proposition:

% 
% Following \citet{mixon2022neural} and \citet{fang2021exploring}, we analyze this relationship through a logit-space formulation of the Unconstrained Feature Model (UFM).
% Specifically, rather than explicitly modeling the shared backbone and classifier, we treat the logit vector associated with each training sample as an independent optimization variable and study its evolution under gradient descent on the softmax cross-entropy loss, initialized at the origin, which corresponds to the equiprobable softmax distribution.
% This logit-UFM formulation isolates the geometry induced directly by the loss, without the additional interactions introduced by the network architecture and shared model parameters.
% We then assess whether this geometric behavior is reflected in the logits produced by standard neural networks using the Neural Collapse (NC) hypotheses~\citep{papyan2020prevalence}.
% We formalize the resulting logit dynamics in the following proposition:
% }

\begin{proposition}[Proof in Appendix~\ref{sec:app_softmax_affects_logits}]
\label{theo:softmax_affects_logits}
Under the logit-UFM assumptions, as the softmax outputs converge toward the vertices of the probability simplex during training, the corresponding logit vectors align with the directions defined in Eq.~\ref{eq:class_prototypes}.
\end{proposition}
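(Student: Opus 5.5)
We work directly with the gradient flow (or gradient descent with step size $\eta$) of the cross-entropy loss on a single logit variable. Under the logit-UFM, each training sample of class $c$ contributes the loss $\mathcal{L}(\mathbf{z}) = -\log \sigma(\mathbf{z})_c$, and its logit vector $\mathbf{z}$ is optimized independently from $\mathbf{z}_0=\mathbf{0}$. It therefore suffices to analyze one sample of a generic class $c$.

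The first step computes the gradient,
\begin{equation*}
\nabla_{\mathbf{z}}\mathcal{L} = \sigma(\mathbf{z}) - \mathbf{e}_c ,
\end{equation*}
so the update reads $\mathbf{z}_{t+1} = \mathbf{z}_t + \eta\,(\mathbf{e}_c - \sigma(\mathbf{z}_t))$.

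The second step proves by induction a symmetry invariant: for all $t$, $\mathbf{z}_t = \alpha_t \mathbf{e}_c + \beta_t \sum_{j\neq c}\mathbf{e}_j$. The base case holds since $\mathbf{z}_0=\mathbf{0}$. For the inductive step, if all non-target coordinates of $\mathbf{z}_t$ are equal, then so are the non-target coordinates of $\sigma(\mathbf{z}_t)$, and hence those of the update. The same structure also preserves the sum of coordinates, because $\mathbf{1}^\top(\mathbf{e}_c-\sigma(\mathbf{z}))=1-1=0$. Thus $\mathbf{1}^\top\mathbf{z}_t=0$ for all $t$, which gives $\alpha_t + (C^k-1)\beta_t = 0$. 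Combining the two facts, $\mathbf{z}_t = \gamma_t\,(\mathbf{e}_c - \mathbf{o}^k)$ with $\gamma_t = \alpha_t-\beta_t$, since $\mathbf{e}_c-\mathbf{o}^k$ is precisely the unique direction with that symmetry and zero coordinate sum. This means the logit vector lies \emph{exactly} on the ray spanned by $\mathbf{v}^k_c$ of Eq.~\ref{eq:class_prototypes} at every iterate, not merely asymptotically. Equivalently, one can note that $\mathbf{P}_{k,k}(\mathbf{e}_c-\sigma(\mathbf{z})) = \mathbf{e}_c - \sigma(\mathbf{z})$ and use the projection of Eq.~\ref{eq:projection_prot}.

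The third step shows $\gamma_t \to +\infty$, which ties the statement to the softmax converging to the vertex. Writing $p_t = \sigma(\mathbf{z}_t)_c = 1/\big(1+(C^k-1)e^{-\gamma_t}\big)$, the scalar recursion is $\gamma_{t+1} = \gamma_t + \eta\,(1-p_t)\cdot\frac{C^k}{C^k-1}$, using the coefficient relation implied by the step above. Since $1-p_t>0$, the sequence $\gamma_t$ is strictly increasing. If it were bounded, $1-p_t$ would stay bounded away from $0$, forcing divergence, a contradiction. Hence $\gamma_t\to\infty$, $p_t\to 1$ and $\sigma(\mathbf{z}_t)\to\mathbf{e}_c$, while $\mathbf{z}_t/\|\mathbf{z}_t\|_2 = \mathbf{v}^k_c/\|\mathbf{v}^k_c\|_2$ throughout. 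Since the samples are decoupled, this holds for every class simultaneously, so the logits of class $c$ align with $\mathbf{v}^k_c$ and the collection of class directions forms the centered regular simplex.

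The main obstacle is not the computation but the interpretation of "as the softmax outputs converge". A more robust version would allow arbitrary initialization. In that case I would decompose $\mathbf{z}_t$ into its component along $\mathbf{1}$, which is invariant, and its orthogonal part. I would then argue that the gradient direction $\mathbf{e}_c-\sigma(\mathbf{z}_t)$ converges to $\mathbf{e}_c-\mathbf{o}^k$ scaled appropriately, or rather that the dominant growth occurs along $\mathbf{v}^k_c$ while the other components stay bounded. This would give directional convergence $\mathbf{z}_t/\|\mathbf{z}_t\|\to \mathbf{v}^k_c/\|\mathbf{v}^k_c\|$. Since the stated assumptions fix the initialization at the origin, I would present the exact invariant argument and mention the general case only as a remark.
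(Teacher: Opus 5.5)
Your proof is correct and follows the paper's argument. Both start from the origin and show by induction that $\mathbf{z}^{(r)}$ stays a nonnegative multiple of $\mathbf{v}^k_c$, via the recursion $a_{r+1}=a_r+\eta_r\frac{C(1-p_r)}{C-1}$ and the monotone link $p_r=e^{a_r}/(e^{a_r}+C-1)$. The one difference is that you prove $\gamma_t\to\infty$ for a constant step size; with variable steps this would need $\sum_r\eta_r=\infty$. The paper instead takes convergence of the softmax to the vertex as a hypothesis and deduces $a_r\to\infty$ from it.
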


\begin{wrapfigure}[21]{r}{0.5\textwidth}
\vspace{-0.6cm}
  \begin{center}
    \includegraphics[width=0.85\linewidth]{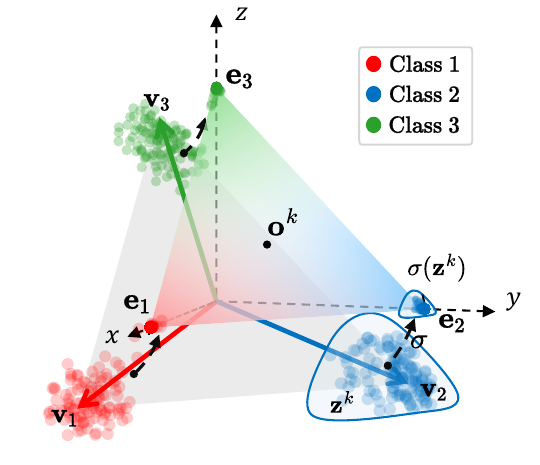}
  \end{center}
  \vspace{-0.4cm}
  \caption{Softmax outputs and logits obtained from a ResNet-18 trained on three CIFAR-10 classes. While softmax outputs concentrate near the vertices of the probability simplex (colored area), logits exhibit a similar simplex geometry (gray area) centered at the origin with a broader within-class spread.} \label{fig:probability_simplex}
\end{wrapfigure}
This behavior also emerges in standard networks, as illustrated in Fig.~\ref{fig:probability_simplex} for a representative three-class case from the CIFAR-10 dataset.
At the end of training, the softmax outputs concentrate near the vertices of the probability simplex (colored area), while the logits exhibit a regular-simplex geometry (gray area) aligned with the same class prototype directions $\mathbf{v}^k_c$.
Moreover, the logits are expected to be approximately centered in the terminal phase of training, consistent with theoretical analyses of Neural Collapse (NC)~\citep{papyan2020prevalence} that yield centered prediction vectors at global minimizers~\citep{hong2024neural} and with the observation that normalization mechanisms commonly used in neural networks tend to center latent representations~\citep{Moschella2022-yf}.
We therefore directly $\ell_2$-normalize the logits, without the centering projection required for the softmax outputs, yielding a second hyperspherical simplex representation:
\begin{equation} \label{eq:logits_as_features}
    \mathbf{h}^k = \frac{\mathbf{z}^k}{\| \mathbf{z}^k\|_2} \; , \quad \mathbf{h}^k \in \mathbb{R}^{C^k}.
\end{equation}

Analogously to the representation in Eq.~\ref{eq:softmax_outputs_as_features}, we model each class distribution of the normalized logit representation with a vMF distribution~\citep{Banerjee2005ClusteringOT} whose mean direction is aligned with the class prototype $\mathbf{v}^k_c$.
Since both representations share the same prototype directions, the representation in Eq.~\ref{eq:logits_as_features} inherits the geometric properties discussed in Sec.~\ref{sec:psp} and can likewise be analyzed within the theoretical framework of Theorem~\ref{theo:compatibility}.
Compared with the representation derived by the softmax outputs as in Eq.~\ref{eq:softmax_outputs_as_features}, the normalized logit representation exhibits a larger angular spread (see Fig.~\ref{fig:probability_simplex}), preserving greater within-class variability, a property that has been associated with improved transferability and robustness of learned representations~\citep{chen2022perfectly,wang2020understanding}.

To assess whether the behavior predicted by Prop.~\ref{theo:softmax_affects_logits} emerges in standard neural networks, and to quantify the spread of the two representations, we leverage the NC hypotheses introduced by~\citet{papyan2020prevalence}.
These hypotheses provide a methodology for examining how the representations derived from softmax outputs and logits collapse toward the class prototype directions defined in Eq.~\ref{eq:class_prototypes}.
The NC hypotheses evaluate whether the within-class covariance tends to 0 (NC1); whether class means form a simplex whose vertices have the same norm (NC2 equinorm) and are maximally and equally distant from each other (NC2 equidistance); whether class means converge to class prototypes (NC3); and whether the nearest class-mean rule for classifying features holds (NC4).
% To empirically assess Prop.~\ref{theo:softmax_affects_logits}
% and the spread of features, we leverage the NC hypotheses introduced by~\citet{papyan2020prevalence}. 
% These hypotheses provide a methodology for examining how the representations derived from softmax outputs and logits collapse toward the class prototype directions defined in Eq.~\ref{eq:class_prototypes}.
% The NC hypotheses evaluate: (a) whether the within-class covariance tends to 0 (NC1); (b)  whether class means of features tend to form a simplex whose vertices have the same norm (NC2 equinorm) and are placed at the maximum possible distance from each other (NC2 equidistance); (c) whether class means converge to class prototypes (NC3); (d) whether the nearest class-means rule for classifying features holds (NC4).

Fig.~\ref{fig:neural_collpase_proof} reports the corresponding NC measures for softmax outputs (blue curves) and logits (orange curves) during the terminal phase of training of a ResNet-18 on CIFAR-100.
The higher NC1 values observed for logits indicate greater residual within-class variability than for softmax outputs.
Similarly, the higher NC2-equidistance and NC3 values indicate a lower degree of convergence of the logit representations toward the ideal simplex geometry and the corresponding class-prototype directions.
These results show that, while both representations exhibit the same underlying simplex structure, logits retain a larger within-class spread and a lower degree of prototype alignment than softmax outputs.
This distinct trade-off between alignment and spread is reflected in the retrieval performance reported in Sec.~\ref{sec:when_experimental_results}.
Similar trends across different model architectures and datasets are reported in Appendix~\ref{sec:app_neural_collapse_logits}.

\begin{figure*}[t]
    \centering
    \hspace{-10pt}
    \includegraphics[width=1.03\linewidth]{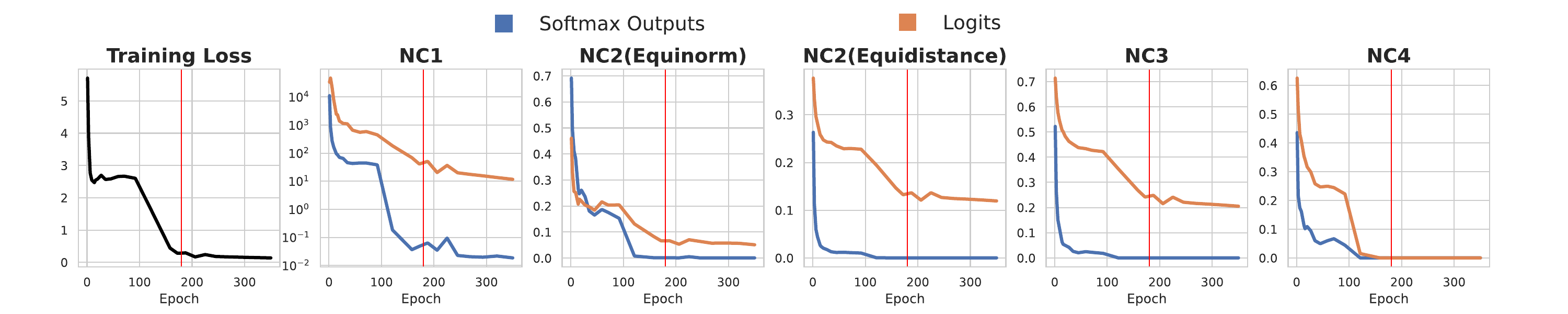}
    \caption{
    Plots of training loss and Neural Collapse (NC) hypotheses for softmax outputs (blue) and logits (orange) on CIFAR-100 with ResNet-18. The terminal phase of training is shown with a vertical red line.
    }
    \label{fig:neural_collpase_proof}
\end{figure*}

\subsection{Deterministic Angular Correction Under Class Expansion}\label{sec:projection}

In this section, we address the misalignment that arises in the proposed hyperspherical simplex representations when new classes are introduced during model updates, analyzing the behavior of the features through their class prototypes \citep{papyan2020prevalence}.
We formally characterize this misalignment and introduce a deterministic linear transformation that restores prototype alignment across updates.
% The transformation exploits the geometry of the representations in Eq.~\ref{eq:softmax_outputs_as_features} and Eq.~\ref{eq:logits_as_features} and requires neither learnable parameters nor data forwarding, providing a training-free mechanism for alignment under class expansion.
% 
% In this section, we address the prototype misalignment that arises in the proposed hyperspherical simplex representations when new classes are introduced during model updates.
% We formally characterize this misalignment and introduce a deterministic linear transformation that restores prototype alignment across updates.
% The transformation exploits the geometry of the representations in Eq.~\ref{eq:softmax_outputs_as_features} and Eq.~\ref{eq:logits_as_features} and requires neither learnable parameters nor data forwarding, providing a training-free mechanism for alignment under class expansion.
% }

As follows from Def.~\ref{def:probability_simplex}, the geometry of the probability simplex depends explicitly on the number of classes.
When the model is updated from step $k$ to step $t$ by introducing additional classes, the classifier expands from $C^k$ to $C^t$ outputs, changing the simplex center and, consequently, the directions of the prototypes associated with the previously existing classes according to Eq.~\ref{eq:class_prototypes}.
Since the representations at the two steps have different dimensions, throughout this section vectors at step $k$ are embedded into $\mathbb{R}^{C^t}$ by zero-padding whenever they are compared with vectors at step $t$.
As illustrated by the dashed arrows in Fig.~\ref{fig:projections_main}, the corresponding prototypes at step $t$ belong to a higher-dimensional simplex and are misaligned with their counterparts at step $k$.
The resulting angular misalignment (derived in Appendix~\ref{sec:app_theo_misalignment_short}) is the same for all previously existing classes and is given by:
\begin{equation}\label{eq:misalignement}
\theta_{c}^{k\to t}
=\arccos\left(\sqrt{\frac{C^k-1}{C^k}\cdot \frac{C^t}{C^t-1}}\right),
\end{equation}
% where $C^k$ and $C^t$ represent the number of classes at steps $k$ and $t$, respectively, with $C^k \leq C^t$.
% From Eq.~\ref{eq:misalignement}, it follows that:
% (a) when $C^t = C^k$, $\theta_c^{k\to t} = 0$, the corresponding class prototypes are therefore aligned;
% (b) when $C^k < C^t$, $\theta_c^{k\to t} > 0$, class prototypes across steps are not aligned.
where $C^k$ and $C^t$ represent the number of classes at steps $k$ and $t$, respectively, with $C^k \leq C^t$.
Eq.~\ref{eq:misalignement} shows that $\theta_c^{k\to t} = 0$ when $C^t = C^k$, so that the corresponding class prototypes are aligned.
In contrast, $\theta_c^{k\to t} > 0$ when $C^t > C^k$, so that adding new classes always introduces a misalignment.

To align the prototypes when new classes are added, we compensate for the angular misalignment in Eq.~\ref{eq:misalignement} using a deterministic linear transformation, defined as the matrix $\mathbf{P}_{t,k} = \big[ \mathbf{P}_{k,k} \,|\, \mathbf{0} \big]$, where $\mathbf{0}$ is the $C^k \times (C^t - C^k)$ zero matrix.

The block $\mathbf{P}_{k,k}$, i.e., the centering projection in Eq.~\ref{eq:projection_prot}, acts on the first $C^k$ coordinates and maps the prototypes at step $t$ of the classes already present at step $k$, $\{\mathbf{v}^t_c\}_{c=1}^{C^k}$, onto the subspace spanned by their counterparts $\{\mathbf{v}^k_c\}_{c=1}^{C^k}$ at step $k$, thereby removing the angular misalignment introduced by the new classes.
The zero block $\mathbf{0}$ discards the components along the $C^t-C^k$ coordinates introduced at step $t$, which have no counterpart in the step-$k$ representation.
The following two propositions formalize this behavior.
% To align the prototypes when new classes are added, we compensate for the angular misalignment in Eq.~\ref{eq:misalignement} using a deterministic linear transformation, defined as the $C^k \times C^t$ matrix $\mathbf{P}_{t,k} = \big[  \mathbf{P}_{k,k} \,|\, \mathbf{0} \big]$, where $\mathbf{0}$ is the $C^k \times (C^t - C^k)$ zero matrix.
% % To obtain alignment of prototypes when new classes are added, we compensate for the angular misalignment in Eq.~\ref{eq:misalignement} using a deterministic linear transformation. 
% % \textcolor{red}{This transformation is defined as \( \mathbf{P}_{t,k} = \big[ \mathbf{P}_{k,k} | \mathbf{0} \big] \), where $\mathbf{0}$ is a zero matrix of dimension \(C^k\times (C^t - C^k)\).}
% It consists of two components:
% 
% (a) the matrix $\mathbf{V}^k$ acts on the first $C^k$ coordinates, mapping the prototypes at step $t$ of the classes already present at step $k$, $\{\mathbf{v}^t_c\}_{c=1}^{C^k}$, onto the subspace spanned by their step-$k$ counterparts, $\{\mathbf{v}^k_c\}_{c=1}^{C^k}$.
% }
% It removes the angular misalignment introduced by the new classes at step $t$, thereby aligning these prototypes across updates; 
% (b) the matrix $\mathbf{0}$ discards the components associated with the additional $C^t-C^k$ coordinates introduced at step $t$, which have no counterpart in the step-$k$ representation.
% The following two propositions formalize the properties of $\mathbf{P}_{t,k}$, where comparisons across update steps are performed after embedding the lower-dimensional prototypes into the higher-dimensional space via zero-padding.

\begin{figure*}[t]
  \centering
\includegraphics[width=1.0\linewidth]{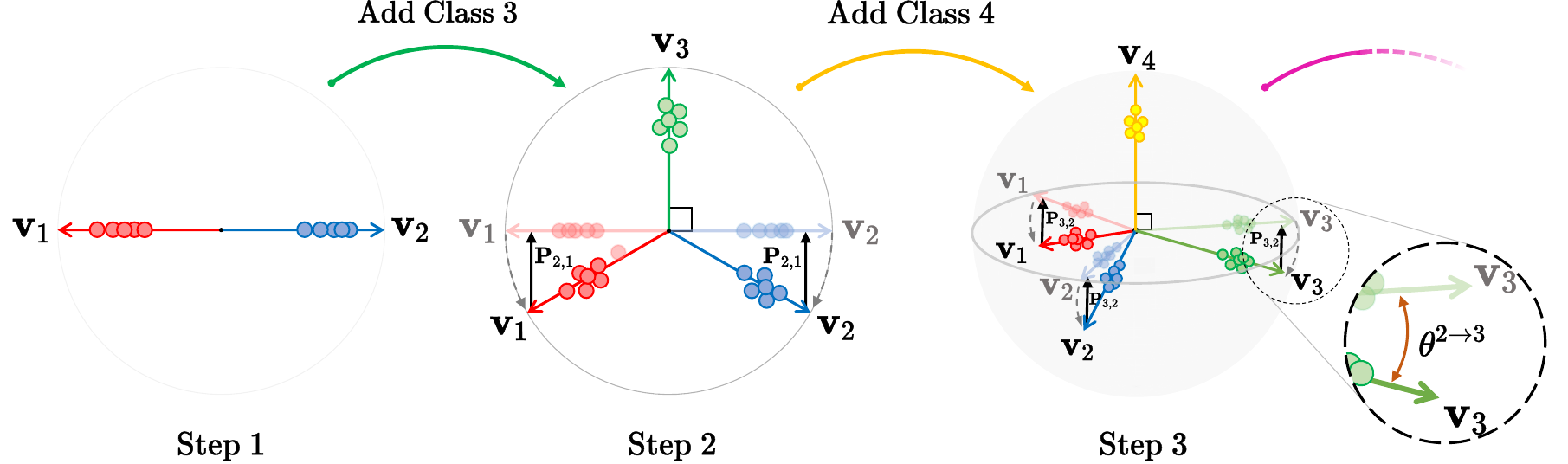}
% \vspace{-0.6cm}
  \caption{Deterministic linear transformation across model updates of the class prototypes in Eq.~\ref{eq:class_prototypes} to which the hyperspherical simplex representations of Eq.~\ref{eq:softmax_outputs_as_features} and Eq.~\ref{eq:logits_as_features} align.
From left to right: Step 1, a 1-simplex with two classes (red and blue); Step 2, a 2-simplex with a new class (green); Step 3, a 3-simplex with a new class (yellow).
Dashed arrows at steps 2 and 3 show how the class prototypes rotate to form the new simplex. The black dashed circle on the right shows the angular misalignment $\theta_c^{k\to t}$ of the class-3 prototype between steps 2 and 3.
Solid black arrows indicate the linear transformation $\mathbf{P}_{t,k}$, which aligns the updated prototypes of the previously existing classes with their counterparts at the earlier step.
  }
  \label{fig:projections_main}
\end{figure*}

\begin{proposition}[Proof in Appendix~\ref{sec:orth_new_class}]
\label{theo:orth_new_class}
When updating the model from step $k$ to step $t$, if the number of classes increases from $C^k$ to $C^t$, the prototypes associated with the newly introduced classes at step $t$ are orthogonal to the subspace spanned by the class prototypes at step $k$.
\end{proposition}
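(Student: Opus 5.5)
The plan is a direct computation in coordinates. As in the surrounding discussion, the step-$k$ prototypes are embedded into $\mathbb{R}^{C^t}$ by zero-padding, and the new-class prototypes are taken from Eq.~\ref{eq:class_prototypes} at step $t$. Orthogonality to a subspace reduces, by linearity of the inner product, to orthogonality to a spanning set. So it suffices to show $\langle \mathbf{v}^t_n, \mathbf{v}^k_c\rangle = 0$ for every new class index $C^k < n \le C^t$ and every old class index $1 \le c \le C^k$.

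First I write both vectors explicitly. The padded old prototype is
\begin{equation*}
\mathbf{v}^k_c = \mathbf{e}_c - \frac{1}{C^k}\sum_{j=1}^{C^k}\mathbf{e}_j ,
\end{equation*}
which is supported on the first $C^k$ coordinates and has coordinate sum $1 - C^k/C^k = 0$. The new prototype is
\begin{equation*}
\mathbf{v}^t_n = \mathbf{e}_n - \mathbf{o}^t, \qquad \mathbf{o}^t = \frac{1}{C^t}\mathbf{1}_{C^t}.
\end{equation*}

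Next I split the inner product into two terms.
\begin{itemize}
\item The term $\langle \mathbf{e}_n, \mathbf{v}^k_c\rangle$ vanishes because $n > C^k$ indexes a padded (zero) coordinate of $\mathbf{v}^k_c$.
\item The term $\langle \mathbf{o}^t, \mathbf{v}^k_c\rangle$ equals $\frac{1}{C^t}$ times the coordinate sum of $\mathbf{v}^k_c$, which is zero.
\end{itemize}
Hence $\langle \mathbf{v}^t_n, \mathbf{v}^k_c\rangle = 0$ for all such pairs. Extending linearly gives orthogonality to $\mathrm{span}\{\mathbf{v}^k_c\}_{c=1}^{C^k}$.

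It is worth making this subspace explicit: it is exactly the set of vectors supported on the first $C^k$ coordinates whose coordinates sum to zero. This description is the same fact in a cleaner form. It also foreshadows why $\mathbf{P}_{t,k}$ discards the new coordinates and recenters the old ones.

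There is no real obstacle here. The only point needing care is the convention, namely that the comparison is carried out after zero-padding step-$k$ vectors into $\mathbb{R}^{C^t}$. Without this embedding the statement is not even well-typed. I would also note that the result does not depend on the common shift by $\mathbf{o}^t$. The orthogonality comes from the zero coordinate sum of the old prototypes, so it holds for any vector of the form $\mathbf{e}_n - \alpha\mathbf{1}$.
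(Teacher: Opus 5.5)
Your proof is correct and follows the paper's argument: after the explicit normalized computation for a single added class, the paper handles the general case exactly as you do, noting that each new-class prototype has its first $C^k$ coordinates equal to $-1/C^t$. Its inner product with a zero-padded old prototype is therefore a multiple of that prototype's zero coordinate sum, and orthogonality then extends to the span.
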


% \begin{proposition}[Proof in Appendix~\ref{sec:app_theo_projections}]\label{theo:projections_align}
% Let \( \mathbf{v}^{t}_c\) be the prototype vector for class \( c \) defined in Eq.~\ref{eq:class_prototypes} at step $t$, then \( \mathbf{u}^t_c = \mathbf{P}_{t,k} \mathbf{v}^{t}_c \) is aligned with the prototype $\mathbf{v}^k_c$ at step $k$.
% \end{proposition}

\begin{proposition}[Proof in Appendix~\ref{sec:app_theo_projections}]\label{theo:projections_align}
For every class $c$ already present at step $k$, i.e., $1 \leq c \leq C^k$, the transformed prototype $\mathbf{u}^t_c = \mathbf{P}_{t,k}\mathbf{v}^t_c$ is aligned with its counterpart $\mathbf{v}^k_c$ at step $k$.
\end{proposition}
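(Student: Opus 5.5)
The plan is a direct computation of $\mathbf{P}_{t,k}\mathbf{v}^t_c$ in coordinates, followed by showing that the result is a positive multiple of $\mathbf{v}^k_c$. First fix the prototypes explicitly. By Eq.~\ref{eq:class_prototypes}, $\mathbf{v}^t_c = \mathbf{e}_c - \frac{1}{C^t}\mathbf{1}_{C^t}$ in $\mathbb{R}^{C^t}$. For $1\le c\le C^k$, split this vector into its first $C^k$ coordinates and its last $C^t-C^k$ coordinates:
\[
\mathbf{v}^t_c = \Big(\mathbf{e}^{(k)}_c - \tfrac{1}{C^t}\mathbf{1}_{C^k},\; -\tfrac{1}{C^t}\mathbf{1}_{C^t-C^k}\Big),
\]
where $\mathbf{e}^{(k)}_c$ is the $c$-th canonical vector of $\mathbb{R}^{C^k}$. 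Since $\mathbf{P}_{t,k}=[\mathbf{P}_{k,k}\,|\,\mathbf{0}]$, the zero block removes the second part entirely. Hence
\[
\mathbf{u}^t_c = \mathbf{P}_{k,k}\big(\mathbf{e}^{(k)}_c - \tfrac{1}{C^t}\mathbf{1}_{C^k}\big).
\]

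The key step uses two facts about the centering matrix $\mathbf{P}_{k,k}=\mathbf{I}_{C^k}-\frac{1}{C^k}\mathbf{J}_{C^k}$ from Eq.~\ref{eq:projection_prot}. Because $\mathbf{J}_{C^k}\mathbf{1}_{C^k}=C^k\mathbf{1}_{C^k}$, it annihilates constants: $\mathbf{P}_{k,k}\mathbf{1}_{C^k}=\mathbf{0}$. Its action on a canonical vector is $\mathbf{P}_{k,k}\mathbf{e}^{(k)}_c=\mathbf{e}^{(k)}_c-\frac{1}{C^k}\mathbf{1}_{C^k}=\mathbf{v}^k_c$. Linearity then gives $\mathbf{u}^t_c=\mathbf{v}^k_c$ exactly, not merely up to scale, so the two vectors are aligned: the angle between them is zero and their cosine similarity is $1$. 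One check is needed first, namely $\mathbf{v}^k_c\neq\mathbf{0}$, which holds since its $c$-th entry is $1-1/C^k>0$ whenever $C^k\ge 2$.

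Finally, reconcile this with the zero-padding convention stated in Sec.~\ref{sec:projection}. $\mathbf{u}^t_c$ lives in $\mathbb{R}^{C^k}$, so to compare it with vectors in $\mathbb{R}^{C^t}$ I would embed both $\mathbf{u}^t_c$ and $\mathbf{v}^k_c$ by appending zeros. Zero-padding is an isometric linear embedding, so equality, and hence alignment, is preserved.

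The expected obstacle is only this bookkeeping of dimensions and conventions. In particular, the text calls $\mathbf{P}_{t,k}$ a matrix with $C^k$ rows even though it "maps onto the subspace spanned by" the step-$k$ prototypes, and I would state explicitly which space $\mathbf{u}^t_c$ is compared in. As an optional consistency check, Proposition~\ref{theo:orth_new_class} ensures the discarded components carry no information about the old subspace, and the computation shows the removed misalignment matches Eq.~\ref{eq:misalignement}.
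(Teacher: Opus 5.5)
Your proof is correct and follows the same route as the paper's: both apply $\mathbf{P}_{k,k}$ to the first $C^k$ coordinates of $\mathbf{v}^t_c$ and obtain the exact identity $\mathbf{u}^t_c=\mathbf{v}^k_c$. The paper does this entrywise for a single added class and only asserts the extension, whereas your use of $\mathbf{P}_{k,k}\mathbf{1}_{C^k}=\mathbf{0}$ covers any $C^t\ge C^k$ in one line, and your $C^k\ge 2$ nondegeneracy check is a detail the paper omits.
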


Together, Prop.~\ref{theo:orth_new_class} and Prop.~\ref{theo:projections_align} explain the origin of the misalignment and how $\mathbf{P}_{t,k}$ removes it.
Prop.~\ref{theo:orth_new_class} shows that the prototypes associated with newly introduced classes at step $t$ lie in directions orthogonal to the subspace spanned by the prototypes at step $k$.
Since the step-$t$ prototypes remain centered at the origin, the updated prototypes of the existing classes acquire components along these orthogonal directions, resulting in the angular misalignment $\theta_c^{k\to t}$ defined in Eq.~\ref{eq:misalignement}.
Prop.~\ref{theo:projections_align} then shows that $\mathbf{P}_{t,k}$ removes these components and maps each updated old-class prototype onto the direction of its corresponding prototype at step $k$.
Fig.~\ref{fig:projections_main} illustrates this process: dashed arrows indicate the prototype change induced by class expansion, whereas solid arrows show how $\mathbf{P}_{t,k}$ restores the alignment.

% 
% Prop.~\ref{theo:orth_new_class} and Prop.~\ref{theo:projections_align} characterize how the deterministic transformation $\mathbf{P}_{t,k}$ restores prototype alignment under class expansion.
% Prop.~\ref{theo:orth_new_class} shows that the prototypes associated with newly introduced classes at step $t$ lie in directions orthogonal to the subspace spanned by the prototypes at step $k$. Consequently, class expansion introduces components orthogonal to the previous prototype subspace in the updated prototypes of the existing classes, resulting in the angular misalignment $\theta_c^{k\to t}$ defined in Eq.~\ref{eq:misalignement}.
% Prop.~\ref{theo:projections_align} then shows that $\mathbf{P}_{t,k}$ removes these components and maps each updated old-class prototype onto the direction of its corresponding prototype at step $k$.
% Fig.~\ref{fig:projections_main} illustrates this process: dashed arrows indicate the prototype rotation induced by class expansion, whereas solid arrows show the alignment obtained through $\mathbf{P}_{t,k}$.
% }

Since $\mathbf{P}_{t,k}$ is determined entirely by the simplex geometry and the number of classes at the two update steps, it can be computed without learning additional parameters or accessing any data.
It therefore provides a deterministic mechanism for restoring prototype alignment of the hyperspherical simplex representations in Eq.~\ref{eq:softmax_outputs_as_features} and Eq.~\ref{eq:logits_as_features} across model updates under class expansion.

We refer to the approach based on the hyperspherical simplex representation derived from softmax outputs in Eq.~\ref{eq:softmax_outputs_as_features} as Probability Simplex Projection (PSP), and to the approach based on the logit representation in Eq.~\ref{eq:logits_as_features} as Logits Simplex Projection (LSP).
When $C^t > C^k$, the transformation $\mathbf{P}_{t,k}$ is applied to the softmax outputs or logits of the updated model before $\ell_2$-normalization, so that the resulting features lie in the representation space of step $k$; when $C^t = C^k$, the representations in Eq.~\ref{eq:softmax_outputs_as_features} and Eq.~\ref{eq:logits_as_features} are used directly.
Appendix~\ref{sec:pseudo_code} provides pseudocode for gallery indexing with PSP or LSP and for comparing query and gallery representations across model updates.

% 
% Since $\mathbf{P}_{t,k}$ is determined entirely by the simplex geometry and the number of classes at the two update steps, it requires neither learnable parameters nor forward passes through either model.
% It therefore provides a deterministic, training-free mechanism for restoring prototype alignment of the hyperspherical simplex representations in Eq.~\ref{eq:softmax_outputs_as_features} and Eq.~\ref{eq:logits_as_features} across model updates under class expansion.
% }

% In practice, $\mathbf{P}_{t,k}$ is applied to the softmax outputs or logits of the updated model before $\ell_2$-normalization, so that the resulting features lie in the representation space of step $k$.

% 
% We refer to the approach based on the hyperspherical simplex representation derived from softmax outputs in Eq.~\ref{eq:softmax_outputs_as_features} as Probability Simplex Projection (PSP), and to the approach based on the logit representation in Eq.~\ref{eq:logits_as_features} as Logits Simplex Projection (LSP).
% Pseudocode for gallery indexing with PSP/LSP and for comparing query--gallery representations across model updates is provided in Appendix~\ref{sec:pseudo_code}.
% }

\subsection{Theoretical Analysis of the Backward-Compatibility Definition}\label{sec:theo_section}

In the following, we establish geometric conditions under which the backward-compatibility inequalities hold in expectation, and then analyze whether these conditions are satisfied by the proposed hyperspherical simplex representations.

\begin{theorem}[\textit{\textbf{Compatibility Theorem}}; Proof in 
Appendix~\ref{sec:app_theo_compatibility}.]
\label{theo:compatibility}
Consider hyperspherical representations obtained at update steps $k$ and $t$, with $t > k$.
Assume that (i) the features of each class $c$ are drawn independently from a von Mises--Fisher distribution {\rm vMF($\boldsymbol{\mu}_c,\kappa_c$)} whose mean direction $\boldsymbol{\mu}_c$ is the corresponding class prototype, and (ii) the concentration $\kappa_c$ of each class distribution increases from step $k$ to step $t$.
Then the compatibility inequalities of Eq.~\ref{eq:first} and Eq.~\ref{eq:second} are satisfied in expectation if (a) corresponding class prototypes are aligned across updates and (b) the angle between prototypes of different classes is greater than $\pi/2$.
\end{theorem}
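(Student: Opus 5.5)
The approach is to compute every expected cosine distance in closed form. The key tool is that the mean of a von Mises--Fisher distribution on $\mathbb{S}^{d-1}$ is
\[
\mathbb{E}[\mathbf{h}] = A_d(\kappa)\,\boldsymbol{\mu}, \qquad A_d(\kappa)=\frac{I_{d/2}(\kappa)}{I_{d/2-1}(\kappa)} \in [0,1).
\]
Here $A_d$ is the mean resultant length, which is strictly increasing in $\kappa$. Since all features are unit vectors, $d(\mathbf{a},\mathbf{b}) = 1-\mathbf{a}^\top\mathbf{b}$.

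Assumption (i) says the two features in a pair are drawn independently. Hence the expected inner product factorizes into a product of means. For features at steps $t$ and $k$ of classes $c$ and $c'$,
\[
\mathbb{E}\big[d(\mathbf{h}^t_i,\mathbf{h}^k_j)\big] = 1 - A_d(\kappa^t_{c})A_d(\kappa^k_{c'})\,(\boldsymbol{\mu}^t_c)^\top\boldsymbol{\mu}^k_{c'}.
\]
For two features at step $k$,
\[
\mathbb{E}\big[d(\mathbf{h}^k_i,\mathbf{h}^k_j)\big] = 1 - A_d(\kappa^k_{c})A_d(\kappa^k_{c'})\,(\boldsymbol{\mu}^k_c)^\top\boldsymbol{\mu}^k_{c'}.
\]
If the dimensions differ across steps, I would work in the common space after the zero-padding and $\mathbf{P}_{t,k}$ transformation of Sec.~\ref{sec:projection}. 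There the relevant prototypes are unit vectors, and alignment means $\boldsymbol{\mu}^t_c=\boldsymbol{\mu}^k_c$.

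For the same-class inequality (Eq.~\ref{eq:first}), set $c=c'$. Condition (a) gives $(\boldsymbol{\mu}^t_c)^\top\boldsymbol{\mu}^k_c = 1 = (\boldsymbol{\mu}^k_c)^\top\boldsymbol{\mu}^k_c$. The inequality then reduces to
\[
A_d(\kappa^t_c)A_d(\kappa^k_c) \ge A_d(\kappa^k_c)^2,
\]
which follows from assumption (ii) and the monotonicity of $A_d$.

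For the different-class inequality (Eq.~\ref{eq:second}), take $c\neq c'$. Condition (a) gives $(\boldsymbol{\mu}^t_c)^\top\boldsymbol{\mu}^k_{c'} = (\boldsymbol{\mu}^k_c)^\top\boldsymbol{\mu}^k_{c'} =: \gamma$, and condition (b) gives $\gamma<0$. The inequality becomes
\[
-\gamma\,A_d(\kappa^k_{c'})\big(A_d(\kappa^t_c) - A_d(\kappa^k_c)\big) \ge 0,
\]
which again holds by monotonicity. This sign argument is exactly why condition (b) is needed. If $\gamma>0$, the inequality would reverse, and this accounts for the failure of plain $\ell_2$-normalized softmax outputs discussed in Sec.~\ref{sec:psp}.

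The main obstacle is establishing rigorously the two facts about $A_d$: the closed form of the vMF mean, and the strict monotonicity of $A_d$ in $\kappa$. For the mean, I would differentiate the log-normalizer $\log C_d(\kappa)$ with respect to $\kappa$. For monotonicity, I would use $A_d'(\kappa) = 1 - A_d(\kappa)^2 - \frac{d-1}{\kappa}A_d(\kappa)$ and identify this derivative with the variance of $\boldsymbol{\mu}^\top\mathbf{h}$, which is positive. Alternatively, I would cite the standard Bessel-ratio monotonicity results (e.g., Amos-type bounds). The remaining steps are careful bookkeeping: the independence of $\mathbf{h}^t_i$ and $\mathbf{h}^k_j$, which justifies the factorization, and the fact that the inequalities are claimed only in expectation over pairs of the stated classes.
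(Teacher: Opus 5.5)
Your proposal is correct and follows the paper's proof essentially step by step. Like the paper, you factor the expected inner product of independent vMF vectors via $\mathbb{E}[\mathbf{h}] = A_d(\kappa)\boldsymbol{\mu}$ (the paper's Lemma), then reduce both inequalities to the monotonicity of the Bessel ratio, using alignment for the same-class case and the negative inter-class inner product for the different-class case. The only difference is that you sketch self-contained proofs of the vMF mean formula and of the strict monotonicity of $A_d$ (via the variance of $\boldsymbol{\mu}^\top\mathbf{h}$), whereas the paper simply cites these facts.
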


The vMF distribution in assumption (i) is employed as a tractable modeling assumption rather than as an exact characterization of the learned distributions: it yields closed-form expressions for the expected cosine distances in Eq.~\ref{eq:first} and Eq.~\ref{eq:second} in terms of the prototype directions and concentration parameters (Lemma~\ref{theo:lemma} in Appendix~\ref{sec:app_theo_compatibility}), which enables the analytical characterization in Theorem~\ref{theo:compatibility}.
It provides a standard probabilistic model for directional data on the unit hypersphere \citep{Banerjee2005ClusteringOT}, and hyperspherical and angular representations have been extensively investigated in deep classification and face recognition \citep{liu2016large,Liu2017CVPR,DBLP:conf/cvpr/DengGXZ19}.
More broadly, vMF-based formulations have been employed to define hyperspherical latent-variable models \citep{davidson2018hyperspherical}, learn discriminative hyperspherical representations \citep{hasnat2017mises,scott2021mises}, model normalized contrastive and hyperspherical features \citep{du2024probabilistic,bai2024hypo}, and characterize class-conditional latent structure in theoretical analyses of cross-entropy learning \citep{reizinger2025crossentropy}.
Remark~\ref{rem:role_vmf} further clarifies the specific role of the vMF assumption and its first-moment characterization in the proof of Theorem~\ref{theo:compatibility}.

Assumption (ii), that class representations become increasingly concentrated around their prototypes across model updates, is supported by several complementary observations.
Scaling-law studies show that increasing model capacity or training data generally improves predictive, transfer, and retrieval performance \citep{fang2024scaling,prato2021scalingfewshot,zhai2022scaling}, motivating the model-update scenarios considered here, in which updated models are more capable or trained on larger datasets.
Neural Collapse analyses provide a more direct geometric connection: within-class variability progressively decreases during training as features concentrate around their class means \citep{papyan2020prevalence,yaras2022neural, pmlr-v235-jiang24i}, and stronger collapse with reduced relative within-class variability is observed as network width and depth increase \citep{yaras2022neural,sukenik2026neural}.
Under the vMF modeling assumption, such a reduction in angular dispersion around the class prototype corresponds to an increase in the concentration parameter $\kappa$.

These observations primarily concern penultimate-layer features, but they can be related to the output representations used by PSP and LSP.
For a linear classifier, variations in the backbone representations induce bounded variations in the corresponding logits \citep{gouk2021regularisation}, and the progressive alignment between class means and classifier weights observed during training \citep{papyan2020prevalence} relates the geometry of the penultimate-layer representations to that of the logits.
% Moreover, \citet{nair2026softmax} shows that the softmax function is $1/2$-Lipschitz, which bounds the within-class variability of the softmax outputs by that of the logits.
In Appendix~\ref{sec:concentration}, we provide empirical evidence on CIFAR-10 that both logits and softmax outputs become increasingly concentrated around their class prototypes across model updates.

The aligned regular-simplex geometry centered at the origin, on which the proposed representations are built, satisfies the geometric conditions of Theorem~\ref{theo:compatibility}. The following corollary formalizes this result for any hyperspherical representation exhibiting this geometry.

% As anticipated in Sec.~\ref{sec:prelims_and_motivation}, the aligned regular-simplex geometry on which the proposed representations are built is a natural candidate for satisfying the conditions of Theorem~\ref{theo:compatibility}. The following corollary formalizes this for any hyperspherical representation with this specific geometry.

\begin{corollary}[\textit{\textbf{Compatibility of Aligned Regular-Simplex Hyperspherical Representations}}; Proof in 
Appendix~\ref{sec:app_theo_compatibility}.]
\label{cor:regulsimplex_compatibility}
Under the assumptions of Theorem~\ref{theo:compatibility}, hyperspherical representations whose class prototypes form a regular simplex centered at the origin and remain aligned across updates satisfy the compatibility inequalities of Eq.~\ref{eq:first} and Eq.~\ref{eq:second} in expectation.
\end{corollary}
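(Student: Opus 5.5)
The plan is to reduce the corollary to Theorem~\ref{theo:compatibility}. Its assumptions (i) and (ii) are inherited verbatim: features of each class are vMF-distributed around their prototype, with concentration increasing from step $k$ to step $t$. So it suffices to verify the two geometric conditions (a) and (b) for a regular simplex centered at the origin whose prototypes remain aligned across updates.

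Condition (a) is part of the hypothesis, since the prototypes are assumed aligned. If we want to tie this to the concrete PSP and LSP construction, we argue in two cases. When $C^t = C^k$, the prototypes $\mathbf{v}^k_c=\mathbf{e}_c-\mathbf{o}^k$ of Eq.~\ref{eq:class_prototypes} depend only on the class set, so they coincide. When $C^t>C^k$, Prop.~\ref{theo:projections_align} shows that $\mathbf{P}_{t,k}\mathbf{v}^t_c$ is aligned with $\mathbf{v}^k_c$. This second case is only a remark, however; the corollary itself just needs the alignment assumption.

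Condition (b) requires computing the pairwise angle of a regular $(C-1)$-simplex centered at the origin with $C\ge 2$ vertices. Let the unit prototypes be $\boldsymbol{\mu}_1,\dots,\boldsymbol{\mu}_C$, with all pairwise inner products equal to a common value $\gamma$. Centering gives $\sum_c \boldsymbol{\mu}_c=\mathbf{0}$, so
\begin{equation*}
0=\Big\|\sum_c \boldsymbol{\mu}_c\Big\|^2 = C + C(C-1)\gamma ,
\end{equation*}
hence $\gamma=-1/(C-1)<0$. The angle between distinct prototypes is therefore $\arccos(-1/(C-1))>\pi/2$. As a check, for $\mathbf{v}^k_c$ directly one has $\langle \mathbf{v}_c,\mathbf{v}_{c'}\rangle=-1/C$ and $\|\mathbf{v}_c\|^2=(C-1)/C$, which gives the same cosine.

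The step needing the most care is the cross-step comparison in the different-class inequality (Eq.~\ref{eq:second}). There the relevant quantity is the angle between the step-$t$ prototype of class $i$ and the step-$k$ prototype of class $j$, not the angle within a single simplex. Under alignment, the step-$t$ prototype of class $i$ coincides with $\boldsymbol{\mu}^k_i$ after the transformation into the step-$k$ space, so this angle reduces to the step-$k$ simplex angle and remains obtuse. For the same-class inequality (Eq.~\ref{eq:first}), alignment makes the cross-step prototype angle zero. With both conditions established, Theorem~\ref{theo:compatibility} yields both compatibility inequalities in expectation.
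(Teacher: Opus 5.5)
Your proposal is correct and follows essentially the same route as the paper. Both inherit assumptions (i)--(ii) from Theorem~\ref{theo:compatibility}, take condition (a) directly from the alignment hypothesis, and obtain condition (b) from the within-simplex inner product $-1/(C^k-1)<0$, which alignment transfers to the cross-step pair via $(\boldsymbol{\mu}_i^t)^\top\boldsymbol{\mu}_j^k=(\boldsymbol{\mu}_i^k)^\top\boldsymbol{\mu}_j^k$. The only difference is that you derive this value from $\|\sum_c\boldsymbol{\mu}_c\|^2=0$, whereas the paper simply states it.
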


This follows from the geometry of a regular simplex centered at the origin, where the inner product between any two distinct normalized class prototypes is negative, implying a pairwise angle greater than $\pi/2$~\citep{pernici2021regular}. Hence, once corresponding class prototypes are aligned across updates, both geometric conditions required by Theorem~\ref{theo:compatibility} hold.
Remark~\ref{rem:simplex_etf} in Appendix~\ref{sec:app_theo_compatibility} further clarifies the relation between this geometry and the Simplex ETF configuration arising in Neural Collapse.

% The following corollary demonstrates that the geometric conditions established in Theorem~\ref{theo:compatibility} are satisfied by hyperspherical representations whose class prototypes form a regular simplex centerd at the origin and remain aligned across updates.

% \begin{corollary}[\textit{\textbf{Compatibility of Aligned Regular-Simplex Hyperspherical Representations}}; Proof in 
% Appendix~\ref{sec:app_theo_compatibility}.]
% \label{cor:regulsimplex_compatibility}
% Under the assumptions of Theorem~\ref{theo:compatibility}, hyperspherical representations whose class prototypes form a regular simplex centered at the origin and remain aligned across updates satisfy the compatibility inequalities of Eq.~\ref{eq:first} and Eq.~\ref{eq:second} in expectation.
% \end{corollary}

% This follows from the regular-simplex geometry: the inner product between any two distinct normalized class prototypes is negative, implying a pairwise angle greater than $\pi/2$. Hence, once corresponding class prototypes are aligned across updates, both geometric conditions required by Theorem~\ref{theo:compatibility} are satisfied. Remark~\ref{rem:simplex_etf} further clarifies the relation between this condition and the Simplex ETF geometry arising in Neural Collapse.

% The proposed PSP and LSP representations defined in Eq.~\ref{eq:softmax_outputs_as_features} and Eq.~\ref{eq:logits_as_features}, respectively, instantiate this aligned regular-simplex geometry.

The proposed PSP and LSP representations satisfy the geometric hypotheses of Corollary~\ref{cor:regulsimplex_compatibility}.
% We now verify that PSP and LSP satisfy the hypotheses of Corollary~\ref{cor:regulsimplex_compatibility}.
Their class prototypes form a regular simplex centered at the origin: for PSP, by construction (Sec.~\ref{sec:psp}); for LSP, exactly under the logit-UFM assumptions of Prop.~\ref{theo:softmax_affects_logits}, with the emergence of this geometry in neural networks assessed empirically through the Neural Collapse analysis of Sec.~\ref{sec:logits_theory}.
Corresponding prototypes remain aligned across updates when the class set is unchanged, while under class expansion the transformation $\mathbf{P}_{t,k}$ restores their alignment (Prop.~\ref{theo:projections_align}).
Therefore, under the assumptions of Theorem~\ref{theo:compatibility}, PSP and LSP satisfy the compatibility inequalities of Eq.~\ref{eq:first} and Eq.~\ref{eq:second} in expectation.
In Appendix~\ref{app:psp_ablation}, we empirically isolate the two operations underlying PSP in a class-expansion update, showing that positive average margins for both compatibility inequalities are obtained only when the centering projection and the angular correction are combined.

The theoretical result is not specific to PSP and LSP: Corollary~\ref{cor:regulsimplex_compatibility} also applies, for instance, to representations learned with pre-allocated $d$-simplex fixed classifiers \citep{biondi2023cores,biondi2024stationary,biondiallyouneed}, whose class prototypes are fixed a priori and thus remain aligned across updates. From a practical perspective, however, PSP and LSP do not require a shared pre-allocated fixed classifier across model updates.

\subsection{Scalability via Top-\texorpdfstring{$k$}{k} Sparsification}
\label{sec:dimensionality_reduction_topk}

Hyperspherical simplex representations derived from softmax outputs (Eq.~\ref{eq:softmax_outputs_as_features} for PSP) or logits (Eq.~\ref{eq:logits_as_features} for LSP) have dimensionality equal to the number of classes.
Although training on a larger number of classes can improve the quality and transferability of learned representations \citep{zhu2022learning, DBLP:journals/tmlr/HongCFCL24}, this class-dependent dimensionality poses a limitation in large-scale retrieval scenarios: the memory required to store the gallery scales linearly with both the number of gallery samples and the number of classes, potentially becoming prohibitive at scale.
To address this, we construct sparse representations by retaining only the highest-ranked components of each feature vector, a choice motivated by the top-$k$ calibration of the softmax cross-entropy loss and the rank-preserving property of the softmax function \citep{lapin2016loss, DBLP:conf/icml/YangK20}.

Following the theoretical analysis of \citet{lapin2016loss}, the Bayes-optimal classifier under the softmax cross-entropy loss satisfies:
\begin{equation}\label{eq:lapin}
f_y^*(x) =
\begin{cases}
\log p_y(x) + \log \alpha & \text{if } p_y(x) > 0, \\[6pt]
-\infty & \text{otherwise},
\end{cases}
\end{equation}

where $\alpha > 0$ is a class-independent constant and $p_y(x) = P(Y=y \mid X=x)$ denotes the posterior probability of label $y$ given input $x$.
Since the logarithm is strictly increasing, the optimal scores are a monotonic reparameterization of the class posterior probabilities and preserve their ordering, which underlies the top-$k$ calibration of the softmax cross-entropy loss \citep{lapin2016loss}.
Moreover, the softmax function is rank-preserving \citep{DBLP:conf/icml/YangK20}, i.e., $z_i > z_j \iff \sigma(\mathbf{z})_i > \sigma(\mathbf{z})_j$, so that logits and softmax outputs induce the same class ranking.
This ranking is also preserved by the operations defining PSP and LSP: the centering projection $\mathbf{P}_{k,k}$ subtracts the same constant from all entries of a softmax output, and $\ell_2$-normalization rescales all entries by a positive factor. Consequently, the largest entries of PSP and LSP correspond to the highest-ranked classes.

Accordingly, we adopt a top-$k$ sparsification strategy for both PSP and LSP, applied after $\ell_2$-normalization: we retain the $k$ largest entries of the normalized feature vector $\mathbf{h}^k$ and set the remaining entries to zero.
% The resulting sparse representations can be directly processed by approximate nearest-neighbor search libraries designed for sparse high-dimensional data (e.g., PySparNN), supporting efficient search over large galleries.
The resulting sparse representations can be directly processed by approximate nearest-neighbor search libraries designed for sparse high-dimensional data (e.g., PySparNN \citep{pysparnn} and Seismic \citep{bruch2024efficient, bruch2024pairing}), supporting efficient search over large galleries \citep{simhadri2025results}.
In Sec.~\ref{sec:when_experimental_results}, we empirically evaluate top-$k$ sparsification for both PSP and LSP, showing that high levels of sparsity can be achieved with negligible degradation in retrieval performance while preserving backward-compatibility.

\section{Experimental Results}\label{sec:when_experimental_results}

Building on the theoretical analysis, in this section we examine whether the backward-compatibility properties of PSP and LSP are reflected empirically across various model-update scenarios, under established training protocols and retrieval-based evaluation procedures.

\subsection{Evaluation Setting}\label{sec:evaluation_setting}

Following prior work on model compatibility \citep{shen2020towards, ijcai2022uniBCT, Meng_2021_ICCV, jaecklefastfill}, we adopt image retrieval as the evaluation task, where the goal is to retrieve images from a gallery $\mathcal{G} = \{(\mathbf{x}_i, y_i)\}_{i=1}^{N_g}$ using queries from a query set $\mathcal{Q} = \{(\mathbf{x}_i, y_i)\}_{i=1}^{N_q}$.
Since verifying the inequalities of Def.~\ref{def:compatibility} is impractical at scale (Sec.~\ref{sec:prelims_and_motivation}), we follow standard practice and adopt the Empirical Compatibility Criterion introduced by~\citet{shen2020towards}:
\begin{equation}\label{eq:multistepecc}
M \big( \phi^t, \phi^k; \mathcal{Q}, \mathcal{G} \big) >
M \big( \phi^k, \phi^k; \mathcal{Q}, \mathcal{G} \big),
\qquad t > k,
\end{equation}
where $\phi^k$ denotes the model at step $k$ and $M$ denotes a retrieval performance metric. 
The term $M\big(\phi^t,\phi^k;\mathcal{Q},\mathcal{G}\big)$ denotes the \textit{cross-model accuracy}, obtained when query representations are extracted using the updated model $\phi^t$ and gallery representations using the previous model $\phi^k$.
Conversely, $M\big(\phi^k,\phi^k;\mathcal{Q},\mathcal{G}\big)$ denotes the \textit{same-model accuracy}, obtained when both query and gallery representations are extracted using the same model $\phi^k$.

Following prior work~\citep{ramanujan2022forward, zhou2023bt, ricci2024backward, jaecklefastfill, ricci2025orthogonality}, we use Cumulative Matching Characteristic (CMC@$1$) as the retrieval performance metric $M$, and the full test set of each dataset is used as both the query set and gallery set.
Retrieval is performed using cosine distance between query and gallery representations, excluding the query sample itself from the gallery to avoid a trivial match. CMC@$1$ is computed as the fraction of queries for which the gallery sample ranked first has the same class label as the query.

To evaluate compatibility across multiple model updates, we construct a compatibility matrix following~\citet{biondi2023cores, biondi2023cl2r, bui2025learning}:
\begin{equation}\label{eq:compatibility_matrix}
    \mathbf{C}_{t,k} =
    \begin{cases}
      0 & \text{if } t < k, \\
      M\big(\phi^k,\phi^k;\mathcal{Q},\mathcal{G}\big) & \text{if } t = k, \\
      M\big(\phi^t,\phi^k;\mathcal{Q},\mathcal{G}\big) & \text{if } t > k.
    \end{cases}
\end{equation}

This compatibility matrix summarizes retrieval performance across all valid pairs of model updates, extending the evaluation beyond the two or three update steps commonly considered in previous studies~\citep{shen2020towards, zhou2023bt, zhang2021hot}.
Compatibility is assessed by comparing each cross-model entry $\mathbf{C}_{t,k}$, with $t>k$, against the corresponding same-model entry $\mathbf{C}_{k,k}$ according to the Empirical Compatibility Criterion in Eq.~\ref{eq:multistepecc}.

To summarize the overall behavior across multiple updates, we adopt the metrics defined by~\citet{biondi2023cores} and~\citet{biondiallyouneed}, which aggregate the entries of the compatibility matrix into distinct measures, conceptually analogous to metrics used in continual learning~\citep{lopez2017gradient}:

\begin{itemize}

\item \textit{Average Compatibility (AC)}, which measures the fraction of compatible cross-model comparisons over $T$ update steps:
\mbox{$
AC =
\frac{2}{T(T-1)}
\sum_{t=2}^{T}
\sum_{k=1}^{t-1}
\mathds{1}\!\left(\mathbf{C}_{t,k} > \mathbf{C}_{k,k}\right),
$}
where $\mathds{1}$ denotes the indicator function.

\item \textit{Average Accuracy (AA)}, which measures the average retrieval performance across all same-model and cross-model comparisons over $T$ update steps, thus capturing the overall retrieval accuracy regardless of compatibility:
$AA = \frac{2}{T(T+1)} \sum_{t=1}^{T} \sum_{k=1}^{t} \mathbf{C}_{t,k}$.

\item \textit{Average Compatibility Accuracy (ACA)}, which jointly captures retrieval performance and compatibility by averaging the cross-model accuracy over all update pairs, while assigning zero contribution to incompatible comparisons:
\mbox{$
\mathit{ACA} =
\frac{2}{T(T-1)}
\sum_{t=2}^{T}
\sum_{k=1}^{t-1}
\mathbf{C}_{t,k}
\mathds{1}\!\left(\mathbf{C}_{t,k} > \mathbf{C}_{k,k}\right).
$}
\end{itemize}

\begin{table*}[t]
 \caption{Comparative analysis of compatibility performance under the ``Extended Classes'' benchmark on CIFAR-100 for multiple updates. Dark and light blue indicate respectively the highest and the second-highest value for each metric.
 }
 \small
 \label{tab:cifar}
    \centering 
        \vspace{-0.25cm}
        \begin{tabular}{l ccc ccc ccc ccc
        } 
            \toprule 
            \multirow{4}{*}{\shortstack{\textsc{Method}}} &
            \multicolumn{3}{c}{2 steps} &
            \multicolumn{3}{c}{5 steps} &
            \multicolumn{3}{c}{20 steps} 
            & \multicolumn{3}{c}{50 steps} 
            \\   
            \cmidrule(lr){2-4}  \cmidrule(lr){5-7} 
            \cmidrule(lr){8-10}
            \cmidrule(lr){11-13} 
            
            &
            {$AC$} &
            {$AA$ } &
            {$ACA$ } &
            {$AC$} &
            {$AA$ } &
            {$ACA$ } &
            {$AC$} &
            {$AA$ } & 
            {$ACA$ } &
            {$AC$} &
            {$AA$ } & 
            {$ACA$ }
            \\ 
            \midrule
            Baseline & 0 & 29.63 & 0 & 0 & 13.60 & 0 & 0 & 04.22 & 0  & 0 & 02.29 & 0\\ 
            BCT & \textcolor[RGB]{0,0,139}{\textbf{1}} & 40.64 & 35.51 & 0 & \textcolor[RGB]{65,105,225}{\textbf{31.03}} & 0& 0 & 22.66 & 0 %
            & 0.01 & 15.78 & 0.84 \\ 
            RACT & \textcolor[RGB]{0,0,139}{\textbf{1}} & \textcolor[RGB]{0,0,139}{\textbf{44.19}} & \textcolor[RGB]{0,0,139}{\textbf{45.09}} & 0.60 & 30.43 & 16.53 &  0.05 & 21.61 & 2.42 & 0 & 16.71 & 0\\
            AdvBCT & 0 & 35.32 & 0  & 0.40 & 26.13 & 11.67 & 0.02 & 19.79 & 0.19  & 0 & 14.70 & 0.03\\ 
            LCE & \textcolor[RGB]{0,0,139}{\textbf{1}} & \textcolor[RGB]{65,105,225}{\textbf{43.48}} & \textcolor[RGB]{65,105,225}{\textbf{40.71}}   & 0.10 & \textcolor[RGB]{0,0,139}{\textbf{32.63}} & 04.63 & 0 & 20.95 & 0.25  & 0 & 13.81 & 0.03 \\ 
            CoReS &  \textcolor[RGB]{0,0,139}{\textbf{1}} & 40.94 & 35.21 &  0 & 29.63 &0 & 0.12 & \textcolor[RGB]{0,0,139}{\textbf{24.88}} & 4.61 & 0.11 & \textcolor[RGB]{0,0,139}{\textbf{23.50}} & 3.91 \\

            \midrule
            PSP & \textcolor[RGB]{0,0,139}{\textbf{1}} & 36.31 & 29.05  & \textcolor[RGB]{0,0,139}{\textbf{0.90}} & 26.04 & \textcolor[RGB]{65,105,225}{\textbf{21.76}}  & \textcolor[RGB]{0,0,139}{\textbf{0.52}} & 20.56 & \textcolor[RGB]{65,105,225}{\textbf{12.99}}    & \textcolor[RGB]{0,0,139}{\textbf{0.39}} & 19.42 & \textcolor[RGB]{65,105,225}{\textbf{10.76}} \\
            LSP & \textcolor[RGB]{0,0,139}{\textbf{1}} & 41.14 & 36.38  & \textcolor[RGB]{65,105,225}{\textbf{0.70}} & 30.36 & \textcolor[RGB]{0,0,139}{\textbf{21.91}}  & \textcolor[RGB]{65,105,225}{\textbf{0.44}} & \textcolor[RGB]{65,105,225}{\textbf{24.11}} & \textcolor[RGB]{0,0,139}{\textbf{13.26}}   & \textcolor[RGB]{65,105,225}{\textbf{0.36}}  & \textcolor[RGB]{65,105,225}{\textbf{22.68}} & \textcolor[RGB]{0,0,139}{\textbf{11.25}} \\

            \bottomrule
        \end{tabular} 
\end{table*}

\subsection{Standard Compatibility Benchmarks} \label{sec:experiments_extended_classes}

In this section, we compare PSP and LSP with established backward-compatibility methods on standard compatibility benchmarks. Following prior work~\citep{shen2020towards, zhou2023bt, Meng_2021_ICCV, biondi2023cores, pan2023boundary}, we evaluate three update scenarios: \emph{extended classes}, where new classes are introduced across model updates; \emph{extended backbone}, where the model architecture is upgraded; and \emph{extended classes and backbone}, where both changes occur simultaneously.

Following~\citet{shen2020towards} and subsequent literature, unless otherwise specified, each updated model is randomly initialized and trained from scratch rather than initialized from the previous model.
This protocol removes parameter inheritance across model updates, ensuring that observed compatibility cannot be attributed to warm-starting from a previously learned representation.
Models are trained using classification as the surrogate task for representation learning.
Implementation details for the standard compatibility benchmarks are provided in Appendix~\ref{sec:implementation_details}.

We compare against the following backward-compatibility methods: BCT~\citep{shen2020towards}, RACT~\citep{zhang2021hot}, AdvBCT~\citep{pan2023boundary}, LCE~\citep{Meng_2021_ICCV}, and CoReS~\citep{biondi2023cores}.
We also include a Baseline based on the $\ell_2$-normalized encoder features without any compatible method applied. When the encoder feature dimensionality differs across model updates, the lower-dimensional representation is zero-padded to match the dimensionality of the higher-dimensional one before computing cross-model similarities.
For reproducibility, we use the official publicly available implementations of all compared methods.
All experiments are conducted on two NVIDIA A100 40\,GB GPUs.

\begin{table*}[t]
 \caption{Comparative analysis of compatibility performance under the ``Extended Classes'' benchmark on ImageNet-1K and Google Landmarks v2 for multiple updates. Dark and light blue indicate respectively the highest and the second-highest value for each metric.
 }
 \small
 \centering
 \setlength{\tabcolsep}{4pt}
 \label{tab:imagenet}
        \vspace{-0.2cm}

        \begin{tabular}{l ccc ccc c@{\hspace{0.4cm}} ccc ccc}
            \toprule 
            \multirow{7}{*}{\shortstack{\textsc{Method}}}
            & \multicolumn{6}{c}{ImageNet-1K} 
            & 
            & \multicolumn{6}{c}{Google Landmark v2} 
            \\
            \cmidrule(lr){2-7} 
            \cmidrule(lr){9-14} 
        
            & \multicolumn{3}{c}{2 steps} 
            & \multicolumn{3}{c}{5 steps} 
            &  %
            & \multicolumn{3}{c}{2 steps} 
            & \multicolumn{3}{c}{5 steps} 
            \\   
            \cmidrule(lr){2-4} \cmidrule(lr){5-7} 
            \cmidrule(lr){9-11} \cmidrule(lr){12-14} 
            
            &
            {$AC$} &
            {$AA$} &
            {$ACA$} &
            {$AC$} &
            {$AA$} &
            {$ACA$} 
            & %
            & 
            {$AC$} &
            {$AA$} &
            {$ACA$} &
            {$AC$} &
            {$AA$} & 
            {$ACA$}
            \\ 

            \midrule
            Baseline & 0 & 37.52 & 0 & 0 & 16.18 & 0 && 0 & 07.66 & 0 & 0 & 07.87 & 0 \\ 
            BCT  & 
            \textcolor[RGB]{0,0,139}{\textbf{1}} & 48.81 & \textcolor[RGB]{65,105,225}{\textbf{45.86}} & 0.20 & 34.96 & 5.28 && \textcolor[RGB]{0,0,139}{\textbf{1}} & 09.37 & 08.22 & 0 & 08.46 & 0  \\ 
            RACT &  \textcolor[RGB]{0,0,139}{\textbf{1}} & \textcolor[RGB]{0,0,139}{\textbf{55.17}} & \textcolor[RGB]{0,0,139}{\textbf{51.16}} & 0.10 &  \textcolor[RGB]{65,105,225}{\textbf{40.94}} & 6.20 &  & 0 & \textcolor[RGB]{65,105,225}{\textbf{11.01}} & 0 & 0 & 09.99 & 0   \\
            AdvBCT  & 0 &  \textcolor[RGB]{65,105,225}{\textbf{53.07}} & 0 & 0 &  \textcolor[RGB]{0,0,139}{\textbf{43.91}} & 0 & & \textcolor[RGB]{0,0,139}{\textbf{1}} & \textcolor[RGB]{0,0,139}{\textbf{11.12}} & \textcolor[RGB]{0,0,139}{\textbf{09.67}} & 0 &  \textcolor[RGB]{0,0,139}{\textbf{10.73}} & 0 \\ 
            LCE & 0 & 47.22 & 0 & 0 & 32.67 & 0 & &  \textcolor[RGB]{0,0,139}{\textbf{1}} & 09.41 & 08.03 & 0.40 & \textcolor[RGB]{65,105,225}{\textbf{10.31}} & 3.80 \\ 
            CoReS  & 0 & 46.11 & 0 & 0 & 37.67 & 0 && \texttt{-} & \texttt{-} & \texttt{-} & \texttt{-} & \texttt{-} & \texttt{-}  \\ 
            \midrule
            PSP & \textcolor[RGB]{0,0,139}{\textbf{1}} & 49.10 & 39.80 & \textcolor[RGB]{0,0,139}{\textbf{1}} & 35.20 & \textcolor[RGB]{0,0,139}{\textbf{31.09}} && \textcolor[RGB]{0,0,139}{\textbf{1}} & 08.74 & 08.24 & \textcolor[RGB]{0,0,139}{\textbf{0.90}} & 08.53 & 7.21 \\
            \hspace{0.05cm} $\hookrightarrow$ w top-256 &  \textcolor[RGB]{0,0,139}{\textbf{1}} & 49.11 & 39.81 &  \textcolor[RGB]{0,0,139}{\textbf{1}} & 35.20 & \textcolor[RGB]{0,0,139}{\textbf{31.09}} && \textcolor[RGB]{0,0,139}{\textbf{1}} & 08.74 & 08.24 & \textcolor[RGB]{0,0,139}{\textbf{0.90}} & 08.53 & 7.21 \\
            LSP & \textcolor[RGB]{0,0,139}{\textbf{1}} & 50.72 & 45.62 & 0.50 & 38.03 & 14.28  && \textcolor[RGB]{0,0,139}{\textbf{1}} & 09.31 & \textcolor[RGB]{65,105,225}{\textbf{08.36}} & \textcolor[RGB]{65,105,225}{\textbf{0.80}} & 09.36 & \textcolor[RGB]{0,0,139}{\textbf{7.34}} \\ 
            \hspace{0.05cm} $\hookrightarrow$ w top-256 &  \textcolor[RGB]{0,0,139}{\textbf{1}} &  50.34 & 45.84 &  \textcolor[RGB]{65,105,225}{\textbf{0.70}} &  37.96 &  \textcolor[RGB]{65,105,225}{\textbf{24.85}} && \textcolor[RGB]{0,0,139}{\textbf{1}} & 09.30 & 08.35 & \textcolor[RGB]{65,105,225}{\textbf{0.80}} & 09.34 & \textcolor[RGB]{65,105,225}{\textbf{7.31}} \\
            
            \bottomrule
        \end{tabular} 

\end{table*}

\subsubsection{Extended Classes Benchmark}\label{sec:extended_classes}

In the extended-classes benchmark, the training set of each dataset is partitioned into $T$ sequential tasks, each introducing a disjoint and equal number of new classes.
At each update step $t$, the model is trained from scratch using the cumulative training data from tasks $1$ through $t$.
Following~\citet{zhou2023bt, ricci2024backward, jaecklefastfill, zhou2022forward}, retrieval is evaluated at each update step over the entire test set, including classes that have not yet been observed during training.
This results in a semi-open-set evaluation scenario, in which the training class set expands across updates while retrieval is performed over the complete set of classes.
Tab.~\ref{tab:cifar} reports \emph{AC}, \emph{AA}, and \emph{ACA} on CIFAR-100~\citep{Krizhevsky2009LearningML} using a ResNet-18 backbone.
Tab.~\ref{tab:imagenet} reports the same metrics on larger-scale datasets: ImageNet-1K~\citep{russakovsky2015imagenet} using a ResNet-50 backbone and Google Landmarks v2~\citep{weyand2020google} using a ResNet-18 backbone initialized from ImageNet-1K pretrained weights at every update step.
Overall, Tab.~\ref{tab:cifar} and Tab.~\ref{tab:imagenet} show that PSP and LSP achieve competitive retrieval performance while consistently providing strong compatibility across model updates.
The advantage of our proposed representations over the compared approaches becomes increasingly pronounced as the number of update steps increases, as reflected by the compatibility metrics in the 50-step setting on CIFAR-100 and the 5-step setting on ImageNet-1K.
The compatibility matrices for the 5-step CIFAR-100 experiment are reported in Fig.~\ref{fig:comp_matrix_cifar_5step} in Appendix~\ref{sec:comp_CIFAR-100}, providing a visual summary of retrieval performance across model updates.

In the 2-step setting, where compatibility is evaluated over a single model-update pair, several of the compared approaches achieve strong performance.
On CIFAR-100, BCT, RACT, LCE, CoReS, PSP, and LSP all satisfy the empirical compatibility criterion ($AC=1$), with RACT achieving the highest retrieval performance among the compared methods.
A similar behavior is observed on ImageNet-1K, where BCT, RACT, PSP, and LSP satisfy the compatibility criterion, while RACT obtains the highest \emph{AA} and \emph{ACA}.
On Google Landmarks v2, BCT, AdvBCT, LCE, PSP, and LSP satisfy the criterion, whereas RACT does not.
In contrast, the Baseline based directly on the encoder features does not achieve compatibility in any evaluated setting, showing that independently trained encoder representations are not naturally compatible across model updates.

As the number of updates increases, the differences among the methods become more pronounced.
On CIFAR-100, the compatibility of BCT, RACT, AdvBCT, and LCE drops to nearly zero in the 20- and 50-step settings.
CoReS is more robust than these pairwise compatibility approaches over longer update sequences and retains comparatively high \emph{AA}; however, its \emph{AC} and \emph{ACA} remain substantially below those of PSP and LSP.
For example, in the 50-step setting, CoReS achieves $AC=0.11$ and $ACA=3.91$, compared with $AC=0.39$ and $ACA=10.76$ for PSP and $AC=0.36$ and $ACA=11.25$ for LSP.
A similar trend is observed in the 5-step ImageNet-1K benchmark: although several competing methods retain high \emph{AA}, their compatibility frequency is low or zero, whereas PSP achieves $AC=1$ and $ACA=31.09$, and LSP achieves $AC=0.50$ and $ACA=14.28$.
On Google Landmarks v2, PSP and LSP similarly attain the highest compatibility values in the 5-step setting, with $AC=0.90$ and $AC=0.80$, respectively, compared with $AC=0.40$ for LCE and $AC=0$ for the remaining evaluated baselines.

One possible explanation for the degradation observed for several of the compared methods as the number of updates increases is that regularization- and mapping-based approaches typically establish compatibility only with respect to the previous model.
When such compatibility is enforced sequentially across multiple updates, deviations and misalignment with respect to earlier model versions may accumulate because compatibility with all previous representations is not directly enforced.
Rather than sequentially enforcing compatibility with their predecessors, PSP and LSP exploit a regular-simplex geometry that provides a common reference across all model updates, consistently with Corollary~\ref{cor:regulsimplex_compatibility}.

Among the compared approaches, CoReS~\citep{biondi2023cores} is conceptually closer to PSP and LSP, as it also exploits a regular-simplex geometry as reference across model updates.
The main practical distinction is that CoReS requires the classifier geometry and the number of prototypes to be pre-allocated before training, including prototypes reserved for classes that may be introduced in future updates~\citep{pernici2021class,biondi2023cores}.
The outputs reserved for future classes participate in the cross-entropy objective from the beginning of training and act as negative classes for current samples~\citep{pernici2021class,biondi2023cores}, thereby constraining representation learning with respect to a prototype geometry and class capacity specified in advance.
This fixed pre-allocated prototype assignment may limit adaptation to the observed class structure \citep{saadabadi2024hyperspherical}.
In contrast, PSP and LSP derive their reference geometry directly from the output space of each model.
Furthermore, CoReS scales poorly with the number of classes, as its memory requirements, driven by both the pre-allocated fixed classifier and the additional linear layer mapping the backbone features to the simplex representation space, grow substantially with the class count. For Google Landmarks v2, these requirements made training computationally infeasible under our experimental configuration, and CoReS results are therefore not reported in Tab.~\ref{tab:imagenet}.

Notably, across the different numbers of update steps reported in Tab.~\ref{tab:cifar} and Tab.~\ref{tab:imagenet}, LSP consistently achieves higher \emph{AA} than PSP, while its \emph{AC} is generally equal to or lower than that of PSP.
This behavior is consistent with the larger within-class spread and lower prototype alignment of LSP observed in the NC analysis of Sec.~\ref{sec:logits_theory}. 
As a result of this trade-off, LSP often achieves comparable or higher \emph{ACA}. However, since incompatible comparisons contribute zero to \emph{ACA}, PSP achieves a higher \emph{ACA} when it is compatible in a substantially larger fraction of update pairs, as in the 5-step ImageNet-1K setting ($AC=1$ for PSP vs.\ $AC=0.50$ for LSP).

\begin{wraptable}{r}{0.5\linewidth}
  \vspace{-1.2\baselineskip} % tighten if needed; remove if it looks cramped
  \centering
  \footnotesize
\caption{Comparison of PSP and LSP across different top-$k$ values in terms of $AC$, $AA$, and $ACA$, in the 5-step ImageNet-1K setting of the extended-classes benchmark.}
  \vspace{-0.1cm}
  \label{tab:topk}
  \begin{tabular}{@{}ccccccc@{}}
    \toprule
    \multirow{2}{*}{Top-$k$} & \multicolumn{3}{c}{PSP} & \multicolumn{3}{c}{LSP} \\
    \cmidrule(lr){2-4}\cmidrule(lr){5-7}
    with $k=$ & $AC$ & $AA$ & $ACA$ & $AC$ & $AA$ & $ACA$ \\
    \midrule
    256  & 1 & 35.20 & 31.09 & 0.7 & 37.96 & 24.85 \\
    384  & 1 & 35.20 & 31.09 & 0.7 & 38.00 & 24.87 \\
    512  & 1 & 35.20 & 31.09 & 0.5 & 38.03 & 14.24 \\
    768  & 1 & 35.20 & 31.09 & 0.5 & 38.01 & 14.28 \\
    896  & 1 & 35.20 & 31.09 & 0.5 & 38.03 & 14.28 \\
    1000 & 1 & 35.20 & 31.09 & 0.5 & 38.03 & 14.28 \\
    \bottomrule
  \end{tabular}
  \vspace{-\baselineskip} % adjust/remove to taste
\end{wraptable}

The rows labeled ``$\!\xrightarrow{}$ w top-256'' in Tab.~\ref{tab:imagenet} report the performance of PSP and LSP after applying the top-$k$ sparsification described in Sec.~\ref{sec:dimensionality_reduction_topk}.
Top-$256$ sparsification leaves the PSP results essentially unchanged across the evaluated benchmarks, while also preserving the retrieval accuracy of LSP. In the 5-step ImageNet-1K setting, sparsification increases the \emph{AC} of LSP from $0.50$ to $0.70$ and its \emph{ACA} from $14.28$ to $24.85$, with only a negligible change in \emph{AA} (from $38.03$ to $37.96$). Consistently, Tab.~\ref{tab:topk} shows that retrieval accuracy remains essentially stable over the evaluated range of $k$, while the observed compatibility behavior is preserved. 

The memory benefit of sparsification becomes particularly relevant when the number of classes is large.
For a gallery of $N_g$ representations of dimension $C$, dense FP32
storage requires approximately $4N_gC$ bytes, whereas storing the
top-$k$ representation using one 32-bit value and one 32-bit index per
retained coordinate requires approximately $8N_gk$ bytes, excluding
minor sparse-format metadata overhead.
For Google Landmarks v2, where the representation dimensionality exceeds $81$k at the final update, this corresponds to approximately $318$\,KiB per dense representation and only $2$\,KiB with $k=256$, yielding approximately a $159\times$ reduction in gallery storage. Sparse representations also avoid dense $C$-dimensional similarity computation; the resulting search-time benefit depends on the adopted indexing and search backend~\citep{simhadri2025results}.

\begin{table*}[t]
  \caption{Comparative analysis of compatibility performance under the ``Extended Backbone'' and ``Extended Classes and Backbone'' benchmarks on CIFAR-100 and ImageNet-1K. Dark and light blue indicate respectively the highest and the second-highest value for each metric.}
  \label{tab:combined_benchmarks}
  \small
  \centering
  \vspace{-0.25cm}
  \begin{tabular}{l ccc ccc ccc ccc}
    \toprule
    \multirow{7}{*}{\shortstack{\textsc{Method}}} &
    \multicolumn{6}{c}{Ext. Backbone} &
    \multicolumn{6}{c}{Ext. Classes \& Backbone} \\
    \cmidrule(lr){2-7}\cmidrule(lr){8-13}
    & \multicolumn{3}{c}{CIFAR-100} & \multicolumn{3}{c}{ImageNet-1K} 
    & \multicolumn{3}{c}{CIFAR-100} & \multicolumn{3}{c}{ImageNet-1K} \\
    \cmidrule(lr){2-4}\cmidrule(lr){5-7}\cmidrule(lr){8-10}\cmidrule(lr){11-13}
    & $AC$ & $AA$ & $ACA$ & $AC$ & $AA$ & $ACA$ & $AC$ & $AA$ & $ACA$ & $AC$ & $AA$ & $ACA$ \\
    \midrule
    Baseline        & 0 & 35.13 & 0       & 0 & 40.58 & 0   & 0 & 29.88 & 0      & 0 & 35.88 & 0 \\ 
    BCT     & \textcolor[RGB]{0,0,139}{\textbf{1}} & 53.57 & 54.06  & \textcolor[RGB]{0,0,139}{\textbf{1}} & 61.80 & 61.43  & \textcolor[RGB]{0,0,139}{\textbf{1}} & \textcolor[RGB]{65,105,225}{\textbf{43.61}} & \textcolor[RGB]{65,105,225}{\textbf{40.79}} & \textcolor[RGB]{0,0,139}{\textbf{1}} & 50.64 & 42.98 \\ 
    RACT      & \textcolor[RGB]{0,0,139}{\textbf{1}} & 53.80 & 54.62  & \textcolor[RGB]{0,0,139}{\textbf{1}} & \textcolor[RGB]{65,105,225}{\textbf{62.15}} & 62.07 & \textcolor[RGB]{0,0,139}{\textbf{1}} & \textcolor[RGB]{0,0,139}{\textbf{43.66}} & \textcolor[RGB]{0,0,139}{\textbf{41.01}} &  \textcolor[RGB]{0,0,139}{\textbf{1}} &  \textcolor[RGB]{0,0,139}{\textbf{52.74}}     &   \textcolor[RGB]{0,0,139}{\textbf{48.92}}    \\ 
    AdvBCT  & 0 & 49.56 & 0   & \textcolor[RGB]{0,0,139}{\textbf{1}} & 59.22 & 57.73  & \textcolor[RGB]{0,0,139}{\textbf{1}} & 39.06 & 35.05 & \textcolor[RGB]{0,0,139}{\textbf{1}} & 50.37 & 45.42 \\ 
    LCE   & \textcolor[RGB]{0,0,139}{\textbf{1}} & 52.14 & 52.26  & \textcolor[RGB]{0,0,139}{\textbf{1}} & 61.22 & 61.40 & \textcolor[RGB]{0,0,139}{\textbf{1}} & 42.42 & 39.72 & \textcolor[RGB]{0,0,139}{\textbf{1}} & \textcolor[RGB]{65,105,225}{\textbf{52.28}} & 48.06 \\ 
    CoReS     & \textcolor[RGB]{0,0,139}{\textbf{1}} & \textcolor[RGB]{65,105,225}{\textbf{56.67}} & \textcolor[RGB]{65,105,225}{\textbf{57.43}}  & \textcolor[RGB]{0,0,139}{\textbf{1}} & 62.03 & \textcolor[RGB]{65,105,225}{\textbf{63.19}} & \textcolor[RGB]{0,0,139}{\textbf{1}} & 42.61 & 37.20 & \textcolor[RGB]{0,0,139}{\textbf{1}} & 49.10 & 39.90 \\ 
    \midrule
    PSP             &  \textcolor[RGB]{0,0,139}{\textbf{1}} &   54.57   &   56.01         &  \textcolor[RGB]{0,0,139}{\textbf{1}} &  \textcolor[RGB]{0,0,139}{\textbf{68.35}}     & \textcolor[RGB]{0,0,139}{\textbf{65.28}}  &  \textcolor[RGB]{0,0,139}{\textbf{1}} &    38.65   &   32.84 &   \textcolor[RGB]{0,0,139}{\textbf{1}} & 48.42 & 41.24      \\ 
    LSP             & \textcolor[RGB]{0,0,139}{\textbf{1}} & \textcolor[RGB]{0,0,139}{\textbf{56.76}} & \textcolor[RGB]{0,0,139}{\textbf{57.96}}  & \textcolor[RGB]{0,0,139}{\textbf{1}} & 62.00 & 63.16 & \textcolor[RGB]{0,0,139}{\textbf{1}} & 43.08 & 37.89 & \textcolor[RGB]{0,0,139}{\textbf{1}} & 52.13 & \textcolor[RGB]{65,105,225}{\textbf{48.26}} \\ 
    \bottomrule
  \end{tabular}
\end{table*}

\subsubsection{Extended Backbone Benchmark}\label{sec:extended_backbone}

In this experimental setting, we maintain the same dataset across all update steps (with no introduction of additional classes during model updates) while increasing model architectural complexity across two steps (i.e., ResNet-18 $\rightarrow$ ResNet-50).
The goal is to evaluate compatibility when model updates involve a change in model architecture rather than the introduction of new training classes.
Results for CIFAR-100 and ImageNet-1K are summarized in Tab.~\ref{tab:combined_benchmarks}.
Most of the evaluated approaches satisfy the compatibility criterion (\emph{AC}$=1$), suggesting that compatibility is generally easier to maintain in this setting than in the extended-classes benchmark.
Since this benchmark consists of two update steps, \emph{AC} is determined by a single cross-model comparison, and performance differences therefore emerge primarily in terms of \emph{AA} and \emph{ACA}.
LSP achieves the highest \emph{AA} and \emph{ACA} on CIFAR-100 and remains within $0.1$--$0.2$ percentage points of the best-performing methods, CoReS and RACT, on ImageNet-1K.
These results indicate that LSP maintains strong compatibility and retrieval performance even when the model architecture changes substantially.
Notably, PSP and LSP achieve competitive performance without requiring compatibility-specific regularization, learnable mappings, or modifications to the model architecture.

\subsubsection{Extended Classes and Backbone Benchmark}
\label{sec:extended_classes_backbone}

This experimental configuration, consisting of two update steps, combines the two previously examined update mechanisms: at the second step, the model is retrained from scratch on an expanded training set using a more expressive backbone.
Results for this combined setting are reported in Tab.~\ref{tab:combined_benchmarks} for the CIFAR-100 and ImageNet-1K datasets.
The Baseline, which does not employ any compatibility-preservation mechanism, fails to satisfy the compatibility criterion (\emph{AC}$=0$) on both datasets, consistent with the behavior observed in the previous experimental settings.
In contrast, all evaluated backward-compatibility approaches, including PSP and LSP, satisfy the criterion (\emph{AC}$=1$), showing that compatibility can be maintained even when the model update simultaneously introduces new classes and changes the backbone architecture.
Across both CIFAR-100 and ImageNet-1K, LSP shows robust performance under this combined update setting, achieving \emph{AA} and \emph{ACA} values that match or closely approach those of the strongest compared methods.
PSP also satisfies the compatibility criterion on both datasets, but attains lower \emph{AA} and \emph{ACA} than LSP.
This behavior is consistent with the smaller within-class spread of the softmax output representation observed in the Neural Collapse analysis of Sec.~\ref{sec:logits_theory}, which may reduce retrieval performance in the semi-open-set setting, where the test set also contains classes not yet observed during training.

\begin{table*}[t]
        \centering 
        \caption{Compatibility between pretrained models on ImageNet-1K and Places365: (\textit{a}) AlexNet~$\xrightarrow{}$ ResNet-50 (R-50) $\xrightarrow{}$ RegNetX\_3.2GF (RegNet) $\xrightarrow{}$ ResNet-152 (R-152) $\xrightarrow{}$ MaxViT\_T and (\textit{b}) ViT-B-32 $\xrightarrow{}$ ViT-B-16 $\xrightarrow{}$ ViT-L-16. 
        Models are pretrained on ImageNet-1K and downloaded from \cite{marcel2010torchvision}. Dark blue numbers indicate the highest values, while light blue the second-highest values.}
        \label{tab:pretrained_models}
        \footnotesize      
        \begin{tabular}{l cccc cccc cccc ccc}
            \toprule 
            \multirow{7}{*} 
            {\shortstack{\textsc{Method}}}
            & \multicolumn{7}{c}{AlexNet $\xrightarrow{}$  R-50 $\xrightarrow{}$ RegNet $\xrightarrow{}$ R-152 $\xrightarrow{}$ MaxViT} 
            & 
            & \multicolumn{7}{c}{ViT-B-32 $\xrightarrow{}$ ViT-B-16 $\xrightarrow{}$ ViT-L-16} 
            \\

            \cmidrule(lr){2-8} 
            \cmidrule(lr){9-16}

            & \multicolumn{3}{c}{ImageNet-1K} 
            & & \multicolumn{3}{c}{Places365} && \multicolumn{3}{c}{ImageNet-1K} 
            & 
            & \multicolumn{3}{c}{Places365} \\

            \cmidrule(lr){2-4} 
            \cmidrule(lr){6-8} 
            \cmidrule(lr){10-12} 
            \cmidrule(lr){14-16} 

            &
            {$AC$} &
            {$AA$} &
            {$ACA$} &
            & 
            {$AC$} &
            {$AA$} &
            {$ACA$} &
            & 
            {$AC$} &
            {$AA$} &
            {$ACA$} &
            & 
            {$AC$} &
            {$AA$} &
            {$ACA$} 
            \\ 
            \midrule
            
            Baseline & 0 & 21.63 & 0 & & 0 & 8.38 & 0 && 0 & 38.63 & 0  && 0 & 15.10 & 0 \\ 
            PSP & \textcolor[RGB]{0,0,139}{\textbf{1}} & \textcolor[RGB]{0,0,139}{\textbf{74.34}} & \textcolor[RGB]{0,0,139}{\textbf{76.12}}  & & \textcolor[RGB]{0,0,139}{\textbf{0.9}} & \textcolor[RGB]{65,105,225}{\textbf{15.92}} & \textcolor[RGB]{0,0,139}{\textbf{14.08}} && \textcolor[RGB]{0,0,139}{\textbf{1}} & \textcolor[RGB]{0,0,139}{\textbf{78.94}} & \textcolor[RGB]{0,0,139}{\textbf{80.53}} && \textcolor[RGB]{0,0,139}{\textbf{1}} & \textcolor[RGB]{65,105,225}{\textbf{18.31}} & \textcolor[RGB]{0,0,139}{\textbf{17.60}} \\ 
            LSP & \textcolor[RGB]{65,105,225}{\textbf{0.7}} & \textcolor[RGB]{65,105,225}{\textbf{60.68}} & \textcolor[RGB]{65,105,225}{\textbf{37.61}} && \textcolor[RGB]{65,105,225}{\textbf{0.2}} & \textcolor[RGB]{0,0,139}{\textbf{21.36}} & \textcolor[RGB]{65,105,225}{\textbf{4.06}} && \textcolor[RGB]{65,105,225}{\textbf{0.67}}  & \textcolor[RGB]{65,105,225}{\textbf{71.82}}  &  \textcolor[RGB]{65,105,225}{\textbf{45.91}}  && \textcolor[RGB]{65,105,225}{\textbf{0.33}}  & \textcolor[RGB]{0,0,139}{\textbf{25.71}} & \textcolor[RGB]{65,105,225}{\textbf{7.99}}  \\ 
            
            \bottomrule
        \end{tabular} 
\end{table*}

\begin{figure*}[t]
    \vspace{-0.2cm}
    \centering
        \begin{subfigure}{0.32\linewidth}
        \centering
        \includegraphics[width=0.95\linewidth]{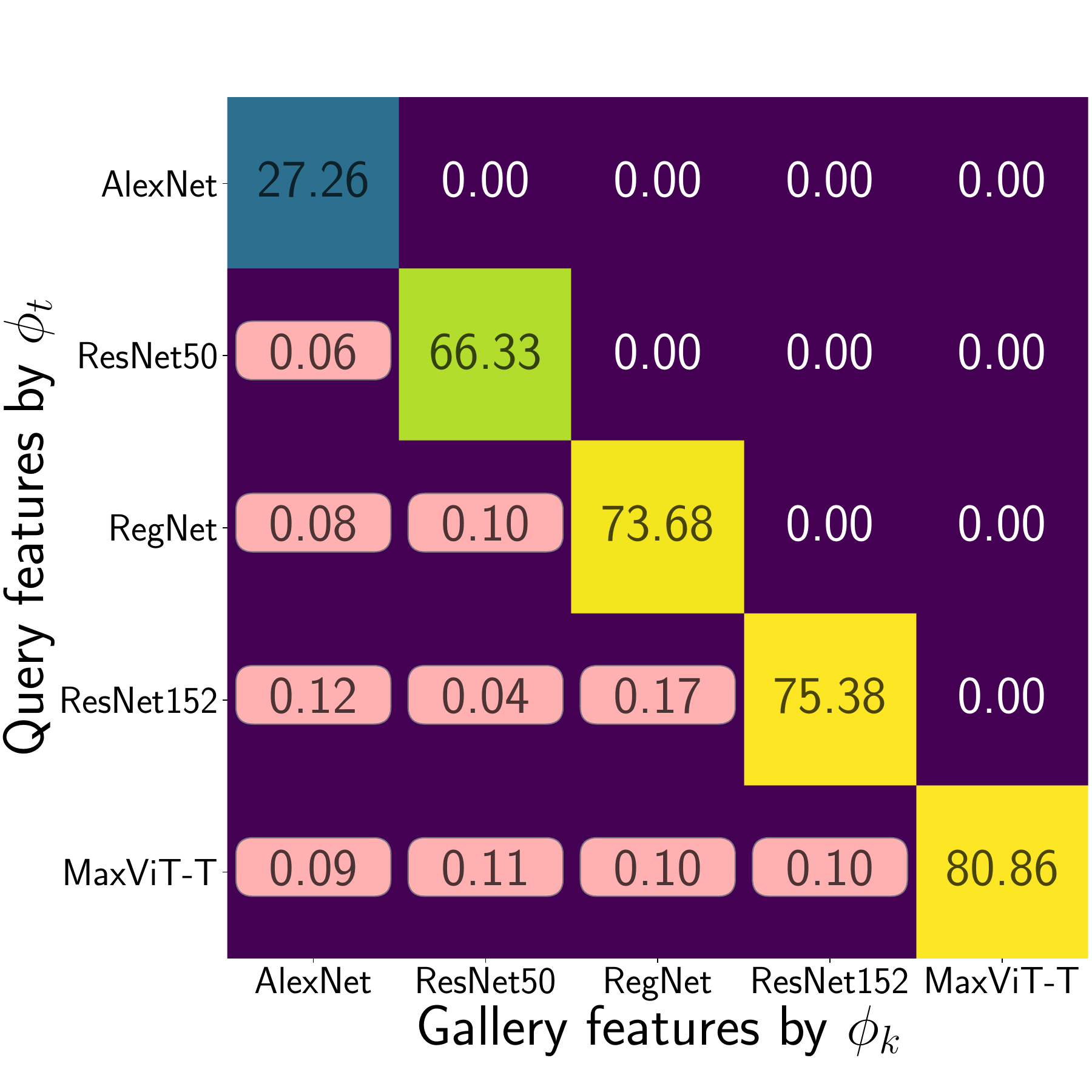}        \caption{Baseline}
    \label{fig:pretrained_models_prelogit_5step}
    \end{subfigure}
    % \hspace{-5pt}
    \hfill
    \begin{subfigure}{0.32\linewidth}
        \centering
        \includegraphics[width=0.95\linewidth]{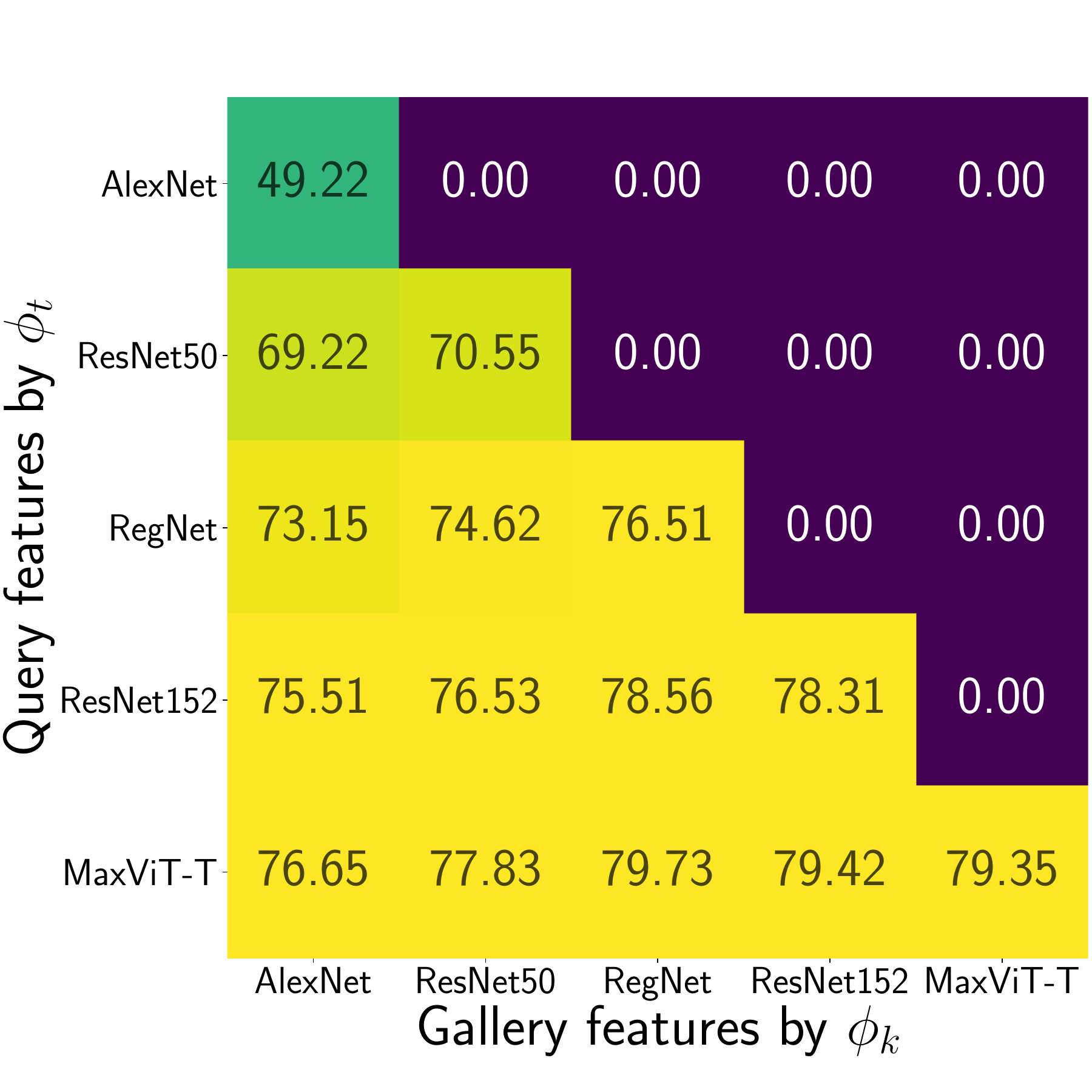}
        \caption{PSP}
    \label{fig:pretrained_models_softmaxp_5step}
    \end{subfigure}
    \hfill
    % \hspace{-5pt}
    \begin{subfigure}{0.32\linewidth}
        \centering
        \includegraphics[width=0.95\linewidth]{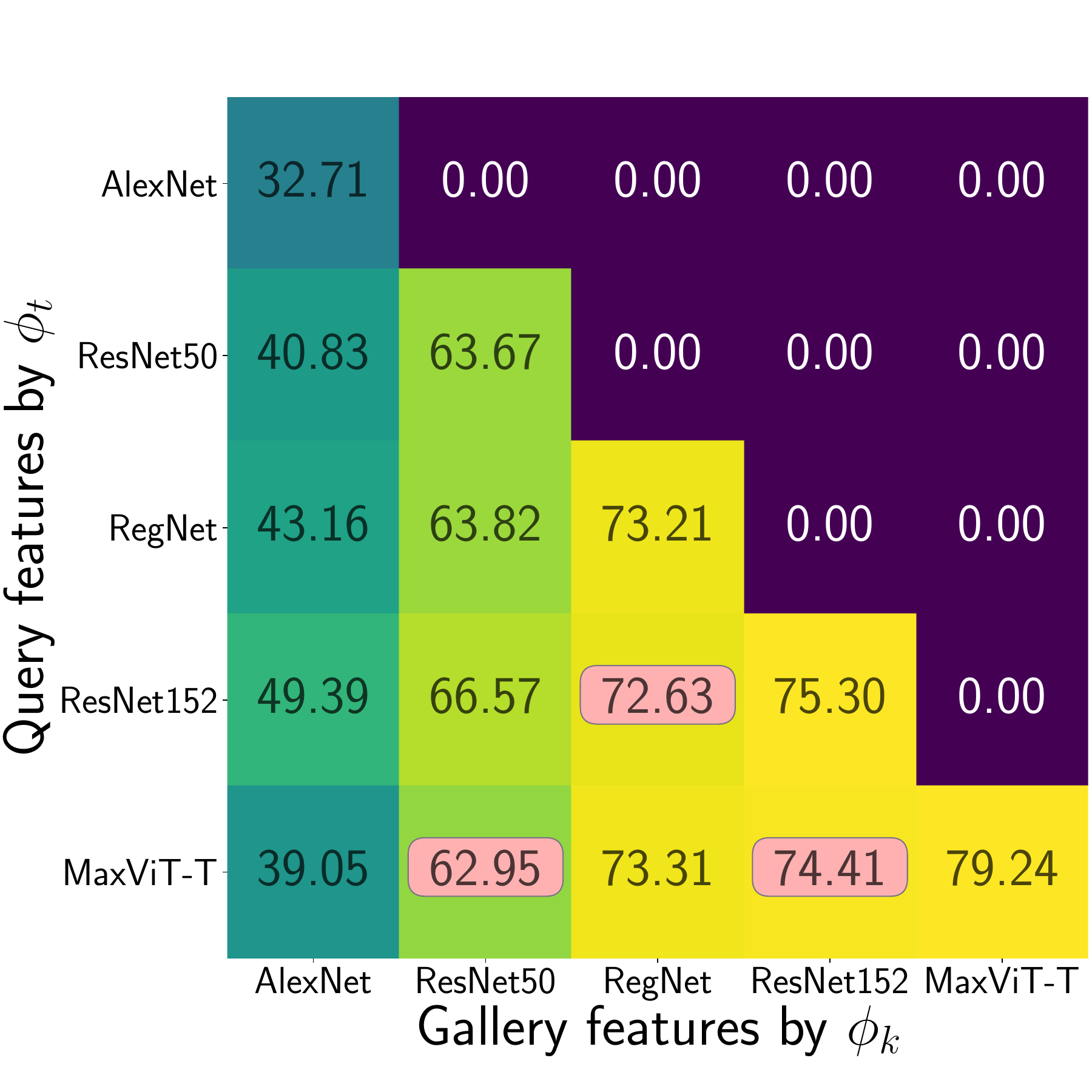}
        \caption{LSP}
    \label{fig:pretrained_models_logit_5step}
    \end{subfigure}
    \caption{Compatibility between independently trained models, all pretrained on ImageNet-1K. Compatibility matrices for five steps with increasingly expressive models (AlexNet, ResNet-50, RegNetX\_3.2GF, ResNet-152, and MaxViT\_T). The matrices display retrieval performance on the ImageNet-1K test set.
    Entries that do not satisfy the compatibility criterion in Eq.~\ref{eq:multistepecc} are highlighted with a light-red background. Model representations from: (a) encoder output (Baseline); (b) softmax outputs (PSP); (c) logits (LSP).}
\label{fig:pretrained_comp_matrix_paper}
\vspace{-1cm}
\end{figure*}

\subsection{Compatibility Across Independently Trained Models From Public Repositories}\label{sec:compatibility_pretrained_models}

The standard compatibility benchmarks of Sec.~\ref{sec:experiments_extended_classes} consider models explicitly trained within a backward-compatibility protocol.
Here, we evaluate a complementary and more practical setting in which compatibility is measured directly between independently trained models obtained from a public repository, without any additional compatibility-specific training or adaptation.

We consider two sequences of pretrained models from the publicly available torchvision repository~\citep{marcel2010torchvision}: a convolutional and hybrid sequence comprising AlexNet~\citep{krizhevsky2012imagenet}, ResNet-50~\citep{he2016deep}, RegNetX\_3.2GF~\citep{radosavovic2020designing}, ResNet-152~\citep{he2016deep}, and MaxViT\_T~\citep{tu2022maxvit}, and a transformer sequence comprising ViT-B/32, ViT-B/16, and ViT-L/16~\citep{Dosovitskiy2020AnII}.
All models are pretrained on ImageNet-1K while differing substantially in architecture and representational capacity.

We evaluate retrieval on the ImageNet-1K test set, which follows the pretraining distribution (closed-set), and on Places365~\citep{zhou2017places}, whose classes and data distribution differ from those used for pretraining (open-set).
We compare PSP and LSP with the Baseline introduced in Sec.~\ref{sec:experiments_extended_classes}, obtained by $\ell_2$-normalizing the encoder features.
Existing backward-compatibility methods are not included because they require additional training, adaptation, or architectural modifications and therefore cannot be directly applied to these publicly available, independently trained models.

The results in Tab.~\ref{tab:pretrained_models} show a clear advantage of the proposed hyperspherical simplex representations over the Baseline.
PSP provides highly consistent compatibility across the evaluated model sequences, attaining \emph{AC}$=1$ in three of the four settings and \emph{AC}$=0.90$ in the remaining open-set setting.
LSP also substantially improves compatibility over the Baseline, although it satisfies the empirical compatibility criterion for a smaller fraction of model-update pairs.
The compatibility matrices in Fig.~\ref{fig:pretrained_comp_matrix_paper} illustrate these differences: LSP and the Baseline achieve comparable same-model retrieval performance along the main diagonal, but the Baseline fails to maintain this performance across model updates, resulting in \emph{AC}$=0$.
A different trade-off between PSP and LSP emerges in closed-set and open-set evaluation.
On ImageNet-1K, corresponding to the same data distribution used to pretrain all models, PSP consistently achieves higher \emph{AC}, \emph{AA}, and \emph{ACA} than LSP.
This behavior is consistent with the stronger alignment of PSP representations with the regular-simplex geometry and their lower within-class spread discussed in Sec.~\ref{sec:logits_theory}.
On Places365, whose data distribution differs from the pretraining one, LSP instead achieves higher \emph{AA} than PSP for both model sequences.
This result is consistent with the broader angular spread of LSP representations, which may provide greater flexibility under domain shift.
However, PSP retains substantially higher compatibility in this setting, achieving \emph{AC}$=0.90$ and $1.00$, compared with $0.20$ and $0.33$ for LSP, and consequently also obtains higher \emph{ACA}.

Overall, these experiments show that PSP and LSP can substantially improve cross-model retrieval compatibility even between models that were not explicitly trained to be compatible.
PSP provides the most consistent empirical compatibility, whereas LSP preserves strong retrieval accuracy and achieves higher open-set \emph{AA}, at the cost of lower compatibility than PSP.

These results reflect a structural property of the proposed representations: their common reference is determined by the label space rather than by any interaction between models.
Models trained on the same set of classes under the standard class-indexing convention of a dataset, as is the case for all torchvision models trained on ImageNet-1K, induce the same one-hot encoding and therefore share the same reference geometry, without requiring coordination during training or access to previous models.
Complementary compatibility matrices for the open-set evaluation and the ViT model sequence are reported in Appendix~\ref{sec:compatiiblity_matrix_ind_trained}.

\section{Conclusions}
\label{sec:conclusions}

In this paper, we introduce hyperspherical simplex representations derived from softmax outputs (PSP) and logits (LSP) for backward-compatible retrieval.
The proposed representations leverage the simplex structure induced in the model output space by softmax cross-entropy training with one-hot encoded class labels, which provides a common geometric reference that aligns corresponding class prototypes across model updates, without requiring compatibility-specific losses, learned mapping functions, a shared pre-allocated fixed classifier, or modifications to the model architecture.
When new classes are introduced, the resulting change in simplex geometry is compensated by a deterministic linear transformation that restores the alignment of the prototypes of previously observed classes.
We provide a theoretical analysis of the formal backward-compatibility inequalities, showing that, under a von Mises--Fisher model with class concentration increasing across updates, they are satisfied in expectation if corresponding class prototypes are aligned across updates and prototypes of different classes form angles greater than $\pi/2$.
Aligned regular-simplex representations centered at the origin satisfy both conditions; since PSP instantiates this geometry by construction and LSP under the logit-UFM, both satisfy the compatibility inequalities in expectation.
We show that PSP and LSP achieve competitive retrieval performance on standard backward-compatibility benchmarks, with particularly high compatibility in scenarios involving many model updates.
Experiments further demonstrate that the proposed representations improve cross-model retrieval compatibility between independently trained models obtained from a public repository, without any additional adaptation.
Finally, we introduce a top-$k$ sparsification strategy motivated by the rank-preserving property of the softmax to address the linear growth of the representation dimensionality with the number of classes.
Across the evaluated settings, top-$k$ sparsification substantially reduces representation storage, with negligible changes in retrieval accuracy and without degrading compatibility, supporting the use of PSP and LSP in large-scale retrieval scenarios.

\textbf{Limitations.}
PSP and LSP rely on the simplex structure induced in the model output space by softmax cross-entropy training with one-hot encoded class labels.
This requirement limits the direct applicability of our approach to models trained with alternative objectives, such as self-supervised pretraining without explicit classification heads or regression-based formulations.
However, this constraint can be mitigated by training a linear classifier on top of a frozen pretrained encoder using a modest amount of external labeled data (see Appendix~\ref{sec:linear_probe}), with limited computational overhead since only the classifier is optimized.

A second limitation concerns the consistency of the label space across model updates: our method assumes that the mapping between each class label and its output index remains unchanged.
Changes in class ordering or label definitions would disrupt the geometric correspondence on which the backward-compatibility of the proposed representations relies.
A reordering of the classes can be undone by a fixed permutation of the output coordinates, provided that the correspondence between old and new indices is known.
Moreover, model updates are commonly driven by additional samples for existing classes or by the introduction of new classes, corresponding to the domain-incremental and class-incremental scenarios of incremental learning~\citep{van2022three, lin2021clear} and to the update settings considered in backward-compatible representation learning~\citep{shen2020towards}; in both cases, the existing class indices are preserved.
% Moreover, in practice, datasets typically evolve more conservatively than models: while architectures and training paradigms may change frequently, dataset versioning mainly adds samples to existing classes or introduces new classes~\citep{lin2022continual}.

Finally, our theoretical analysis relies on modeling assumptions, namely von Mises--Fisher class distributions and increasing concentration across updates, which provide a tractable approximation of learned representations rather than an exact characterization, so that the resulting guarantees hold in expectation under these assumptions.
Empirically, PSP and LSP nonetheless substantially improve compatibility across the evaluated settings, indicating that the identified geometric conditions are informative in practice.

\section*{Broader Impact Statement}
Backward-compatible retrieval can reduce the computational, storage, and environmental costs associated with repeatedly re-indexing large galleries.
However, retaining legacy representations may also preserve biases or limitations inherited from earlier models, training data, or gallery composition.
Moreover, aggregate compatibility metrics do not guarantee uniform performance across subpopulations.
Future work should therefore consider disaggregated compatibility measures and establish when gallery re-indexing is preferable to preserving compatibility, particularly after updates intended to correct bias, modify class semantics, remove previously learned information, or address substantial distribution shifts.

\bibliography{main}
\bibliographystyle{tmlr}

\newpage

\appendix

\section{Proof of Proposition~\ref{theo:softmax_affects_logits}}
\label{sec:app_softmax_affects_logits}

\begin{prop*}[\ref{theo:softmax_affects_logits}]
Under the logit-UFM assumptions, as the softmax outputs converge toward the vertices of the probability simplex during training, the corresponding logit vectors align with the directions defined in
Eq.~\ref{eq:class_prototypes}.
\end{prop*}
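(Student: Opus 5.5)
We work directly with the gradient dynamics of the logit-UFM. Fix a training sample of class $c$ with logit vector $\mathbf{z}\in\mathbb{R}^{C^k}$, initialized at $\mathbf{z}(0)=\mathbf{0}$. Since each logit is an independent optimization variable, the loss decouples across samples. Each $\mathbf{z}$ therefore evolves under gradient descent on its own cross-entropy $\mathcal{L}(\mathbf{z})=-\log\sigma(\mathbf{z})_c$. The standard computation gives $\nabla_{\mathbf{z}}\mathcal{L}=\sigma(\mathbf{z})-\mathbf{e}_c$, so the update is
\[
\mathbf{z}(s+1)=\mathbf{z}(s)+\eta\big(\mathbf{e}_c-\sigma(\mathbf{z}(s))\big).
\]
The plan is to show that this trajectory stays on the ray spanned by $\mathbf{v}^k_c=\mathbf{e}_c-\mathbf{o}^k$ of Eq.~\ref{eq:class_prototypes}, moving outward along it.

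\textbf{Step 1 (invariant structure by symmetry).} We prove by induction that $\mathbf{z}(s)=a(s)\mathbf{e}_c+b(s)\mathbf{1}$ for scalars $a(s),b(s)$, with $a(s)\ge 0$. Equivalently, all non-target coordinates of $\mathbf{z}(s)$ are equal. At $s=0$ this holds with $a=b=0$. If all non-target logits are equal, then all non-target softmax entries are equal, say $q=(1-p)/(C^k-1)$ with $p=\sigma(\mathbf{z})_c$. In that case
\[
\mathbf{e}_c-\sigma(\mathbf{z})=(1-p)\mathbf{e}_c-q(\mathbf{1}-\mathbf{e}_c),
\]
which again has the form $\alpha\mathbf{e}_c+\beta\mathbf{1}$. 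This preserves the structure, and the increment in $a$ is $\eta(1-p+q)>0$.

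\textbf{Step 2 (centering).} Summing the coordinates of the increment gives $\mathbf{1}^\top(\mathbf{e}_c-\sigma(\mathbf{z}))=1-1=0$, so $\mathbf{1}^\top\mathbf{z}(s)=0$ for all $s$. The logits therefore remain centered at the origin. Combining this with Step 1 forces $b=-a/C^k$. Hence
\[
\mathbf{z}(s)=a(s)\big(\mathbf{e}_c-\tfrac{1}{C^k}\mathbf{1}\big)=a(s)\,\mathbf{v}^k_c,
\]
so the logit is exactly a nonnegative multiple of the prototype direction at every step.

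\textbf{Step 3 (divergence along the ray).} On this ray we have $p=e^{a}/(e^{a}+C^k-1)$. This quantity tends to $1$, meaning the softmax output converges to the vertex $\mathbf{e}_c$, if and only if $a\to\infty$. By Step 1, $a$ is strictly increasing. If $a$ were bounded, it would converge, which would force its increments $\eta(1-p+q)$ to vanish, i.e.\ $p\to1$; that contradicts boundedness. So $a\to\infty$, and $\mathbf{z}/\|\mathbf{z}\|_2=\mathbf{v}^k_c/\|\mathbf{v}^k_c\|_2$ identically along the path. This is the claimed alignment, which then holds in the strong sense of exact collinearity rather than only asymptotically.

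\textbf{Main obstacle.} The difficulty is making precise what the logit-UFM assumptions include: independence across samples, zero initialization, and no weight decay or coupling. With a nonzero initialization the symmetry in Step 1 breaks. In that case we would fall back on an asymptotic argument. The centered component of $\mathbf{z}$ grows without bound along $\mathbf{v}^k_c$, while its deviation from the ray stays bounded, because the non-target gradient components $\sigma_j$ decay summably. This would give directional convergence in the limit. For the zero-initialization statement as written, Steps 1–3 suffice.
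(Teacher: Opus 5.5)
Your proof is correct and follows essentially the same route as the paper's: an induction along the gradient-descent trajectory from the zero initialization showing that the logit stays a nonnegative multiple of $\mathbf{v}^k_c$, with $p=e^{a}/(e^{a}+C^k-1)$ linking convergence to the vertex with $a\to\infty$. The differences are minor. You reach the ray via two invariants (equal non-target coordinates and $\mathbf{1}^\top\mathbf{z}=0$) rather than the paper's direct identity $\mathbf{e}_c-\sigma(\mathbf{z})=\frac{C(1-p)}{C-1}\mathbf{v}^k_c$, and for a constant step size you prove $a\to\infty$ rather than taking convergence of the softmax output to the vertex as a hypothesis.
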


\begin{proof}
Consider a sample belonging to class $c$ at model update step $k$, with
$1 \leq c \leq C^k$. For notational simplicity, let $C=C^k$, and
let $\mathbf{z}^{(r)}\in\mathbb{R}^{C}$ denote its logit vector at
optimization step $r$. We denote by $\mathbf{e}_c$ the corresponding
one-hot encoded label vector.

Motivated by the UFM analyses of \citet{mixon2022neural,fang2021exploring}, we consider a logit-space adaptation in which the shared backbone and classifier are not modeled explicitly. Instead, the logit vector associated with each training sample is treated as an independent unconstrained optimization variable and is optimized directly by gradient descent on the softmax cross-entropy loss. This idealized abstraction (logit-UFM) allows us to isolate the geometry induced by the loss from the additional coupling introduced by the network architecture and its shared parameters.
Within this abstraction, we consider an equiprobability initialization by placing each logit vector at the origin of the logit space. Consequently, the corresponding softmax outputs are the uniform probability distribution i.e., $\mathbf{o}^k = \big[1/C^k, ..., 1/C^k \big]$.
The gradient of the softmax
cross-entropy loss with respect to the $j$-th logit is
\[
\frac{\partial\mathcal{L}}{\partial z_j}
=
\sigma(\mathbf{z})_j-(\mathbf{e}_c)_j
=
\begin{cases}
\sigma(\mathbf{z})_c-1, & \text{if } j=c,\\
\sigma(\mathbf{z})_j,   & \text{if } j\neq c.
\end{cases}
\]
Consequently, gradient descent updates the logit vector according to
$
\mathbf{z}^{(r+1)}
=
\mathbf{z}^{(r)}
+
\eta_r
\left(
\mathbf{e}_c-\sigma(\mathbf{z}^{(r)})
\right),
$
where $\eta_r>0$ denotes the learning rate.

The first gradient update is
\[
z_j^{(1)}
=
\begin{cases}
\eta_0\left(1-\dfrac{1}{C}\right),
& \text{if } j=c,\\[2mm]
-\dfrac{\eta_0}{C},
& \text{if } j\neq c,
\end{cases}
\]
and hence
\[
\mathbf{z}^{(1)}
=
\eta_0
\left(
\mathbf{e}_c-\frac{1}{C}\mathbf{1}
\right)
=
\eta_0\mathbf{v}_c^k.
\]

We now show by induction that this alignment is preserved throughout
optimization. Suppose that, at iteration $r$,
$
\mathbf{z}^{(r)}=a_r\mathbf{v}_c^k
$
for some scalar $a_r\geq 0$.
Here, $a_r$ is the scale of the logit vector along the prototype direction $\mathbf{v}_c^k$: it determines its norm, $\|\mathbf{z}^{(r)}\|_2 = a_r\|\mathbf{v}_c^k\|_2$, and equals the margin $z^{(r)}_c - z^{(r)}_j$ between the target logit and each non-target logit $j \neq c$. At initialization, $a_0 = 0$, and after the first update, $a_1 = \eta_0$.

Since
\[
\mathbf{v}_c^k
=
\mathbf{e}_c-\frac{1}{C}\mathbf{1},
\]
all non-target components of $\mathbf{z}^{(r)}$ are equal and therefore
have identical softmax probabilities.

Let
\[
p_r=\sigma(\mathbf{z}^{(r)})_c
\]
denote the target-class probability. Because the softmax probabilities
sum to one, for every $j\neq c$,
\[
\sigma(\mathbf{z}^{(r)})_j
=
\frac{1-p_r}{C-1}.
\]
Consequently,
\[
\mathbf{e}_c-\sigma(\mathbf{z}^{(r)})
=
\frac{C(1-p_r)}{C-1}
\left(
\mathbf{e}_c-\frac{1}{C}\mathbf{1}
\right)
=
\frac{C(1-p_r)}{C-1}\mathbf{v}_c^k.
\]
Substituting this expression into the gradient update yields
\[
\mathbf{z}^{(r+1)}
=
\left(
a_r+
\eta_r\frac{C(1-p_r)}{C-1}
\right)
\mathbf{v}_c^k.
\]
Hence, $\mathbf{z}^{(r+1)}$ is also a non-negative scalar multiple of
$\mathbf{v}_c^k$. Thus, by induction, the logit vector remains a non-negative scalar multiple of $\mathbf{v}_c^k$ at every optimization step.
Since
\[
p_r=\frac{\exp(a_r)}{\exp(a_r)+C-1}
\]
is continuous and strictly increasing in $a_r$ on $[0,\infty)$, with
$p_r\to 1$ if and only if $a_r\to\infty$, the hypothesis that the
softmax output converges to the vertex $\mathbf{e}_c$ (i.e.\
$p_r\to 1$) forces $a_r\to\infty$. Throughout this entire
trajectory---including in this limit---$\mathbf{z}^{(r)}=a_r\mathbf{v}_c^k$, so the logit direction remains aligned with
\[
\mathbf{v}_c^k
=
\mathbf{e}_c-\mathbf{o}^k.
\]
\end{proof}

\begin{figure*}[t]
    \centering
    % \hspace{-10pt}
    \begin{subfigure}{\linewidth}
    \centering
    \includegraphics[width=0.9\linewidth, trim={0 21.7cm 0 0cm}, clip]{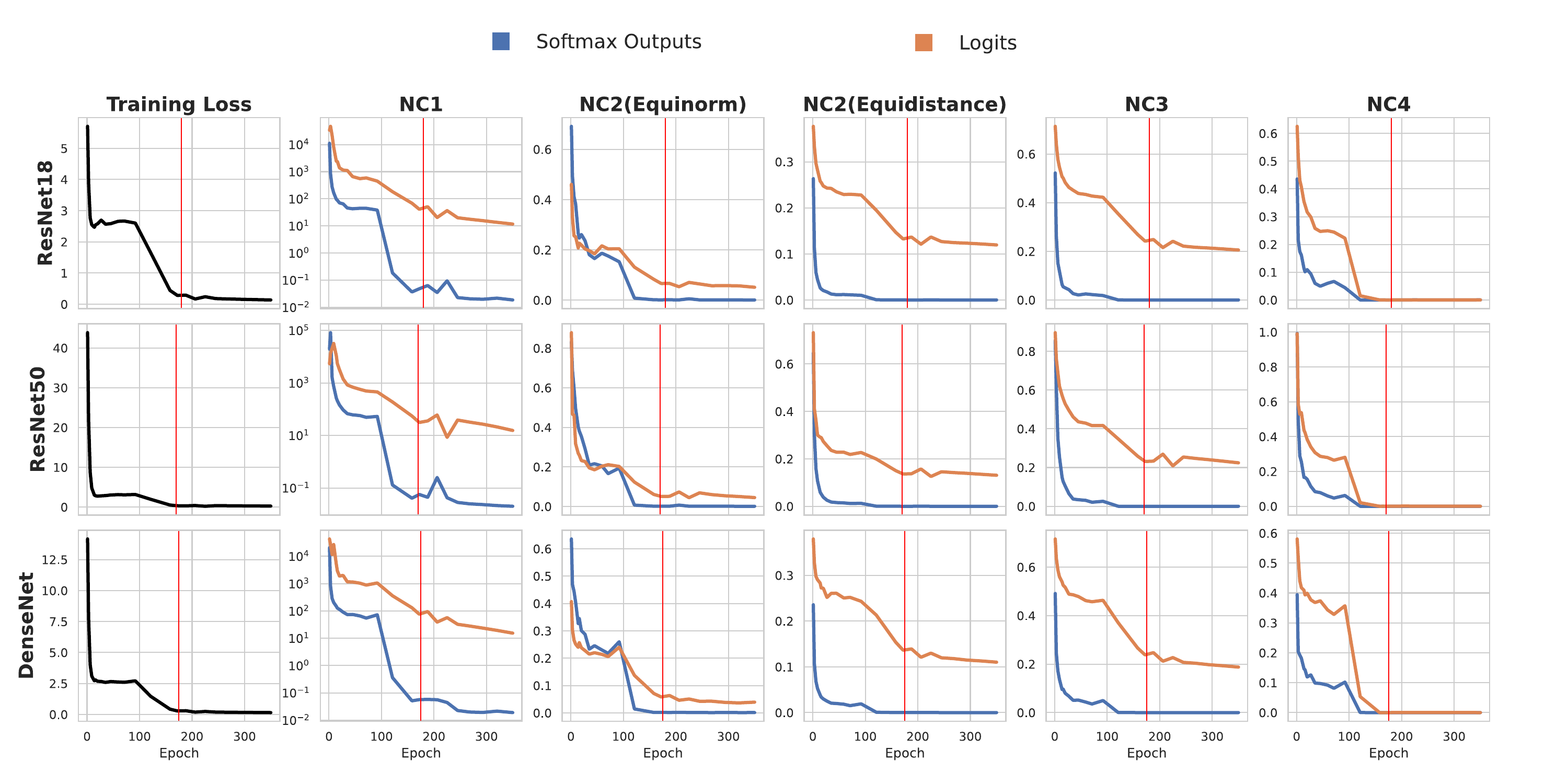}
    \includegraphics[width=0.9\linewidth, trim={0 0 0 10.5cm}, clip]{images/nc_cifar_2_no_error.pdf}
    \caption{CIFAR-100}
    \label{fig:neural_collapse_proof_cifar}
    \end{subfigure}
    \begin{subfigure}{\linewidth}
    \centering
    \includegraphics[width=0.9\linewidth, trim={0 0 0 10.5cm}, clip]{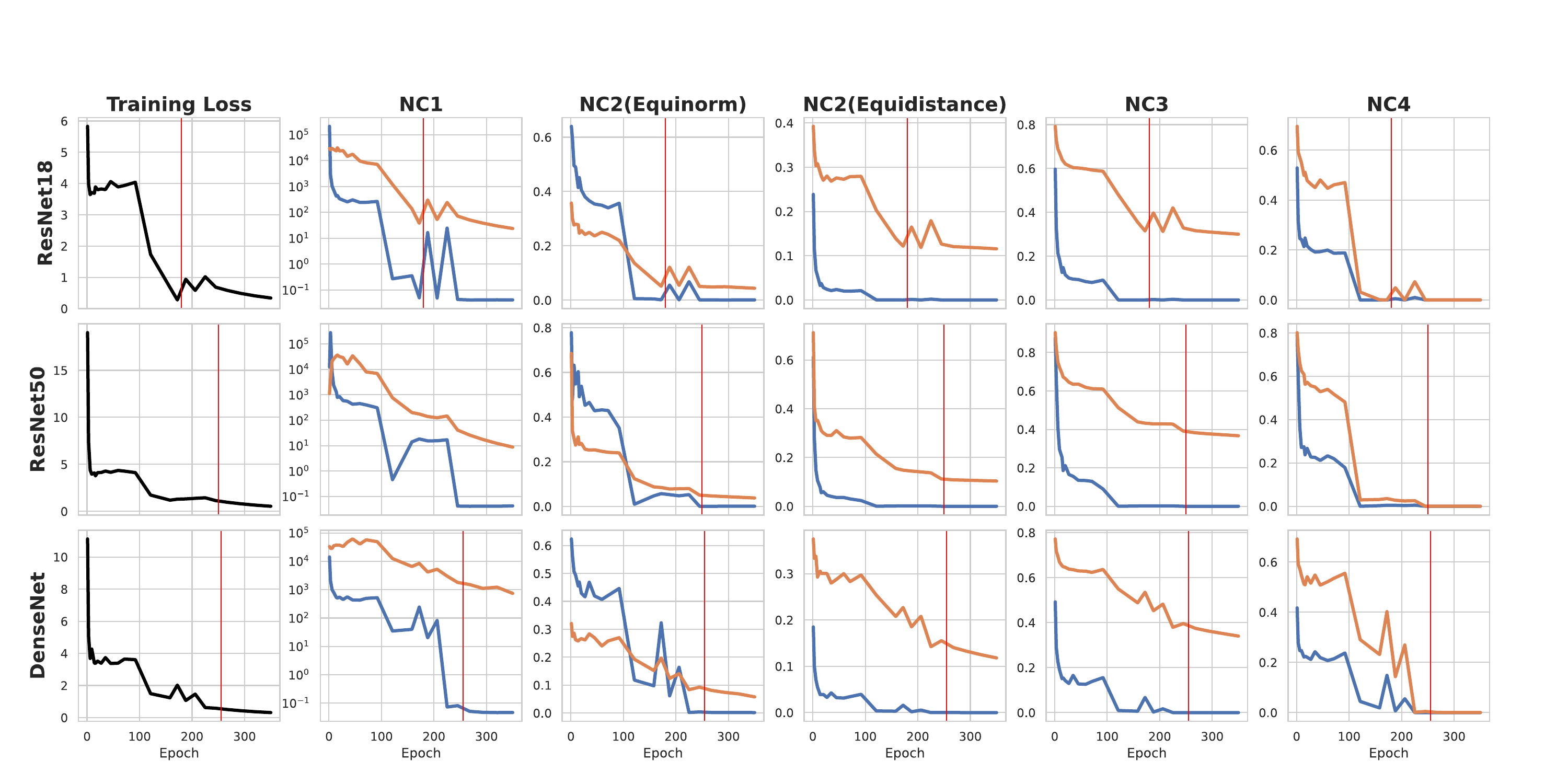}
    \caption{Tiny-ImageNet-200}
    \label{fig:neural_collapse_proof_tiny}
    \end{subfigure}
    \caption{Plots of  training loss and Neural Collapse hypotheses for feature representations derived from softmax outputs (blue) and logits (orange) on CIFAR-100 and Tiny-ImageNet-200 with ResNet-50 and DenseNet. The terminal phase of training is shown with a vertical red line.
    }
    \label{fig:neural_collpase_proof_cifar_tiny}
\end{figure*}

\section{Convergence of PSP and LSP to a Simplex Configuration}\label{sec:app_neural_collapse_logits}

In the following, we present additional experiments beyond those discussed in Fig.~\ref{fig:neural_collpase_proof}, illustrating the NC metrics for softmax outputs (blue curves) and logits (orange curves) during training for ResNet-50 and DenseNet on the CIFAR-100 and Tiny-ImageNet-200 datasets.

The results in Fig.~\ref{fig:neural_collpase_proof_cifar_tiny} present plots that are consistent with those shown in Fig.~\ref{fig:neural_collpase_proof} for ResNet-18 on CIFAR-100, confirming that logits exhibit larger spread of feature distributions and less alignment with the simplex geometry than softmax outputs.

\section{Mathematical Derivation of Eq.~\ref{eq:misalignement}}\label{sec:app_theo_misalignment_short}

Let $\mathbf{v}^{k}_c = \mathbf{e}_c - \mathbf{o}^k \in \mathbb{R}^{C^k}$ and $\mathbf{v}^{t}_c = \mathbf{e}_c - \mathbf{o}^t \in \mathbb{R}^{C^t}$ with $C^k\le C^t$ the prototypes of class $c$ at two different steps $t$ and $k$. To compare prototypes across different dimensions, we extend $\mathbf{v}^{k}_c$ to $\mathbb{R}^{C^t}$ by zero-padding (not affecting the norm of a vector):
$\tilde{\mathbf{v}}^{\,k}_c = [\mathbf{v}^{k}_c \;|\;\mathbf{0}_{C^t - C^k}] \in \mathbb{R}^{C^t}$.

We compute the dot product between the two prototypes:
\begin{align}
\tilde{\mathbf{v}}^{\,k}_c\!\cdot\!\mathbf{v}^t_c=(\mathbf{e}_c-\mathbf{o}^k)\cdot(\mathbf{e}_c-\mathbf{o}^t)=1-\tfrac{1}{C^t}-\tfrac{1}{C^k}+\tfrac{1}{C^t}=1-\tfrac{1}{C^k}=\frac{C^k-1}{C^k}. \nonumber  
\end{align}
Since the norms of the two prototypes are:
$$
\|\tilde{\mathbf{v}}^{\,k}_c\|^2=\frac{C^k-1}{C^k},\qquad
\|\mathbf{v}^t_c\|^2=\frac{C^t-1}{C^t},
$$
their cosine similarity is:
$$
\cos(\tilde{\mathbf{v}}^{\,k}_c,\mathbf{v}^t_c)
=\frac{\frac{C^k-1}{C^k}}{\sqrt{\frac{C^k-1}{C^k}}\sqrt{\frac{C^t-1}{C^t}}}
=\sqrt{\frac{C^k-1}{C^k}\cdot\frac{C^t}{C^t-1}}.
$$
Therefore, the misalignment angle between $\tilde{\mathbf{v}}^{\,k}_c$ and $\mathbf{v}^{t}_c$ is provided by the following formula:
$$
\theta_{c}^{k\to t}
=\arccos\!\left(\sqrt{\frac{C^k-1}{C^k}\cdot\frac{C^t}{C^t-1}}\right).
$$

\section{Proof of Proposition~\ref{theo:orth_new_class}}\label{sec:orth_new_class}
\begin{prop1*}[\ref{theo:orth_new_class}]
  When updating the model from step $k$ to step $t$, if the number of classes increases from $C^k$ to $C^t$, the prototypes associated with the newly introduced classes
  at step $t$ are orthogonal to the subspace spanned by the class prototypes at step $k$.
\end{prop1*}
\begin{proof}

We first demonstrate the proposition in the case where only one class is added between step \(k\) and step \(t\).
For simplicity in notation, we denote \(C^k=d\), resulting in \(C^t=C^k+1=d+1\).
We assume that existing classes retain their coordinate indices across the model update.
If \(d=1\), the old prototype is zero and the proposition holds trivially; hence, we assume \(d\geq2\) in the following calculation.

From Eq.~\ref{eq:class_prototypes}, a prototype vector of the $c$-th class at step $k$ is denoted as $\mathbf{v}^k_c=\mathbf{e}_c-\mathbf{o}^k$ in $\mathbb{R}^d$, where
$1\leq c\leq d$. Each component $v^k_{c,j}$ is:
\begin{equation}
\label{eq:v_components}
v^k_{c,j} =
\begin{cases}
  1-\frac{1}{d} & \text{if } j=c,\\
  -\frac{1}{d} & \text{otherwise},
\end{cases}
\qquad j=1,\ldots,d.
\end{equation}

Adding a new class in the training set at step $t$ adds a new vertex $\mathbf{e}_{d+1}$ in the probability simplex, shifting its center to:
$\mathbf{o}^{t}=\frac{1}{d+1}\sum_{c=1}^{d+1}\mathbf{e}_c$.
The prototype vector of the new class $d+1$ is then defined as $\mathbf{v}^t_{d+1}=\mathbf{e}_{d+1}-\mathbf{o}^t$.

To measure the angle between the new prototype and the old prototypes, we compute the cosine similarity between $\mathbf{v}^t_{d+1}$ and the generic prototype $\mathbf{v}
^k_c$ after zero-padding the latter.

According to Eq.~\ref{eq:v_components}, the norms of \(\mathbf{v}^k_c\) and \(\mathbf{v}^t_{d+1}\) are respectively:
\begin{equation}
\begin{aligned}
\|\mathbf{v}^k_c\|
&= \sqrt{\sum_{j=1}^{d}\left(v^k_{c,j}\right)^2}\\
&= \sqrt{\sum_{\substack{j=1\\j\neq c}}^{d}
  \left(v^k_{c,j}\right)^2+\left(v^k_{c,c}\right)^2}\\
&= \sqrt{\sum_{\substack{j=1\\j\neq c}}^{d}
  \frac{1}{d^2}+\frac{(d-1)^2}{d^2}}
= \sqrt{\frac{d-1}{d^2}+\frac{(d-1)^2}{d^2}}
= \sqrt{\frac{d-1}{d}}
\end{aligned}
\end{equation}
and
\begin{equation}
\begin{aligned}
\|\mathbf{v}^t_{d+1}\|
&= \sqrt{\sum_{j=1}^{d+1}\left(v^t_{d+1,j}\right)^2}\\
&= \sqrt{\sum_{j=1}^{d}\left(v^t_{d+1,j}\right)^2
  +\left(v^t_{d+1,d+1}\right)^2}\\
&= \sqrt{\sum_{j=1}^{d}\frac{1}{(d+1)^2}
  +\frac{d^2}{(d+1)^2}}
= \sqrt{\frac{d}{d+1}}.
\end{aligned}
\end{equation}

The components of the corresponding normalized vectors are respectively:
\begin{equation}
\label{eq:norm_old_prototype}
\frac{v^k_{c,j}}{\|\mathbf{v}^k_c\|}
=
\begin{cases}
  \frac{1-\frac{1}{d}}{\sqrt{\frac{d-1}{d}}}
  & \text{if } j=c,\\
  \frac{-\frac{1}{d}}{\sqrt{\frac{d-1}{d}}}
  & \text{otherwise},
\end{cases}
\qquad j=1,\ldots,d,
\end{equation}
and
\begin{equation}
\label{eq:norm_new_prototype}
\frac{v^t_{d+1,j}}{\|\mathbf{v}^t_{d+1}\|}
=
\begin{cases}
  \frac{1-\frac{1}{d+1}}{\sqrt{\frac{d}{d+1}}}
  & \text{if } j=d+1,\\
  \frac{-\frac{1}{d+1}}{\sqrt{\frac{d}{d+1}}}
  & \text{otherwise},
\end{cases}
\qquad j=1,\ldots,d+1.
\end{equation}

In order to compute the dot product, \(\mathbf{v}^k_c\) is padded with a zero component (this operation does not change its norm). Retaining the same notation for the embedded vector, we write:
\[
\mathbf{v}^k_c=\left(\mathbf{e}_c-\mathbf{o}^k,0\right).
\]
Then, the dot product can be expressed as follows, with the components in the $c$-th and $(d+1)$-th positions written separately:
\begin{equation}
\label{eq:dot_product_teo1}
\begin{aligned}
\frac{\mathbf{v}^k_c}{\|\mathbf{v}^k_c\|}
\cdot
\frac{\mathbf{v}^t_{d+1}}{\|\mathbf{v}^t_{d+1}\|}
&=
\sum_{\substack{j=1\\j\neq c}}^{d}
\frac{v^k_{c,j}}{\|\mathbf{v}^k_c\|}
\frac{v^t_{d+1,j}}{\|\mathbf{v}^t_{d+1}\|}\\
&+
\frac{v^k_{c,c}}{\|\mathbf{v}^k_c\|}
\frac{v^t_{d+1,c}}{\|\mathbf{v}^t_{d+1}\|}
+
\frac{v^k_{c,d+1}}{\|\mathbf{v}^k_c\|}
\frac{v^t_{d+1,d+1}}{\|\mathbf{v}^t_{d+1}\|}.
\end{aligned}
\end{equation}

Since the component $v^k_{c,d+1}=0$, the expression in Eq.~\ref{eq:dot_product_teo1} simplifies to:
\begin{equation*}
\frac{\mathbf{v}^k_c}{\|\mathbf{v}^k_c\|}
\cdot
\frac{\mathbf{v}^t_{d+1}}{\|\mathbf{v}^t_{d+1}\|}
=
\sum_{\substack{j=1\\j\neq c}}^{d}
\frac{v^k_{c,j}}{\|\mathbf{v}^k_c\|}
\frac{v^t_{d+1,j}}{\|\mathbf{v}^t_{d+1}\|}
+
\frac{v^k_{c,c}}{\|\mathbf{v}^k_c\|}
\frac{v^t_{d+1,c}}{\|\mathbf{v}^t_{d+1}\|}.
\end{equation*}
Using Eq.~\ref{eq:norm_old_prototype} and Eq.~\ref{eq:norm_new_prototype}, we obtain:
\begin{equation*}
\begin{aligned}
\frac{\mathbf{v}^k_c}{\|\mathbf{v}^k_c\|}
\cdot
\frac{\mathbf{v}^t_{d+1}}{\|\mathbf{v}^t_{d+1}\|}
&=
\frac{d-1}{d(d+1)\sqrt{\frac{d-1}{d}}\sqrt{\frac{d}{d+1}}}
-
\frac{d-1}{d(d+1)\sqrt{\frac{d-1}{d}}\sqrt{\frac{d}{d+1}}}
=0.
\end{aligned}
\end{equation*}

This demonstrates that a generic prototype vector $\mathbf{v}^k_c$ and the new prototype vector $\mathbf{v}^t_{d+1}$ are orthogonal.

More generally, if $C^t=d+m$ with $m\geq1$, every new-class prototype $\mathbf{v}^t_a$, $a=d+1,\ldots,d+m$, has its first $d$ components equal to $-1/(d+m)$. After padding
each old prototype with $m$ zeros, we obtain:
\[
\mathbf{v}^k_c\cdot\mathbf{v}^t_a
=
-\frac{1}{d+m}\sum_{j=1}^{d}v^k_{c,j}
=
-\frac{1}{d+m}
\left(1-\frac{1}{d}-\frac{d-1}{d}\right)
=0.
\]
Since this holds for every $c=1,\ldots,d$, each new-class prototype is orthogonal to the subspace spanned by the zero-padded class prototypes at step $k$.
\end{proof}

\section{Proof of Proposition~\ref{theo:projections_align}}\label{sec:app_theo_projections}

\begin{prop1*}[\ref{theo:projections_align}]
% Let \( \mathbf{v}^{t}_y \in \mathbb{R}^{C^t} \) be the prototype vector for class \( y \) within the probability simplex \( \Delta^{C^t-1} \). Define its projection as \( \mathbf{u}^t_y = \mathbf{P}_{t,k} \mathbf{v}^{t}_y \), where \( \mathbf{P}_{t,k} = [\mathbf{V}^k | \mathbf{0}] \) with  $\mathbf{V}^k = [ \mathbf{v}^k_y \; ]_{y=1}^{C^k} \in \mathbb{R}^{C^k \times C^k} $ and $ \mathbf{0} \in \mathbb{R}^{(C^t - C^k) \times C^k}$ is a zero matrix. Then the resulting projected prototype $\mathbf{u}^t_y$ is aligned with the prototype $\mathbf{v}^k_y$ within \(\Delta^{C^k-1}\).\\
For every class $c$ already present at step $k$, i.e., $1 \leq c \leq C^k$, the transformed prototype $\mathbf{u}^t_c = \mathbf{P}_{t,k}\mathbf{v}^t_c$ is aligned with its counterpart $\mathbf{v}^k_c$ at step $k$.
% where \( \mathbf{P}_{t,k} = \big[ \mathbf{V}^k | \mathbf{0} \big] \) with $ \mathbf{0}$ is a zero matrix of dimension \(C^k\times (C^t - C^k)\).
\end{prop1*}

\begin{proof}

Without loss of generality, we demonstrate the proposition in the case where only one class is added between step \( k \) and step \( t \). 
For simplicity in notation, we denote \( C^k = d \), resulting in \( C^t = C^k + 1 = d + 1 \).
% We assume that the ordering of class indices and their label association remains unchanged across model updates.

% In this case, it holds the projection matrix \(\mathbf{P}_{t,k} = \big[ \mathbf{V}^k | \mathbf{0} \big] \), where $\mathbf{V}^k = [ \mathbf{v}^k_c \; ]_{c=1}^{d}$ is a $d \times d$ matrix and  $\mathbf{0}$ is a zero matrix $ 1\times d$. 
% $\mathbf{P}_{t,k}$ projects a vector $\mathbf{v}^t_{y}$ from \(\mathbb{R}^{d+1}\) down to \(\mathbb{R}^{d}\).

Let $\mathbf{v}^k_c = \mathbf{e}_c - \mathbf{o}^k$ and $\mathbf{v}^t_{c} = \mathbf{e}_{c} - \mathbf{o}^t$ the  prototype vectors of the $c$-th class at step $k$ and $t$, respectively. Their components $v^k_{c,j}$ and $v^t_{c,j}$ are respectively defined as:
\begin{equation}
\label{eq:vkc_components}
v^k_{c,j} = 
\begin{cases}
    v^k_{c,c} = 1 -\frac{1}{d} \quad\,\, \text{if } j  \text{ equals the index $c$ of class } y \\
    v^k_{c,j} = -\frac{1}{d} \qquad\,\, \text{otherwise} 
\end{cases}
\end{equation}
and
\begin{equation}
\label{eq:vtc_components}
v^t_{c,j} = 
\begin{cases}
    v^t_{c,c} = 1 -\frac{1}{d+1} \quad\,\, \text{if } j  \text{ equals the index $c$ of class } y  \\
    v^t_{c,j} = -\frac{1}{d+1} \qquad\,\, \text{otherwise}  
\end{cases}
\end{equation}

We consider the vector $\mathbf{u}^t_c$ in \(\mathbb{R}^{d}\) obtained from $\mathbf{v}^t_{c}$ in $\mathbb{R}^{d+1}$ using the linear transformation matrix \(\mathbf{P}_{t,k}\):
\begin{equation}
\label{eq:utc}
\mathbf{u}^t_c = \mathbf{P}_{t,k} \, \mathbf{v}^{t}_c.
\end{equation}
where \(\mathbf{P}_{t,k} = [\mathbf{P}_{k,k}\,| \,\mathbf{0}]\) with $\mathbf{P}_{k,k} = [ \mathbf{v}^k_c \; ]_{c=1}^{d}$ a $d \times d$ matrix and  $\mathbf{0}$ a zero matrix $ d\times1$.
% Therefore, Eq.~\ref{eq:utc} can be rewritten as:
% \begin{equation}
% \label{eq:uct_withV}
% \mathbf{u}^t_c = \mathbf{V}^k \mathbf{v}^t_c
% \end{equation}
% which develops as 
% \begin{equation*}
% \mathbf{u}^t_c = 
% \begin{cases}
%     u^t_{c,c} \quad\,\, \text{if } j  \text{ equals the index $c$ of class } y  \\
%     u^t_{c,j}  \quad\,\, \text{otherwise}  
% \end{cases}
% \end{equation*}
% where the $c$-th component of $\mathbf{u}^t_c$ has been put into evidence.

% The following expressions hold for $u^t_{c,c}$ and $u^t_{c,j}$ respectively:
% \begin{equation}
% \label{eq:utcc}
% u^t_{c,c} 
% = v^k_{c,c}v^t_{c,c} + \sum_{\substack{j=1 \\ j \neq c}}^{d} v^k_{c,j}v^t_{c,j}
% \end{equation}
% \begin{equation}
% \label{eq:utcj}
% u^t_{c,j} = v^k_{c,j}v^t_{c,c} + v^k_{c,c}v^t_{c,j} + \sum_{\substack{j=1 \\ j \neq y}}^{d-1} v^k_{c,j}v^t_{c,j}
% \end{equation}
% where the $c$-th component of $\mathbf{u}^t_c$ has been put into evidence.

The following expressions hold for $u^t_{c,c}$ and $u^t_{c,j}$, respectively:
\begin{equation}
\label{eq:utcc}
u^t_{c,c} 
= v^k_{c,c}v^t_{c,c} + \sum_{\substack{j=1 \\ j \neq c}}^{d} v^k_{c,j}v^t_{c,j}
\end{equation}
\begin{equation}
\label{eq:utcj}
u^t_{c,j} = v^k_{c,j}v^t_{c,c} + v^k_{c,c}v^t_{c,j} 
+ \sum_{\substack{\ell=1 \\ \ell \neq c,j}}^{d} v^k_{c,\ell}v^t_{c,\ell},
\qquad j \neq c
\end{equation}
where the $c$-th component of $\mathbf{u}^t_c$ has been put into evidence in the first expression, while the second expression considers a generic component $j \neq c$.

Substituting the expressions in Eq.~\ref{eq:vkc_components} and Eq.~\ref{eq:vtc_components} into Eq.~\ref{eq:utcc} and Eq.~\ref{eq:utcj} respectively, we obtain the following equations:
{
\begin{equation*}
\begin{aligned}
u^t_{c,c} = \frac{d-1}{d} v^t_{c,c} - \frac{1}{d} \sum_{\substack{j=1 \\ j \neq c}}^{d} v^t_{c,j} = 
\frac{d-1}{d} \frac{d}{d+1} - \frac{1}{d}(d-1)\left(-\frac{1}{d+1}\right) = \frac{d-1}{d+1} + \frac{d-1}{d(d+1)} = 
% \frac{(d+1)(d-1)}{d(d+1)} = 
1 - \frac{1}{d} = v^k_{c,c}
\end{aligned}
\end{equation*}
}
and
{
\begin{equation*}
% \resizebox{1.02\linewidth}{!}{$
\begin{aligned}
u^t_{c,j} & = -\frac{1}{d}v^t_{c,c} + \frac{d-1}{d}v^t_{c,j} -\frac{1}{d}\sum_{\substack{j=1 \\ j \neq c}}^{d-1} v^t_{c,j} = -\frac{1}{d}\frac{d}{d+1} + \frac{d-1}{d}\left(-\frac{1}{d+1}\right) -\frac{1}{d}(d-2)\left(-\frac{1}{d+1}\right)
= -\frac{1}{d} = v^k_{c,j},
% = -\frac{1}{d+1} - \frac{d-1}{d(d+1)} + \frac{d-2}{d(d+1)}=
\end{aligned}
% $}
\end{equation*}
}
% \[
% = -\frac{d+d-1-d+2}{d(d+1)} = -\frac{1}{d} = \mathbf{v}^k_{c,j}.
% \]
i.e., the components of the vector $\mathbf{u}^t_c$ are identical to those of $\mathbf{v}^k_{c}$.

It results that the vector obtained from \(\mathbf{v}^{t}_c\) is aligned with the prototype vector \(\mathbf{v}^k_{c}\) through the matrix $\mathbf{P}_{t,k}$:
 \[
\mathbf{u}^t_c = \mathbf{P}_{t,k} \, \mathbf{v}^{t}_c = \mathbf{v}^k_{c}.
\]

This proof can be immediately extended to model updates where multiple classes are introduced.
\end{proof}

\section{Pseudocode}
\label{sec:pseudo_code}

This appendix describes how PSP and LSP are integrated into a retrieval system.
Importantly, PSP and LSP are two alternative feature definitions, not post-hoc transformations to be applied to arbitrary embeddings after deployment.
Whether to use features derived from softmax outputs (PSP, Eq.~\ref{eq:softmax_outputs_as_features}) or logits (LSP, Eq.~\ref{eq:logits_as_features}) must therefore be chosen before the first deployment of the retrieval system and then used consistently to build the gallery index and process future queries.
Accordingly, gallery items are indexed once using the chosen representation and stored in the retrieval database.
% The deterministic transformation $\mathbf{P}_{t,k}$ is only required when comparing representations extracted at different update steps, e.g., when the gallery was indexed with a model at step $k$ and an incoming query is processed by an updated model at step $t$.

The deterministic linear transformation $\mathbf{P}_{t,k}$ is required only when the updated model has a larger output space than the gallery model, i.e., when $C^t>C^k$. In this case, it maps the query representation produced at step $t$ into the $C^k$-dimensional feature space of the gallery indexed at step $k$.
When $C^t=C^k$, no cross-update transformation is required. PSP queries are computed using the standard within-step projection $\mathbf{P}_{k,k}$, which is part of the PSP feature definition, whereas LSP queries are obtained by directly normalizing the logits.
In the backward-compatible setting, the gallery is not re-indexed.
This is the inference-time protocol used in our compatibility experiments.

\begin{algorithm}[t]
\caption{Initial Gallery Indexing with PSP or LSP}
\label{alg:index_psp_lsp}
\begin{algorithmic}[1]
\Require Model at gallery step $k$ with $C^k$ classes; gallery item $\mathbf{x}$; representation type $\texttt{repr} \in \{\texttt{PSP}, \texttt{LSP}\}$
\Ensure Stored gallery feature $\mathbf{h}^k(\mathbf{x})$
\State Compute logits $\mathbf{z}^k(\mathbf{x})$
\If{$\texttt{repr} = \texttt{PSP}$}
    \State Compute softmax outputs $\mathbf{p}^k(\mathbf{x}) \gets \sigma(\mathbf{z}^k(\mathbf{x}))$
    \State $\tilde{\mathbf{h}}^k(\mathbf{x}) \gets \mathbf{P}_{k,k}\,\mathbf{p}^k(\mathbf{x})$
    \If{$\|\tilde{\mathbf{h}}^k(\mathbf{x})\|_2 > 0$}
        \State $\mathbf{h}^k(\mathbf{x}) \gets \tilde{\mathbf{h}}^k(\mathbf{x}) / \|\tilde{\mathbf{h}}^k(\mathbf{x})\|_2$
    \Else
        \State $\mathbf{h}^k(\mathbf{x}) \gets \mathbf{0}$
    \EndIf
\ElsIf{$\texttt{repr} = \texttt{LSP}$}
    \If{$\|\mathbf{z}^k(\mathbf{x})\|_2 > 0$}
        \State $\mathbf{h}^k(\mathbf{x}) \gets \mathbf{z}^k(\mathbf{x}) / \|\mathbf{z}^k(\mathbf{x})\|_2$
    \Else
        \State $\mathbf{h}^k(\mathbf{x}) \gets \mathbf{0}$
    \EndIf
\EndIf
\State Store $\mathbf{h}^k(\mathbf{x})$ in the gallery index
\end{algorithmic}
\end{algorithm}

Alg.~\ref{alg:index_psp_lsp} describes the initial gallery indexing procedure with PSP or LSP.
Alg.~\ref{alg:compatible_query_psp_lsp} describes the query-time compatibility correction used only when features extracted at step $t$ must be compared against a gallery indexed at step $k$.

\begin{algorithm}[t]
\caption{Query-Side Compatibility Across Model Updates}
\label{alg:compatible_query_psp_lsp}
\begin{algorithmic}[1]
\Require Gallery indexed at step $k$; updated model at step $t$ with
$C^t \geq C^k$; query $\mathbf{x}$; representation type
$\texttt{repr} \in \{\texttt{PSP},\texttt{LSP}\}$
\Ensure Query feature compatible with the gallery feature space at step $k$

\State Compute logits $\mathbf{z}^t(\mathbf{x})$

\If{$\texttt{repr}=\texttt{PSP}$}
    \State $\mathbf{p}^t(\mathbf{x}) \gets
    \sigma(\mathbf{z}^t(\mathbf{x}))$
    \If{$C^t>C^k$}
        \State $\tilde{\mathbf{h}}^{t\rightarrow k}(\mathbf{x})
        \gets \mathbf{P}_{t,k}\mathbf{p}^t(\mathbf{x})$
    \Else
        \State $\tilde{\mathbf{h}}^{t\rightarrow k}(\mathbf{x})
        \gets \mathbf{P}_{k,k}\mathbf{p}^t(\mathbf{x})$
    \EndIf

\ElsIf{$\texttt{repr}=\texttt{LSP}$}
    \If{$C^t>C^k$}
        \State $\tilde{\mathbf{h}}^{t\rightarrow k}(\mathbf{x})
        \gets \mathbf{P}_{t,k}\mathbf{z}^t(\mathbf{x})$
    \Else
        \State $\tilde{\mathbf{h}}^{t\rightarrow k}(\mathbf{x})
        \gets \mathbf{z}^t(\mathbf{x})$
    \EndIf
\EndIf

\If{$\|\tilde{\mathbf{h}}^{t\rightarrow k}(\mathbf{x})\|_2>0$}
    \State $\mathbf{h}^{t\rightarrow k}(\mathbf{x})
    \gets
    \tilde{\mathbf{h}}^{t\rightarrow k}(\mathbf{x})/
    \|\tilde{\mathbf{h}}^{t\rightarrow k}(\mathbf{x})\|_2$
\Else
    \State $\mathbf{h}^{t\rightarrow k}(\mathbf{x})\gets\mathbf{0}$
\EndIf

\State Compare $\mathbf{h}^{t\rightarrow k}(\mathbf{x})$ with
$\{\mathbf{h}^k(\mathbf{g})\}_{\mathbf{g}\in\mathcal{G}}$
\State Return the ranked gallery items
\end{algorithmic}
\end{algorithm}

Alg.~\ref{alg:index_psp_lsp} is used to define and store gallery features when the retrieval system is first deployed.
If, after a model update, the gallery is fully re-indexed, the same algorithm is simply applied again to all gallery items using the updated model and the same chosen representation type.
By contrast, Alg.~\ref{alg:compatible_query_psp_lsp} is used only in the backward-compatible retrieval setting, where the gallery remains fixed at step $k$ and only the incoming queries are processed by the updated model at step $t$.
In this case, PSP/LSP are not re-applied to the stored gallery features; only the query-side representation is corrected through $\mathbf{P}_{t,k}$ to enable comparison with the gallery without re-indexing.

We emphasize that the deterministic transformation $\mathbf{P}_{t,k}$ is not part of the generic feature-definition step of the retrieval system.
It is only required when features extracted at different update steps with an increased number of classes must be compared.
During training, there is no dependence on previous models, and no additional losses, alignment modules, or architectural modifications are introduced.

\newpage

\section{Theoretical Analysis of the Backward-Compatibility Definition}\label{sec:app_theo_compatibility}

\begin{lemma}
\label{theo:lemma}
Let \( \mathbf{X}_1, \mathbf{X}_2  \in \mathbb{R}^d  \) be independent random vectors in the $(d-1)$-hypersphere, each following a von Mises--Fisher distribution {\rm vMF($\boldsymbol{\mu}, \kappa$)}.
Then the expected cosine distance between \( \mathbf{X}_1 \) and \( \mathbf{X}_2 \) depends on the concentrations $\kappa_1$, $\kappa_2$ and the mean directions $\boldsymbol{\mu}_1,\boldsymbol{\mu}_2$ of the two distributions as follows:
\begin{equation*}
\mathbb{E}\Big[1 - \mathbf{X}_1^{\top} \mathbf{X}_2\Big] = 1 - m_d(\kappa_1) \, m_d(\kappa_2) \, \boldsymbol{\mu}_1^\top \boldsymbol{\mu}_2.
\end{equation*}
\end{lemma}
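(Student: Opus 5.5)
The plan is to reduce the expectation to first moments by linearity and independence, and then use the closed-form mean of a von Mises--Fisher distribution. The statement writes vMF($\boldsymbol{\mu},\kappa$) generically, but the conclusion involves $(\boldsymbol{\mu}_1,\kappa_1)$ and $(\boldsymbol{\mu}_2,\kappa_2)$. I will therefore read it as $\mathbf{X}_i\sim$ vMF($\boldsymbol{\mu}_i,\kappa_i$) for $i=1,2$, with $\|\boldsymbol{\mu}_i\|=1$. I will define $m_d(\kappa)=A_d(\kappa)=I_{d/2}(\kappa)/I_{d/2-1}(\kappa)$, the mean resultant length, with $I_\nu$ the modified Bessel function of the first kind.

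First, by linearity,
\begin{equation*}
\mathbb{E}[1-\mathbf{X}_1^\top\mathbf{X}_2]=1-\mathbb{E}[\mathbf{X}_1^\top\mathbf{X}_2]=1-\sum_{j=1}^d\mathbb{E}[X_{1,j}X_{2,j}].
\end{equation*}
By independence, each summand factors as $\mathbb{E}[X_{1,j}]\,\mathbb{E}[X_{2,j}]$, so the whole term equals $\mathbb{E}[\mathbf{X}_1]^\top\mathbb{E}[\mathbf{X}_2]$. All moments are finite because the vectors lie on the unit sphere.

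Second, the core step is the identity $\mathbb{E}[\mathbf{X}]=A_d(\kappa)\,\boldsymbol{\mu}$ for $\mathbf{X}\sim$ vMF($\boldsymbol{\mu},\kappa$). I would prove it in two parts.

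\emph{Direction.} The density $c_d(\kappa)\exp(\kappa\boldsymbol{\mu}^\top\mathbf{x})$ is invariant under every orthogonal map fixing $\boldsymbol{\mu}$. Hence $\mathbb{E}[\mathbf{X}]$ is fixed by all such maps. For $d\ge2$ the only vectors fixed by this group are multiples of $\boldsymbol{\mu}$, for example by using reflections through hyperplanes containing $\boldsymbol{\mu}$. So $\mathbb{E}[\mathbf{X}]=\mathbb{E}[\boldsymbol{\mu}^\top\mathbf{X}]\,\boldsymbol{\mu}$.

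\emph{Magnitude.} Differentiate the normalization identity $\int\exp(\kappa\boldsymbol{\mu}^\top\mathbf{x})\,d\mathbf{x}=1/c_d(\kappa)$ with respect to $\kappa$. This gives $\mathbb{E}[\boldsymbol{\mu}^\top\mathbf{X}]=-c_d'(\kappa)/c_d(\kappa)$. Substituting $c_d(\kappa)=\kappa^{d/2-1}/\bigl((2\pi)^{d/2}I_{d/2-1}(\kappa)\bigr)$ and applying the Bessel recurrence $I_\nu'(\kappa)-\tfrac{\nu}{\kappa}I_\nu(\kappa)=I_{\nu+1}(\kappa)$ with $\nu=d/2-1$ yields $A_d(\kappa)$.

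Differentiating under the integral sign is justified because the integrand is smooth and bounded on the compact sphere. The case $\kappa=0$ (the uniform distribution) gives $A_d(0)=0$ and is consistent with the formula.

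Combining the two steps gives $\mathbb{E}[\mathbf{X}_1]^\top\mathbb{E}[\mathbf{X}_2]=m_d(\kappa_1)\,m_d(\kappa_2)\,\boldsymbol{\mu}_1^\top\boldsymbol{\mu}_2$, which is the claim. The main obstacle is the Bessel-function derivative computation. If needed, I would instead cite the standard result for the vMF mean (e.g., \citet{Banerjee2005ClusteringOT}), since only the form $m_d(\kappa)\boldsymbol{\mu}$ is needed for the lemma, with $m_d\in[0,1)$. I would also note that $m_d$ is increasing in $\kappa$, since the later Theorem~\ref{theo:compatibility} will rely on that monotonicity.
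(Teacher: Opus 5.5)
Your proposal is correct and follows the paper's proof: linearity and independence reduce the expectation to $\mathbb{E}[\mathbf{X}_1]^\top\mathbb{E}[\mathbf{X}_2]$, and the vMF first moment $\mathbb{E}[\mathbf{X}]=m_d(\kappa)\boldsymbol{\mu}$ then gives the result. The only difference is that the paper cites \citet{Banerjee2005ClusteringOT} for this first moment, whereas you derive it yourself (symmetry for the direction, differentiation of the normalizing constant plus the Bessel recurrence for the magnitude), and that derivation is correct.
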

\begin{proof}
    
Let $\mathbf{X}_1, \mathbf{X}_2$  be two independent random vectors from two vMF distributions with parameters respectively $(\boldsymbol{\mu}_1, \kappa_1)$ and $(\boldsymbol{\mu}_2, \kappa_2)$, where $\boldsymbol{\mu_1}$, $\boldsymbol{\mu_2}$ are unit vectors indicating the mean directions around which the data are concentrated and $\kappa_1$, $\kappa_2$ are non-negative scalars that determine the concentrations of the distributions around their mean (large values of $\kappa$ correspond to high concentration).
By their independence, the expected cosine similarity between  $\mathbf{X}_1, \mathbf{X}_2$ is given by the product of their expectations:
\begin{equation}
\label{eq:indp_vector_mean}
\mathbb{E}\left[ \mathbf{X}_1^{\top} \mathbf{X}_2 \right] = \mathbb{E}\left[ \mathbf{X}_1 \right]^{\top} \mathbb{E}\left[ \mathbf{X}_2 \right]
\end{equation}
where the expected values of $\mathbf{X}_1$ and $\mathbf{X}_2$ are respectively \citep{Banerjee2005ClusteringOT}:
\begin{align}
\mathbb{E}\left[ \mathbf{X}_1 \right] &= m_d\left( \kappa_1 \right) \boldsymbol{\mu}_1, \label{eq:mean_X} \\
\mathbb{E}\left[ \mathbf{X}_2 \right] &= m_d\left( \kappa_2 \right) \boldsymbol{\mu}_2, \label{eq:mean_Y} 
\end{align}
where $m_d(\kappa)$ is the expected cosine of the angle between the random vector and $\boldsymbol{\mu}$, which is expressed as: 
\begin{equation*}
m_d(\kappa) = \frac{I_{\frac{d}{2}}(\kappa)}{I_{\frac{d}{2} - 1}(\kappa)}
\end{equation*}
with \( I_\nu \) the modified Bessel function of the first kind of order \( \nu \).

From Eq.~\ref{eq:indp_vector_mean}, Eq.~\ref{eq:mean_X}, and Eq.~\ref{eq:mean_Y}, we obtain:
\begin{equation*}
\begin{aligned}
\mathbb{E}\left[ \mathbf{X}_1^{\top} \mathbf{X}_2 \right] &= \left( m_d\left( \kappa_1 \right) \boldsymbol{\mu}_1 \right)^{\top} \left( m_d\left( \kappa_2 \right) \boldsymbol{\mu}_2 \right) =  m_d\left( \kappa_1 \right) m_d\left( \kappa_2 \right) \boldsymbol{\mu}_1^{\top} \boldsymbol{\mu}_2.
\end{aligned}
\end{equation*}
where $\boldsymbol{\mu}_1^{\top} \boldsymbol{\mu}_2$  is the cosine of the angle between mean directions $\boldsymbol{\mu}_1$ and  $\boldsymbol{\mu}_2$.

Since the cosine distance between \( \mathbf{X}_1 \) and \( \mathbf{X}_2 \) is $ (1 - \mathbf{X}_1^\top \mathbf{X}_2)$, it follows that:
\begin{equation}
\label{eq:expected_cos_distance}
\begin{aligned}
\mathbb{E}\Big[1 - \mathbf{X}_1^{\top} \mathbf{X}_2\Big]
& = 1 - \mathbb{E}\Big[\mathbf{X}_1^{\top} \mathbf{X}_2\Big]
= 1 - m_d(\kappa_1) \, m_d(\kappa_2) \, \boldsymbol{\mu}_1^{\top} \boldsymbol{\mu}_2,
\end{aligned}
\end{equation}
which provides an analytical form that relates the expected distance between two random vectors to the parameters of the two vMF distributions. 
\end{proof}

\begin{theowithref}[\ref{theo:compatibility} (\textit{\textbf{Compatibility Theorem}})]

Consider hyperspherical representations obtained at update steps $k$ and $t$, with $t > k$.
Assume that (i) the features of each class $c$ are drawn independently from a von Mises--Fisher distribution {\rm vMF($\boldsymbol{\mu}_c,\kappa_c$)} whose mean direction $\boldsymbol{\mu}_c$ is the corresponding class prototype, and (ii) the concentration $\kappa_c$ of each class distribution increases from step $k$ to step $t$.
Then the compatibility inequalities of Eq.~\ref{eq:first} and Eq.~\ref{eq:second} are satisfied in expectation if (a) corresponding class prototypes are aligned across updates and (b) the angle between prototypes of different classes is greater than $\pi/2$.
\end{theowithref}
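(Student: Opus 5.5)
We reduce both inequalities of Def.~\ref{def:compatibility}, taken in expectation, to comparisons of closed-form expressions given by Lemma~\ref{theo:lemma}. Write $\boldsymbol{\mu}^k_c,\kappa^k_c$ and $\boldsymbol{\mu}^t_c,\kappa^t_c$ for the vMF parameters of class $c$ at steps $k$ and $t$. For a pair $(i,j)$ with labels $y_i=a$ and $y_j=b$, the features $\mathbf{h}^t_i$, $\mathbf{h}^k_i$ and $\mathbf{h}^k_j$ are independent draws, by assumption (i). Applying the lemma to each side of Eq.~\ref{eq:first} and Eq.~\ref{eq:second} gives
\begin{equation*}
\mathbb{E}\big[d(\mathbf{h}^t_i,\mathbf{h}^k_j)\big]=1-m_d(\kappa^t_a)\,m_d(\kappa^k_b)\,(\boldsymbol{\mu}^t_a)^\top\boldsymbol{\mu}^k_b,
\qquad
\mathbb{E}\big[d(\mathbf{h}^k_i,\mathbf{h}^k_j)\big]=1-m_d(\kappa^k_a)\,m_d(\kappa^k_b)\,(\boldsymbol{\mu}^k_a)^\top\boldsymbol{\mu}^k_b .
\end{equation*}
Condition (a) lets us replace $\boldsymbol{\mu}^t_a$ by $\boldsymbol{\mu}^k_a$, after applying $\mathbf{P}_{t,k}$ or zero-padding when the dimensions differ. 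The difference of the two expected distances then factors as
\begin{equation*}
\mathbb{E}\big[d(\mathbf{h}^t_i,\mathbf{h}^k_j)\big]-\mathbb{E}\big[d(\mathbf{h}^k_i,\mathbf{h}^k_j)\big]
=-\big(m_d(\kappa^t_a)-m_d(\kappa^k_a)\big)\,m_d(\kappa^k_b)\,(\boldsymbol{\mu}^k_a)^\top\boldsymbol{\mu}^k_b .
\end{equation*}
The sign of this difference is therefore fixed by three factors: the increment of $m_d$, the factor $m_d(\kappa^k_b)\ge 0$, and the prototype inner product.

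The first step is to establish monotonicity of $m_d(\kappa)=I_{d/2}(\kappa)/I_{d/2-1}(\kappa)$. We need $m_d$ to be nonnegative and strictly increasing on $[0,\infty)$, so that assumption (ii) yields $m_d(\kappa^t_a)>m_d(\kappa^k_a)$. This is a classical property of ratios of modified Bessel functions; we would cite it rather than reprove it.

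The second step is a probabilistic argument, which we would present as the first route. The quantity $m_d(\kappa)$ equals $\mathbb{E}[\boldsymbol{\mu}^\top\mathbf{X}]$, and the vMF density is an exponential family in the sufficient statistic $\boldsymbol{\mu}^\top\mathbf{X}$. Hence $m_d'(\kappa)=\mathrm{Var}(\boldsymbol{\mu}^\top\mathbf{X})>0$ for $d\ge 2$, and $m_d(0)=0$. This gives both nonnegativity and strict monotonicity without any Bessel-function inequalities, and it is the step that carries the real content.

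The two cases then follow. In the same-class case $a=b$, the inner product is $(\boldsymbol{\mu}^k_a)^\top\boldsymbol{\mu}^k_a=1$, so the difference is at most $0$ and Eq.~\ref{eq:first} holds in expectation. In the different-class case $a\neq b$, condition (b) gives $(\boldsymbol{\mu}^k_a)^\top\boldsymbol{\mu}^k_b<0$, so the difference is at least $0$ and Eq.~\ref{eq:second} holds in expectation.

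The main obstacle is interpretive rather than computational. We must state precisely that "in expectation" means comparing expectations over independent draws from the class-conditional vMFs. We must also note that the dimension $d$ is common to both steps after $\mathbf{P}_{t,k}$ is applied, so that the same $m_d$ appears throughout. The degenerate cases $\kappa^k_b=0$ or equal concentrations give equality rather than strict inequality, which is still consistent with $\le$ and $\ge$.
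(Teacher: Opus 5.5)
Your proposal is correct and follows essentially the same route as the paper. The paper likewise applies Lemma~\ref{theo:lemma} to both sides, invokes strict monotonicity of $m_d$ under assumption (ii), and concludes from $(\boldsymbol{\mu}^t_a)^\top\boldsymbol{\mu}^k_a=1$ in the same-class case and from a negative aligned inner product in the different-class case. Your only additions are the explicit factorization of the difference and a self-contained exponential-family justification $m_d'(\kappa)=\mathrm{Var}(\boldsymbol{\mu}^\top\mathbf{X})>0$, where the paper simply cites Banerjee et al.
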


\begin{proof}
Consider each feature of the model represented as an independent random vector. Then the compatibility inequalities of Eq.~\ref{eq:first} and Eq.~\ref{eq:second} in expectation can be expressed as: 
\begin{subequations}\label{eq:compatibility_def_expectation_pre}
\begin{align}
\mathbb{E} \left[ 1 - (\mathbf{X}_1^t)^\top \mathbf{X}_2^k \right] \leq \mathbb{E} \left[ 1 - (\mathbf{X}_1^k)^\top \mathbf{X}_2^k \right] \label{eq:compatibility_def_expectation_pre_same} \qquad & \text{for the case of same class} \\
\mathbb{E} \left[ 1 - (\mathbf{X}_1^t)^\top \mathbf{X}_2^k \right] \geq \mathbb{E} \left[ 1 - (\mathbf{X}_1^k)^\top \mathbf{X}_2^k \right] \label{eq:compatibility_def_expectation_pre_diff} \qquad & \text{for the case of different classes} 
\end{align}
\end{subequations}

where \( \mathbf{X}_1, \mathbf{X}_2  \in \mathbb{R}^d  \) are two independent random vectors with $\mathbb{E} \left[ 1 - (\mathbf{X}_1)^\top \mathbf{X}_2 \right]$ their expected cosine distance, and $t$ and $k$ are two distinct update steps.

Assuming that features follow a vMF distribution, according to Lemma~\ref{theo:lemma} the expected cosine distance between two independent random vectors \( \mathbf{X}_1, \mathbf{X}_2  \in \mathbb{R}^d  \) has the form:
\begin{equation}\label{eq:theo_semp}
\mathbb{E}\Big[1 - \mathbf{X}_1^{\top} \mathbf{X}_2\Big] = 1 - m_d(\kappa_1) \, m_d(\kappa_2) \, \boldsymbol{\mu}_1^\top \boldsymbol{\mu}_2
\end{equation}
where $\boldsymbol{\mu}_1,\boldsymbol{\mu}_2$ are the mean directions of the two distributions and $\kappa_1$, $\kappa_2$ their concentrations. Substituting Eq.~\ref{eq:theo_semp} into  Eq.~\ref{eq:compatibility_def_expectation_pre_same} and Eq.~\ref{eq:compatibility_def_expectation_pre_diff} gives:
\begin{small}
\begin{subequations}\label{eq:compatibility_def_expectation}
\begin{align}
 1 - m_d(\kappa_1^t) \, m_d(\kappa_2^k) \, (\boldsymbol{\mu}_1^t)^\top \boldsymbol{\mu}_2^k
 \leq 1 - m_d(\kappa_1^k)m_d(\kappa_2^k) \, (\boldsymbol{\mu}_1^k)^\top \boldsymbol{\mu}_2^k
& \quad \text{for the case of same class}  \label{eq:first_expectation}\\
 % \vspace{10pt}
 1 - m_d(\kappa_1^t) \, m_d(\kappa_2^k) \, (\boldsymbol{\mu}_1^t)^\top \boldsymbol{\mu}_2^k
 \geq  1 - m_d(\kappa_1^k)m_d(\kappa_2^k) \, (\boldsymbol{\mu}_1^k)^\top \boldsymbol{\mu}_2^k 
   & \quad \text{for the case of different classes} \label{eq:second_expectation}
\end{align}
\end{subequations}

\end{small}

Then, since $m_d(\kappa)$ is a strictly increasing function of the concentration parameter $\kappa$ \citep{Banerjee2005ClusteringOT}, and $\kappa^t > \kappa^k > 0$ across updates\footnote{We use ``increases'' in the strict sense, i.e., $\kappa^t > \kappa^k$, thus excluding the case $\kappa^t = \kappa^k$, for which the argument yields equality rather than a strict improvement. We also exclude the degenerate case $\kappa^k = 0$: the vMF distribution then reduces to the uniform distribution on the hypersphere, for which $m_d(0)=0$ and no preferred mean direction is identifiable, and therefore it does not describe a class distribution concentrated around its prototype.}, it follows that $
m_d(\kappa_1^t) \, m_d(\kappa_2^k)
>
m_d(\kappa_1^k) \, m_d(\kappa_2^k).
$

% Then, given that parameter $m_d(\kappa)$ is a strictly increasing function of the concentration parameter $\kappa$ \citep{Banerjee2005ClusteringOT}, and mean resultant length $m_d(\kappa^t) > m_d(\kappa^k)$ across updates\footnote{We discarded the triavial case when the new model at time t is equal to the old model at time k so that $m_d(\kappa^t) = m_d(\kappa^k)$.
% }, it follows:  
% $m_d(\kappa_1^t) \, m_d(\kappa_2^k) > m_d(\kappa_1^k) \, m_d(\kappa_2^k).$ 

It follows that:
\begin{itemize}
\item Eq.~\ref{eq:first_expectation} is satisfied when the mean directions of the same class remain the same across updates, i.e., $(\boldsymbol{\mu}_1^t)^\top \boldsymbol{\mu}_2^k = 1$.

\begin{figure*}
    \centering
    \begin{subfigure}{0.48\linewidth}
        \includegraphics[width=0.97\linewidth]{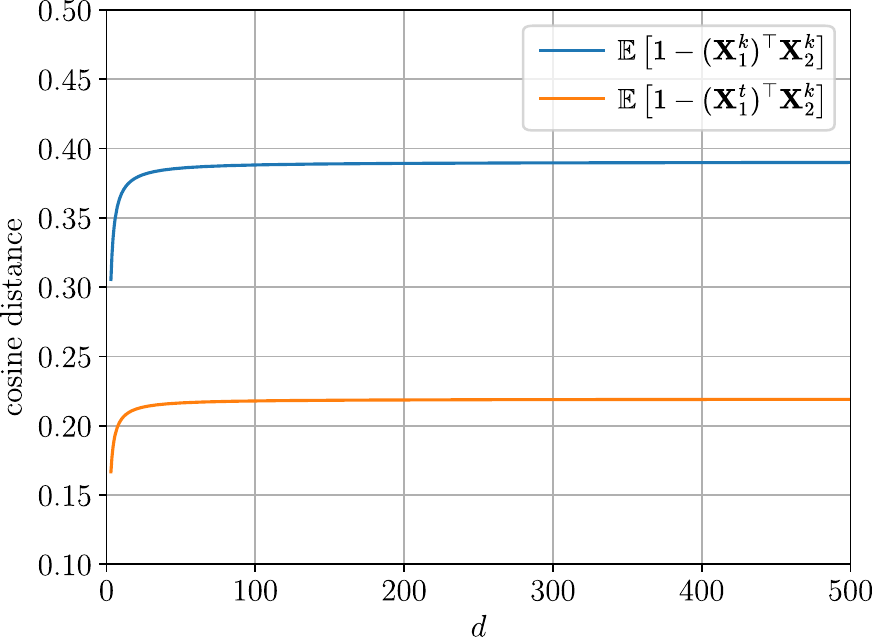}
        \caption{Same class}
        \label{fig:theo_inequalities_same}
    \end{subfigure}
    \begin{subfigure}{0.48\linewidth}
        \includegraphics[width=1\linewidth]{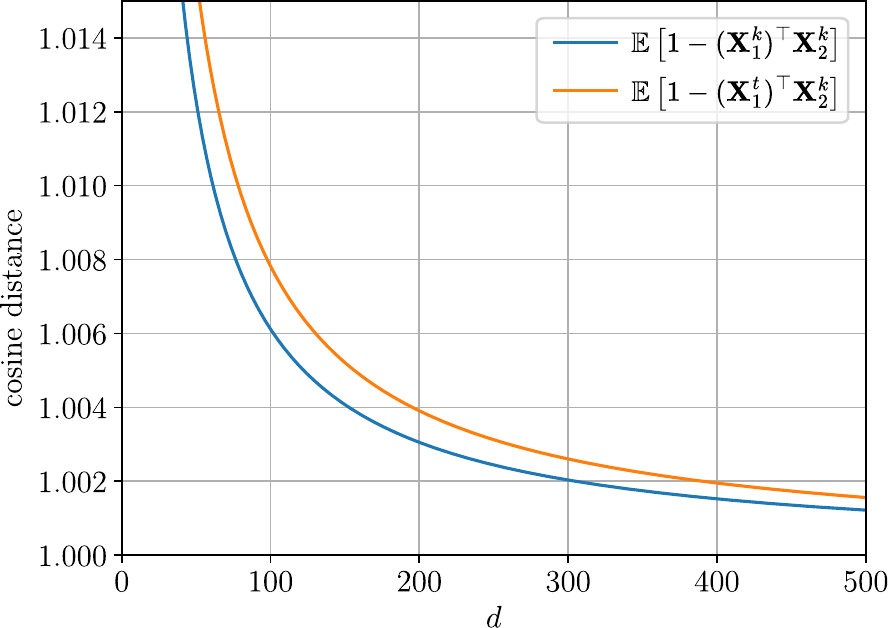}
        \caption{Different classes}
        \label{fig:theo_inequalities_diff}
    \end{subfigure}
    % \caption{Plot of the expected cosine distances of regular simplex representations for (a) same-class (Eq.~\ref{eq:compatibility_def_expectation_pre_same}) and (b) different-class (Eq.~\ref{eq:compatibility_def_expectation_pre_diff}) comparisons, calculated using the closed-form solutions in Eq.~\ref{eq:first_expectation} and Eq.~\ref{eq:second_expectation}. 
    % The results show that the compatibility inequalities are satisfied in expectation for every value of the feature space dimension $d$. 
    % }
    \caption{Expected cosine distances for aligned regular-simplex representations in the (a) same-class and (b) different-class cases, corresponding to Eq.~\ref{eq:compatibility_def_expectation_pre_same} and Eq.~\ref{eq:compatibility_def_expectation_pre_diff}, respectively. The curves are obtained from the closed-form expected-distance expressions in Eq.~\ref{eq:first_expectation} and Eq.~\ref{eq:second_expectation}, under the assumptions of Theorem~\ref{theo:compatibility}, with the updated class distributions having higher concentration than the old ones. Across all considered feature-space dimensions $d$, the expected distance between features from the updated and old models is lower than the corresponding old-model distance for same-class pairs and higher for different-class pairs, illustrating that the compatibility inequalities are satisfied in expectation.}

    \label{fig:theo_inequalities}
\end{figure*}

\item Eq.~\ref{eq:second_expectation} is satisfied when the angle between prototypes of different classes is greater than $\pi / 2$, i.e., $( \boldsymbol{\mu}_1^k)^\top \boldsymbol{\mu}_2^k = (\boldsymbol{\mu}_1^t)^\top \boldsymbol{\mu}_2^k < 0$---note that $( \boldsymbol{\mu}_1^k)^\top \boldsymbol{\mu}_2^k = (\boldsymbol{\mu}_1^t)^\top \boldsymbol{\mu}_2^k$ due to the alignment of the prototypes required to satisfy Eq.~\ref{eq:first_expectation}.
\end{itemize}

These two requirements are precisely conditions (a) and (b) of Theorem~\ref{theo:compatibility}, which concludes the proof.

\end{proof}

\begin{remark}[\textit{Dependence on concentration}]
The guarantee of Theorem~\ref{theo:compatibility} is conditional on the concentration of the learned representations.
From Eq.~\ref{eq:expected_cos_distance}, for fixed prototype geometry, the separation between same-class and different-class expected cosine distances is governed by the mean resultant length of the vMF distribution, which is strictly increasing in $\kappa$ \citep{DBLP:journals/cstat/HornikG14}.
As $\kappa$ approaches zero, the mean resultant length approaches zero and the expected cosine distances approach $1$, reducing the separation between the two cases.
Thus, the theorem does not impose a condition on classification accuracy; its distributional requirement is instead that the updated representation is more concentrated around its class prototypes than the old one.
% The NC1 curves in Fig.~\ref{fig:neural_collpase_proof} and Appendix~\ref{sec:app_neural_collapse_logits} provide empirical support for this trend toward convergence.
\end{remark}

\begin{remark}[\textit{Role of the vMF assumption}]
\label{rem:role_vmf}
Theorem~\ref{theo:compatibility} is stated under the vMF assumption, which is adopted as a tractable modeling assumption for class-conditional hyperspherical representations together with the explicit first-moment expression
$$
\mathbb{E}\left[\mathbf{X}\right]
=
m_d\left(\kappa\right)\boldsymbol{\mu}.
$$
At the expectation level, this first-moment characterization constitutes the distribution-specific quantity entering the proof. Indeed, for independent unit-norm random vectors $\mathbf{X}_1$ and $\mathbf{X}_2$,
$$
\mathbb{E}\!\left[d_{\mathrm{cos}}(\mathbf{X}_1,\mathbf{X}_2)\right]
=
1-
\mathbb{E}\!\left[\mathbf{X}_1\right]^\top
\mathbb{E}\!\left[\mathbf{X}_2\right],
$$
where $d_{\cos}(\mathbf{x},\mathbf{y}) = 1 - \mathbf{x}^\top\mathbf{y}$.

Hence, the vMF assumption enters the expected-distance analysis through its first moment. For a class-conditional distribution on the unit hypersphere with a nonzero first moment, this quantity can be expressed as
$$
\mathbb{E}[\mathbf{X}]
=
\rho\boldsymbol{\mu},
$$
where $\boldsymbol{\mu}$ denotes the mean direction and
$\rho=\|\mathbb{E}[\mathbf{X}]\|$ denotes the mean resultant length
\citep{mardia1975statistics,mardia2009directional}.
This formulation identifies the mean direction and mean resultant length as the first-moment quantities entering the expected-distance expression.
For the vMF model considered in Theorem~\ref{theo:compatibility}, $\rho=m_d(\kappa)$, and the strict monotonicity of $m_d(\kappa)$ with respect to $\kappa$
\citep{DBLP:journals/cstat/HornikG14}
provides an explicit relation between the concentration parameter $\kappa$ and the mean resultant length appearing in the proof.
\end{remark}

As a direct consequence of Theorem~\ref{theo:compatibility}, the following corollary holds.

\begin{corwithref}[\ref{cor:regulsimplex_compatibility} (\textit{Compatibility of Aligned Regular-Simplex Hyperspherical Representations})]
Under the assumptions of Theorem~\ref{theo:compatibility}, hyperspherical representations whose class prototypes form a regular simplex centered at the origin and remain aligned across updates satisfy the compatibility inequalities of Eq.~\ref{eq:first} and Eq.~\ref{eq:second} in expectation.
\end{corwithref}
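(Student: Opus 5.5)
The plan is to reduce the corollary to Theorem~\ref{theo:compatibility}. Since assumptions (i) and (ii) are inherited verbatim, it suffices to check the two geometric hypotheses (a) and (b) for prototypes forming a regular simplex centered at the origin. Hypothesis (a) is part of the corollary's own hypothesis: the prototypes remain aligned across updates, so the normalized means satisfy $(\boldsymbol{\mu}^t_c)^\top\boldsymbol{\mu}^k_c=1$ for every class $c$. The only real work is therefore (b).

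For (b), I would take $C$ prototype vectors $\mathbf{w}_1,\dots,\mathbf{w}_C$ forming a regular simplex centered at the origin. Regularity means equal norms $r>0$ and a common pairwise inner product $\gamma$. Centering at the origin means $\sum_c \mathbf{w}_c=\mathbf{0}$. Expanding $0=\|\sum_c\mathbf{w}_c\|^2 = C r^2 + C(C-1)\gamma$ gives $\gamma=-r^2/(C-1)$. Hence the normalized prototypes satisfy
\[
\boldsymbol{\mu}_i^\top\boldsymbol{\mu}_j=-\frac{1}{C-1}<0 \quad\text{for } i\neq j,
\]
so every pairwise angle is $\arccos(-1/(C-1))>\pi/2$, consistent with the footnote in Sec.~\ref{sec:psp}.

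The argument should also pass through alignment in the different-class case. The proof of Theorem~\ref{theo:compatibility} uses $(\boldsymbol{\mu}^t_1)^\top\boldsymbol{\mu}^k_2=(\boldsymbol{\mu}^k_1)^\top\boldsymbol{\mu}^k_2$, and alignment gives exactly this, since $\boldsymbol{\mu}^t_1=\boldsymbol{\mu}^k_1$. Combining the negative inner product with $m_d(\kappa^t_1)m_d(\kappa^k_2)>m_d(\kappa^k_1)m_d(\kappa^k_2)$ from Lemma~\ref{theo:lemma} and the monotonicity of $m_d$ then yields both Eq.~\ref{eq:first_expectation} and Eq.~\ref{eq:second_expectation}.

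The main obstacle, though a mild one, is dimensional bookkeeping under class expansion. The step-$t$ simplex has more vertices than the step-$k$ one, so a literal statement that "the prototypes form a regular simplex" refers to different simplices at the two steps. I would handle this by evaluating the different-class inequality in the step-$k$ comparison space, after $\mathbf{P}_{t,k}$ or zero-padding has been applied. In that space the relevant inner products are those among the aligned old-class prototypes, which are the step-$k$ simplex values $-1/(C^k-1)$. One degenerate case must be excluded: $C=1$, where there are no different-class pairs and the statement is vacuous.
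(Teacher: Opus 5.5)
Your proposal is correct and follows the paper's proof. Both verify hypothesis (a) directly from the assumed alignment, and hypothesis (b) from the regular-simplex inner product $-1/(C^k-1)<0$, which alignment transfers to the cross-update term $(\boldsymbol{\mu}_i^t)^\top\boldsymbol{\mu}_j^k$. Your derivation of that value from $\|\sum_c \mathbf{w}_c\|^2=0$, and your remarks on working in the step-$k$ comparison space and excluding $C=1$, only make explicit what the paper asserts or leaves implicit.
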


\begin{proof}
Let the normalized class prototypes at update step $k$ form a regular simplex centered at the origin. For any two distinct classes $i\neq j$, their pairwise inner product is

$$
(\boldsymbol{\mu}_i^k)^\top\boldsymbol{\mu}_j^k=
-\frac{1}{C^k-1}
<0,
$$

where $C^k$ denotes the number of classes defining the simplex at step $k$. Hence, the angle between any two distinct class prototypes is greater than $\pi/2$.

For the same-class case, alignment of corresponding prototypes across updates gives $
(\boldsymbol{\mu}_i^t)^\top\boldsymbol{\mu}_i^k=1,$
which satisfies the geometric condition required by Eq.~\ref{eq:first_expectation}. For two different classes $i\neq j$, prototype alignment gives
$$
(\boldsymbol{\mu}_i^t)^\top\boldsymbol{\mu}_j^k=
(\boldsymbol{\mu}_i^k)^\top\boldsymbol{\mu}_j^k=
-\frac{1}{C^k-1}
<0,
$$
which satisfies the geometric condition required by Eq.~\ref{eq:second_expectation}.

Therefore, all geometric conditions of Theorem~\ref{theo:compatibility} are satisfied, and, under its vMF and increasing-concentration assumptions, the compatibility inequalities of Eq.~\ref{eq:first} and Eq.~\ref{eq:second} hold in expectation.
\end{proof}

Fig.~\ref{fig:theo_inequalities} illustrates the expected compatibility inequalities characterized above for hyperspherical regular-simplex representations across different representation dimensions. The two update steps are denoted by $k$, corresponding to the old model, and $t$, corresponding to the updated model, whose class-conditional vMF distributions have higher concentration. In the same-class case, Fig.~\ref{fig:theo_inequalities_same} shows that

$$
\mathbb{E}\left[1-(\mathbf{X}_1^t)^\top\mathbf{X}_2^k\right]
<
\mathbb{E}\left[1-(\mathbf{X}_1^k)^\top\mathbf{X}_2^k\right],
$$

consistent with Eq.~\ref{eq:first}. Conversely, for different classes, Fig.~\ref{fig:theo_inequalities_diff} shows that

$$
\mathbb{E}\left[1-(\mathbf{X}_1^t)^\top\mathbf{X}_2^k\right]
>
\mathbb{E}\left[1-(\mathbf{X}_1^k)^\top\mathbf{X}_2^k\right],
$$

consistent with Eq.~\ref{eq:second}.

\begin{remark}[\textit{Relation to Simplex ETF}]
\label{rem:simplex_etf}
The regular-simplex geometry considered in Corollary~\ref{cor:regulsimplex_compatibility} is closely related to the Simplex ETF structure arising in Neural Collapse~\citep{papyan2020prevalence}. Within a single representation space, Neural Collapse characterizes the centered class means as approaching a Simplex ETF configuration, in which the normalized class prototypes have constant pairwise inner products
$$
\boldsymbol{\mu}_i^\top\boldsymbol{\mu}_j=-\frac{1}{C-1},
\qquad i\neq j,
$$
and therefore pairwise angles greater than $\pi/2$.

This within-model geometric property, however, does not by itself imply alignment across independently obtained representation spaces. In particular, the Simplex ETF geometry is invariant under a common orthogonal transformation: applying the same orthogonal transformation to all prototypes preserves their norms and pairwise inner products while changing their absolute orientation. Consequently, two models may independently exhibit the same Simplex ETF geometry while their corresponding class prototypes are not aligned across updates.

Thus, the Simplex ETF property guarantees the obtuse inter-class geometry required by Theorem~\ref{theo:compatibility}, but not the cross-update alignment of corresponding prototypes. When this additional alignment condition is satisfied, and under the vMF and increasing-concentration assumptions of Theorem~\ref{theo:compatibility}, an aligned Simplex ETF configuration falls within the regular-simplex setting of Corollary~\ref{cor:regulsimplex_compatibility} and therefore satisfies the compatibility inequalities in expectation.
\end{remark}

\section{Empirical Evidence of Increased Concentration Across Model Updates}\label{sec:concentration}

In this appendix, we empirically assess the concentration-increase assumption discussed in Sec.~\ref{sec:theo_section}, showing that the distributions of both logits and softmax outputs become more compact around their respective class prototypes as model updates progress.
To quantify the compactness, we use the mean cosine distance between the feature representations and their respective class prototypes.
Specifically, let $\mathcal{Z}_c = \{\mathbf{z}_i^c\}_{i=1}^{N_c}$ denote the set of extracted logits for class $c$, where $N_c$ represents the number of samples belonging to class $c$ in the dataset.
The class prototype is defined as the mean of the $\ell_2$-normalized extracted logits, $\bar{\mathbf{z}}_c = \frac{1}{N_c} \sum_{i=1}^{N_c}\frac{\mathbf{z}_i^c}{\|\mathbf{z}_i^c\|}$, and the mean cosine distance between each feature and the prototype of class $c$ is given by
\begin{equation*}
\delta_c = \frac{1}{N_c} \sum_{i=1}^{N_c} \left( 1 - \frac{\mathbf{z}_i^c \cdot \bar{\mathbf{z}}_c}{\|\mathbf{z}_i^c\| \, \|\bar{\mathbf{z}}_c\|} \right) = 1-\|\bar{\mathbf{z}}_c\|.
\end{equation*}
The same quantities are computed for the softmax outputs by replacing $\mathbf{z}_i^c$ with $\sigma(\mathbf{z}_i^c)$.
Lower values of $\delta_c$ correspond to larger empirical mean resultant lengths (Remark~\ref{rem:role_vmf}) and therefore indicate a tighter concentration of the normalized representations around their class mean directions.

\begin{figure*}[t]
    \centering
    \includegraphics[height=0.20in]{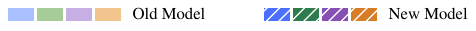}
    % Crop only blank left padding from the right PDF. The widths preserve
    % equal image scales and make the middle gap match the other panel gaps.
    \begin{subfigure}[t]{0.50705\textwidth}
        \centering
        \includegraphics[width=\linewidth]{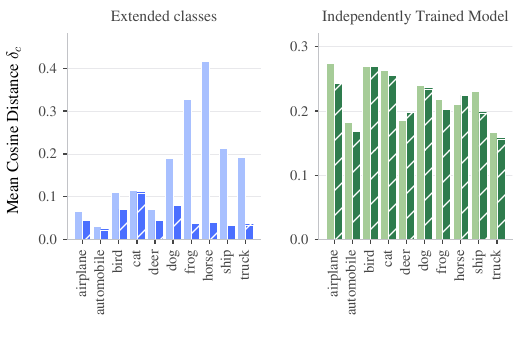}
        \vspace{-0.8cm}
        \caption{Logits.}
        \label{fig:concentration-logits}
    \end{subfigure}%
    \begin{subfigure}[t]{0.49295\textwidth}
        \centering
        \includegraphics[width=\linewidth,trim=0.096in 0 0 0,clip]{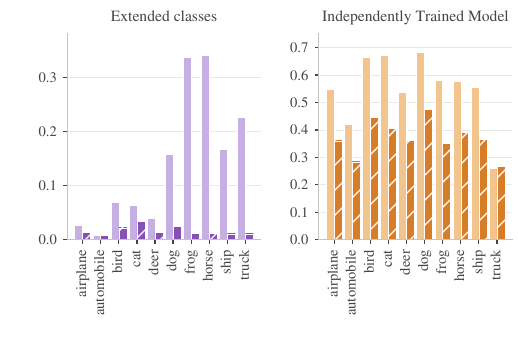}
        \vspace{-0.8cm}
        \caption{Softmax outputs.}
        \label{fig:concentration-softmax}
    \end{subfigure}
    \caption{Per-class dispersion $\delta_c$ of the $\ell_2$-normalized logits (a) and softmax outputs (b) on CIFAR-10 for the old model (solid bars) and the new model (hatched bars). In each subfigure, the left panel shows the extended-classes update from 5 to 10 classes, where the last 5 classes are unknown to the old model, and the right panel shows the update between independently trained models, with ResNet-18 as the old model and ResNet-50 as the new model, both pretrained on ImageNet-1K. Lower values of $\delta_c$ indicate a tighter concentration around the class mean direction.}
    \label{fig:concentration}
\end{figure*}

We conduct experiments on the CIFAR-10 dataset~\citep{Krizhevsky2009LearningML} under two scenarios analogous to those in Sec.~\ref{sec:when_experimental_results}.

\textbf{Extended Classes Scenario.} In this scenario, we investigate the concentration of the representations when the number of classes is expanded during a model update. The old model is trained on the first 5 classes of CIFAR-10, and the updated model is trained on the complete set of 10 classes. Both models use a ResNet-18 architecture~\citep{he2016deep} trained for 100 epochs using stochastic gradient descent (SGD) with an initial learning rate of 0.1, momentum of 0.9, and a cosine annealing learning rate schedule.
The results shown in Fig.~\ref{fig:concentration} demonstrate that both the logits and the softmax outputs of the updated model are more concentrated than those of the old model for all 10 classes. This is consistent with the semi-open-set scenario, where the new classes are unknown to the old model but are known to the updated model, thus explaining the significant gap in concentration in the last 5 classes. The increase in concentration also holds for the 5 shared classes, on which both models are already highly concentrated, supporting the assumption of increasing concentration with class expansion during model updates.

\textbf{Independently Trained Models in Public Repositories Scenario.} We compare two models pretrained on ImageNet-1K and obtained from torchvision~\citep{marcel2010torchvision}: ResNet-18 as the old model, and ResNet-50 as the updated model. We evaluate the concentration metric $\delta_c$ on the CIFAR-10 test set, with images resized to $224\times224$ by bicubic interpolation. As shown in Fig.~\ref{fig:concentration}, the updated model yields more concentrated representations than the old model, even in an open-set scenario where the evaluation distribution differs from the training one: the logits are more concentrated for 7 of the 10 classes, and the softmax outputs for all 10 classes.

The observed overall increase in logit concentration is consistent with the collapse behavior discussed in Sec.~\ref{sec:logits_theory}, and the softmax outputs exhibit the same trend.
In the ideal Neural Collapse limit, the two representations behave identically in this respect: the logits of each class converge to a common vector, and so do the corresponding softmax outputs~\citep{papyan2020prevalence}.

Since the hyperspherical simplex representations in Eq.~\ref{eq:logits_as_features} and Eq.~\ref{eq:softmax_outputs_as_features} are constructed from the logits and softmax outputs, respectively, these results directly support assumption~(ii) for both LSP and PSP.
Overall, the results provide empirical evidence of increasing first-moment concentration across model updates. Under the vMF modeling assumption adopted in Theorem~\ref{theo:compatibility}, this behavior is consistent with an increase in the corresponding concentration parameter $\kappa$.
At the same time, the few classes whose logits become less concentrated under domain shift show that the concentration-increase assumption holds predominantly but need not hold for every class.

\section{Empirical Analysis of the Centering Projection and the Angular Correction}
\label{app:psp_ablation}

In this appendix, we empirically examine the contribution of the two operations on which the design of PSP relies: the centering projection $\mathbf{P}_{k,k}$ and the deterministic transformation $\mathbf{P}_{t,k}$ for angular correction, motivated by condition~\ref{cond:obtuse} and condition~\ref{cond:align} of Theorem~\ref{theo:compatibility}, respectively (Sec.~\ref{sec:psp} and Sec.~\ref{sec:projection}).

\textbf{Experimental setting.}
We consider an extended-classes update on CIFAR-10~\citep{Krizhevsky2009LearningML} with a ResNet-18 backbone~\citep{he2016deep}: the old model is trained on the first 5 classes and the updated model on all 10 classes.
All models are trained from scratch for 60 epochs using SGD with an initial learning rate of 0.1, momentum of 0.9, weight decay of $5\cdot10^{-4}$, batch size of 128, and cosine annealing.
Each model is trained with 3 different seeds, and all $3\times3$ combinations of old and updated models are evaluated, yielding a total of 9 model pairs.
The old models reach a test accuracy of 95.10--95.70\% on their 5 classes, and the updated models 94.57--94.97\% on all 10 classes.

The inequalities of Def.~\ref{def:compatibility} are evaluated with the cosine distance $d$ on the 5{,}000 test images of the 5 shared classes, over all ordered pairs $(i,j)$ with $i\neq j$, which form the set $\mathcal{S}$ of 4{,}995{,}000 same-class pairs and the set $\mathcal{D}$ of 20{,}000{,}000 different-class pairs per model pair.
For each inequality, we compute the average margin
\begin{equation}
\gamma_{\mathrm{same}} = \widehat{\mathbb{E}}_{\mathcal{S}}\big[d(\mathbf{h}^k_i,\mathbf{h}^k_j)\big]-\widehat{\mathbb{E}}_{\mathcal{S}}\big[d(\mathbf{h}^t_i,\mathbf{h}^k_j)\big],
\qquad
\gamma_{\mathrm{diff}} = \widehat{\mathbb{E}}_{\mathcal{D}}\big[d(\mathbf{h}^t_i,\mathbf{h}^k_j)\big]-\widehat{\mathbb{E}}_{\mathcal{D}}\big[d(\mathbf{h}^k_i,\mathbf{h}^k_j)\big],
\label{eq:avg_margins}
\end{equation}
where $\widehat{\mathbb{E}}_{\mathcal{P}}$ denotes the empirical average over the pairs in $\mathcal{P}$.
A positive margin indicates that the corresponding inequality is satisfied on average over the evaluated pairs, providing an empirical counterpart to the expectation-level inequalities of Theorem~\ref{theo:compatibility}.

We compare four configurations, which differ only in the operations applied to the softmax outputs before $\ell_2$-normalization.
In all of them, the gallery features $\mathbf{h}^k$ are extracted with the old model and the query features $\mathbf{h}^t$ with the updated model; when their dimensions differ, $\mathbf{h}^k$ is zero-padded to $C^t$ dimensions.
The configurations are defined as follows:
\begin{itemize}
\item $\ell_2$-Softmax: directly $\ell_2$-normalized softmax outputs for both models, without either operation.
\item PSP w/o $\mathbf{P}_{t,k}$: both models use the representation of Eq.~\ref{eq:softmax_outputs_as_features}, each centered with $\mathbf{P}_{k,k}$ and $\mathbf{P}_{t,t}$, respectively, without angular correction.
\item PSP w/o $\mathbf{P}_{k,k}$: $\ell_2$-normalized softmax outputs for the gallery, and queries mapped by $[\mathbf{I}_{C^k}\,|\,\mathbf{0}]$, i.e., $\mathbf{P}_{t,k}$ without centering, which only discards the coordinates of the new classes.
\item PSP: the proposed hyperspherical representation of Eq.~\ref{eq:softmax_outputs_as_features}, with queries mapped by $\mathbf{P}_{t,k}$.
\end{itemize}

\begin{table}[!htbp]
\centering
\caption{Analysis of the centering projection $\mathbf{P}_{k,k}$ and the deterministic transformation $\mathbf{P}_{t,k}$ on CIFAR-10 with ResNet-18 under class expansion ($5\to10$ classes). Average margins of Eq.~\ref{eq:avg_margins} (mean $\pm$ std over 9 model pairs; positive values indicate that the inequality is satisfied on average) and number of compatible pairs, i.e., model pairs for which both margins are positive. $^{*}$Without the centering projection, i.e., $[\mathbf{I}_{C^k}\,|\,\mathbf{0}]$ in place of $\mathbf{P}_{t,k}$. Dark and light blue indicate respectively the highest and the second-highest value for each column.}
\label{tab:psp_ablation}
\begin{tabular}{lcc ccc}
\toprule
\textsc{Method} & $\mathbf{P}_{k,k}$ & $\mathbf{P}_{t,k}$ & $\gamma_{\mathrm{same}}$ & $\gamma_{\mathrm{diff}}$ & \textsc{Compatible pairs} \\
\midrule
$\ell_2$-Softmax & -- & -- & $-0.0111\pm0.0030$ & \textcolor[RGB]{65,105,225}{\boldmath$-0.0055\pm0.0009$} & 0/9 \\
PSP w/o $\mathbf{P}_{t,k}$ & \checkmark & -- & $-0.0568\pm0.0033$ & $-0.0142\pm0.0008$ & 0/9 \\
PSP w/o $\mathbf{P}_{k,k}$ & -- & \checkmark$^{*}$ & \textcolor[RGB]{65,105,225}{\boldmath$+0.0120\pm0.0019$} & $-0.0133\pm0.0010$ & 0/9 \\
\midrule
PSP & \checkmark & \checkmark & \textcolor[RGB]{0,0,139}{\boldmath$+0.0155\pm0.0025$} & \textcolor[RGB]{0,0,139}{\boldmath$+0.0039\pm0.0006$} & \textcolor[RGB]{0,0,139}{\textbf{9/9}} \\
\bottomrule
\end{tabular}
\end{table}

In PSP w/o $\mathbf{P}_{k,k}$, i.e., without centering, restricting the queries with $[\mathbf{I}_{C^k}\,|\,\mathbf{0}]$ makes $\gamma_{\mathrm{same}}$ positive in all model pairs, as it discards the probability mass that the updated model assigns to the new classes, which has no counterpart in the gallery features.
However, $\gamma_{\mathrm{diff}}$ remains negative in all model pairs for both $\ell_2$-Softmax and PSP w/o $\mathbf{P}_{k,k}$, as expected from the analysis of Sec.~\ref{sec:psp}: since the softmax outputs lie in the non-negative orthant, the angle between prototypes of different classes is at most $\pi/2$ and condition~\ref{cond:obtuse} of Theorem~\ref{theo:compatibility} is not satisfied.
In this case, the inner products between features of different classes are non-negative, so that an increase in concentration cannot yield a positive different-class margin.
The centering projection $\mathbf{P}_{k,k}$ removes this limitation by mapping the prototypes onto a regular simplex centered at the origin, which, together with the alignment restored by $\mathbf{P}_{t,k}$, accounts for the positive margins of PSP.

Overall, the two operations play complementary roles: $\mathbf{P}_{k,k}$ establishes angles greater than $\pi/2$ between prototypes of different classes (condition~\ref{cond:obtuse}), while $\mathbf{P}_{t,k}$ restores the alignment of corresponding prototypes that this centered geometry loses under class expansion (condition~\ref{cond:align}).
Among the evaluated configurations, only their combination, i.e., PSP, yields positive average margins for both inequalities of Def.~\ref{def:compatibility} in all model pairs, providing empirical support for the geometric conditions identified by Theorem~\ref{theo:compatibility}.

\section{Implementation Details For Standard Compatibility Benchmarks} \label{sec:implementation_details}
In the following, we report the implementation details used in the experiments of Sec.~\ref{sec:experiments_extended_classes}. 

\textbf{Experiments on CIFAR-100~{\normalfont \citep{Krizhevsky2009LearningML}.}} 
For this set of experiments, the image size is 32$\times$32. Both ResNet-18 and ResNet-50 architectures are used with the following hyperparameters for training: number of epochs 120; batch size 128; SGD optimizer with an initial learning rate of 0.1, reduced by a factor of 10 at epochs 80 and 100; SGD momentum 0.9; weight decay $5 \cdot 10^{-4}$.
A temperature factor of $12$ is used in the cross-entropy loss to scale logit vectors during training.  
Training images are subjected to random cropping, horizontal flipping, and tensor normalization.
For the Extended Classes benchmark, CIFAR-100 training set is partitioned into $T$ class-disjoint subsets, each containing $ 100 / T$  classes, where $T$ is the number of update steps.
% \textbf{Experiments on Tiny-ImageNet-200~{\normalfont \cite{le2015tiny}.}} For this set of experiments the image size is 64$\times$64.
%     A ResNet-18 architecture is used with the following hyperparameters for training: number of epochs 90; batch size 256; SGD optimizer with learning rate that starts from 0.1 and is divided by 10 after 50 and 70 epochs; SGD momentum 0.9; weight decay $5 \cdot 10^{-4}$. 
%     A temperature factor of $12$ is used in the cross-entropy loss to scale logit vectors during training.      
%     Training images are subjected to random cropping, horizontal flipping, and tensor normalization.
%     Tiny-ImageNet-200 training set is divided into chunks each having $ 200 / T$ classes, being $T$ the update steps.

\textbf{Experiments on ImageNet-1K~{\normalfont \citep{russakovsky2015imagenet}.}}
For this set of experiments, the image size is 224$\times$224.
Both ResNet-18 and ResNet-50 architectures are trained with the following hyperparameters: number of epochs 90; batch size 1536; SGD optimizer with an initial learning rate of 0.1, reduced by a factor of 10 at epochs 30 and 60; SGD momentum 0.9; weight decay $1 \cdot 10^{-4}$.
Training images are subjected to random cropping, horizontal flipping, color jitter, and tensor normalization, as is standard for ImageNet-1K training.
For the Extended Classes benchmark, the ImageNet-1K training set is partitioned into $T$ class-disjoint subsets, each containing $1000/T$ classes, where $T$ is the number of update steps.

\textbf{Experiments on Google Landmarks v2~{\normalfont \citep{weyand2020google}.}}
For this set of experiments, the image size is 224$\times$224.
A ResNet-18 architecture, initialized from ImageNet-1K pretrained weights at every update step, is trained with the following hyperparameters: number of epochs 30; batch size 512; SGD optimizer with an initial learning rate of 0.1, reduced by a factor of 10 at epochs 5, 10, and 20; SGD momentum 0.9; weight decay $5 \cdot 10^{-4}$.
Training images are subjected to random cropping and tensor normalization.
For the Extended Classes benchmark, the first update step uses a subset of 24{,}393 classes, and the remaining training set is partitioned into $T-1$ class-disjoint subsets, each containing $(81{,}313 - 24{,}393)/(T-1)$ classes, where $T$ is the number of update steps.
    
% \textbf{Experiments on ImageNet-1K~{\normalfont \citep{russakovsky2015imagenet}.}} For this set of experiments the image size is 224$\times$224.
% Both ResNet-18 and ResNet-50 architectures are trained with the following hyperparameters: number of epochs 90; batch size 1536; SGD optimizer with learning rate that starts from 0.1 and is divided by 10 after 30 and 60 epochs; SGD momentum 0.9;  weight decay $1 \cdot 10^{-4}$. 
% Training images are subjected to random cropping, horizontal flipping, color jitter, and tensor normalization, as standard for ImageNet-1K training.
% For the Extended Classes benchmark, ImageNet-1K training set is divided into chunks each having $ 1000 / T$ classes, being $T$ the number of update steps.

% \textbf{Experiments on Google Landmarks v2~{\normalfont \citep{weyand2020google}.}} For this set of experiments the image size is 224$\times$224.
% A ResNet-18 architecture, pretrained on ImageNet-1K, is used with the following hyperparameters for training: number of epochs 30; batch size 512; SGD optimizer with learning rate that starts from 0.1 and is divided by 10 after 5, 10 and 20 epochs; SGD momentum 0.9; weight decay $5 \cdot 10^{-4}$. 
% Training images are subjected to random cropping and tensor normalization.
% In the initial step, we use a subset with 24393 classes. In the following steps, Google Landmarks v2 training set is divided into chunks each having $(81313 - 24393) / (T-1)$ classes, being $T$ the update steps.

\section{Compatibility Matrices for 5-Step Update Scenario in the Extended Classes Benchmark}\label{sec:comp_CIFAR-100}

In Fig.~\ref{fig:comp_matrix_cifar_5step}, we present the compatibility matrices showing the CMC@$1$ retrieval metric values at each model update step for the 5-step scenario on the CIFAR-100 dataset of Tab.~\ref{tab:cifar}.

The matrices show that PSP and LSP achieve the highest compatibility while maintaining competitive retrieval performance across the update sequence.

\begin{figure*}[t]
    % % \vspace{-0.3cm}
    \centering
    \begin{subfigure}{0.33\linewidth}
        \centering
        \includegraphics[width=1\linewidth]{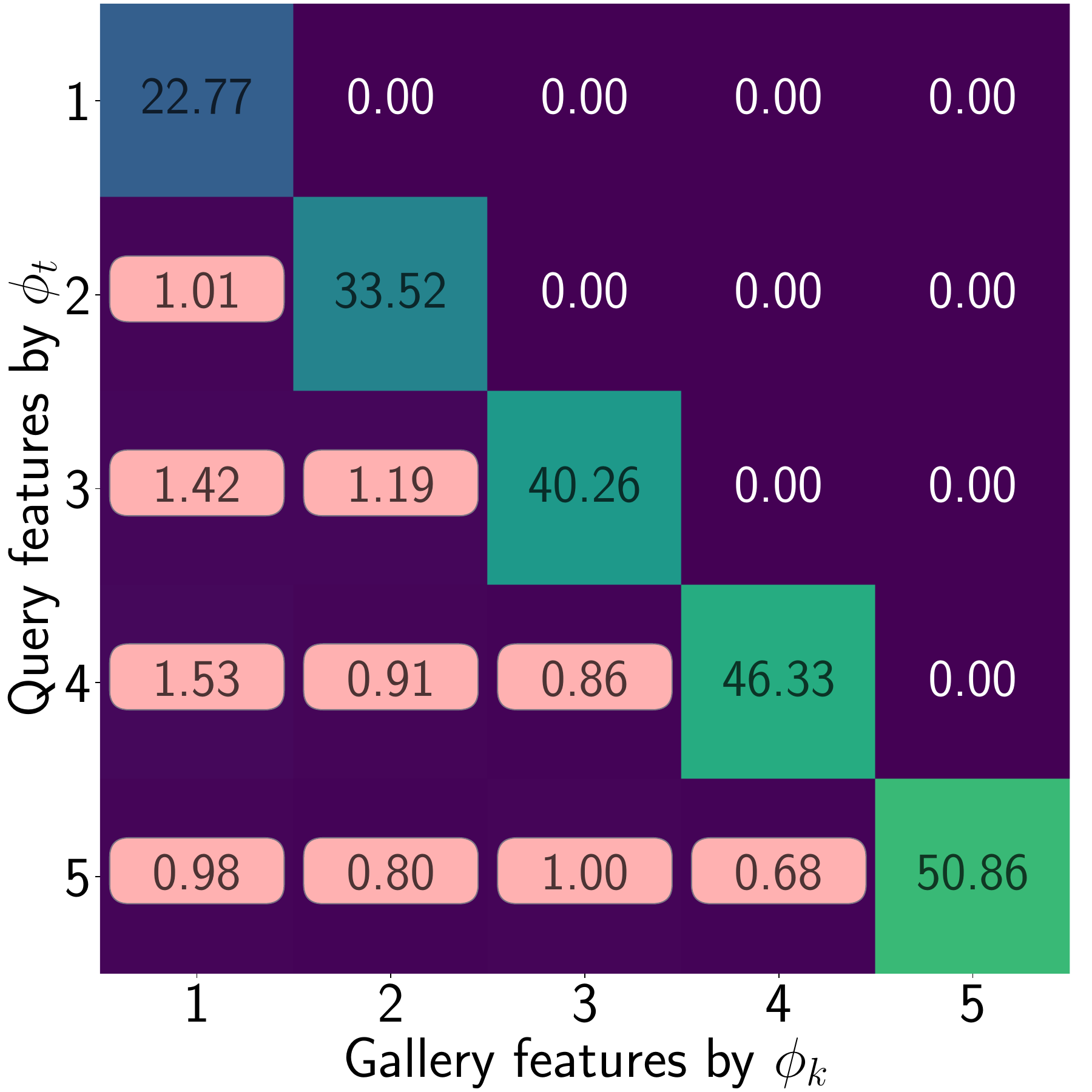}
        \caption{Baseline}
    \label{fig:cifar_prelogit_5step}
    \end{subfigure}
    \hspace{-10pt}
    \begin{subfigure}{0.33\linewidth}
        \centering
        \includegraphics[width=1\linewidth]{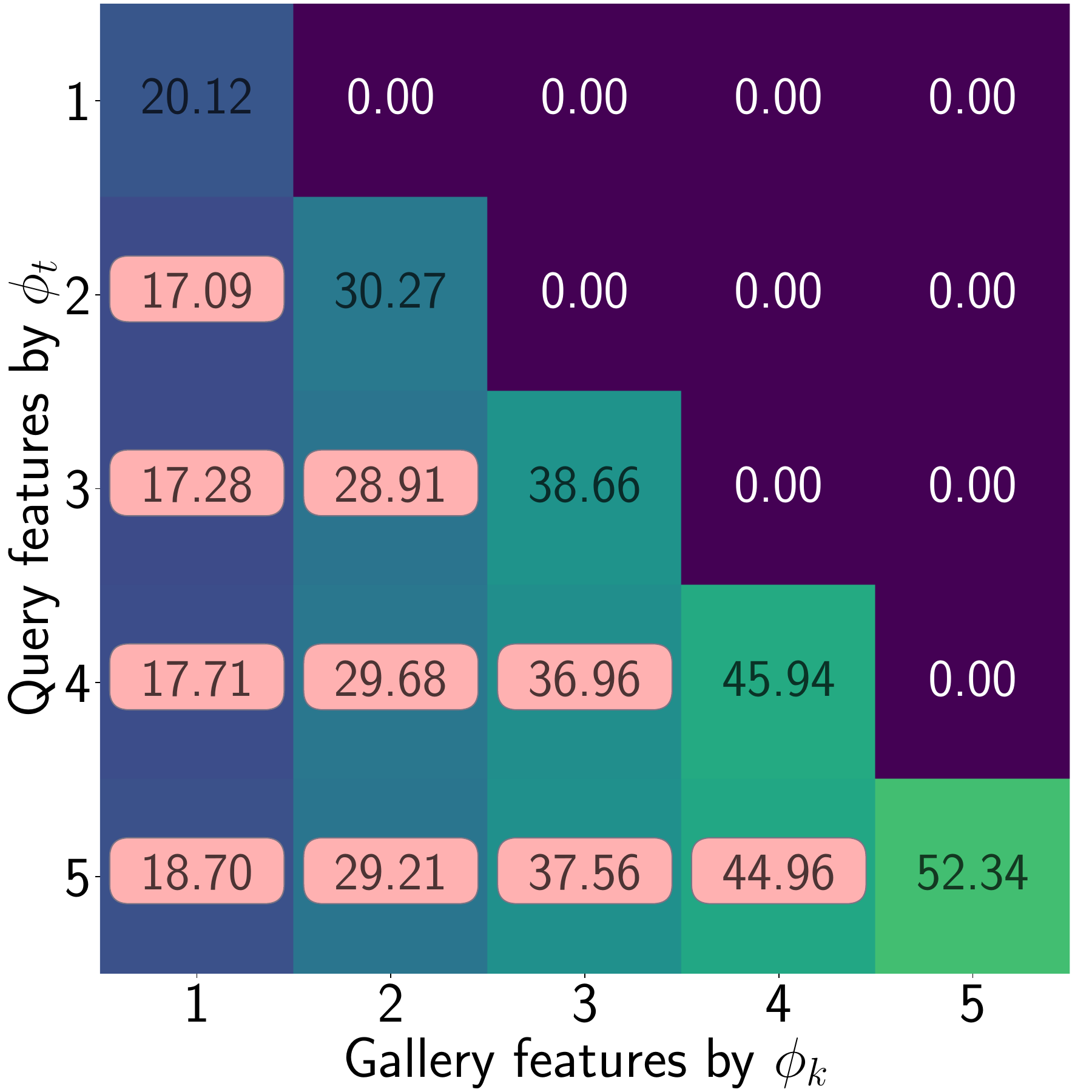}
        \caption{BCT}
    \label{fig:cifar_bct_5step}
    \end{subfigure}
    \hspace{-10pt}
    \begin{subfigure}{0.33\linewidth}
        \centering
        \includegraphics[width=1\linewidth]{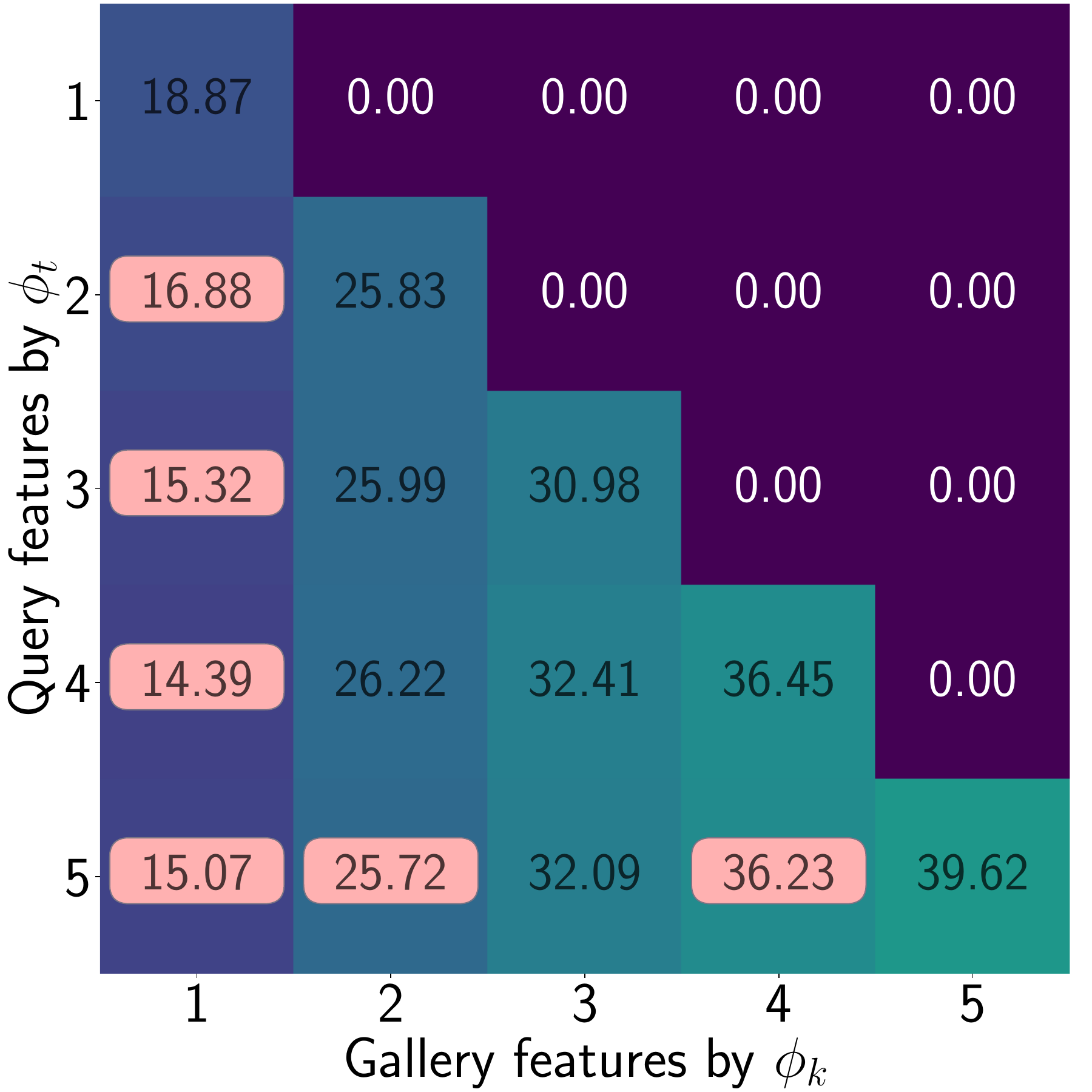}
        \caption{AdvBCT}
    \label{fig:cifar_advbct_5step}
    \end{subfigure}
    \hspace{-10pt}
    \begin{subfigure}{0.33\linewidth}
        \centering
        \includegraphics[width=1\linewidth]{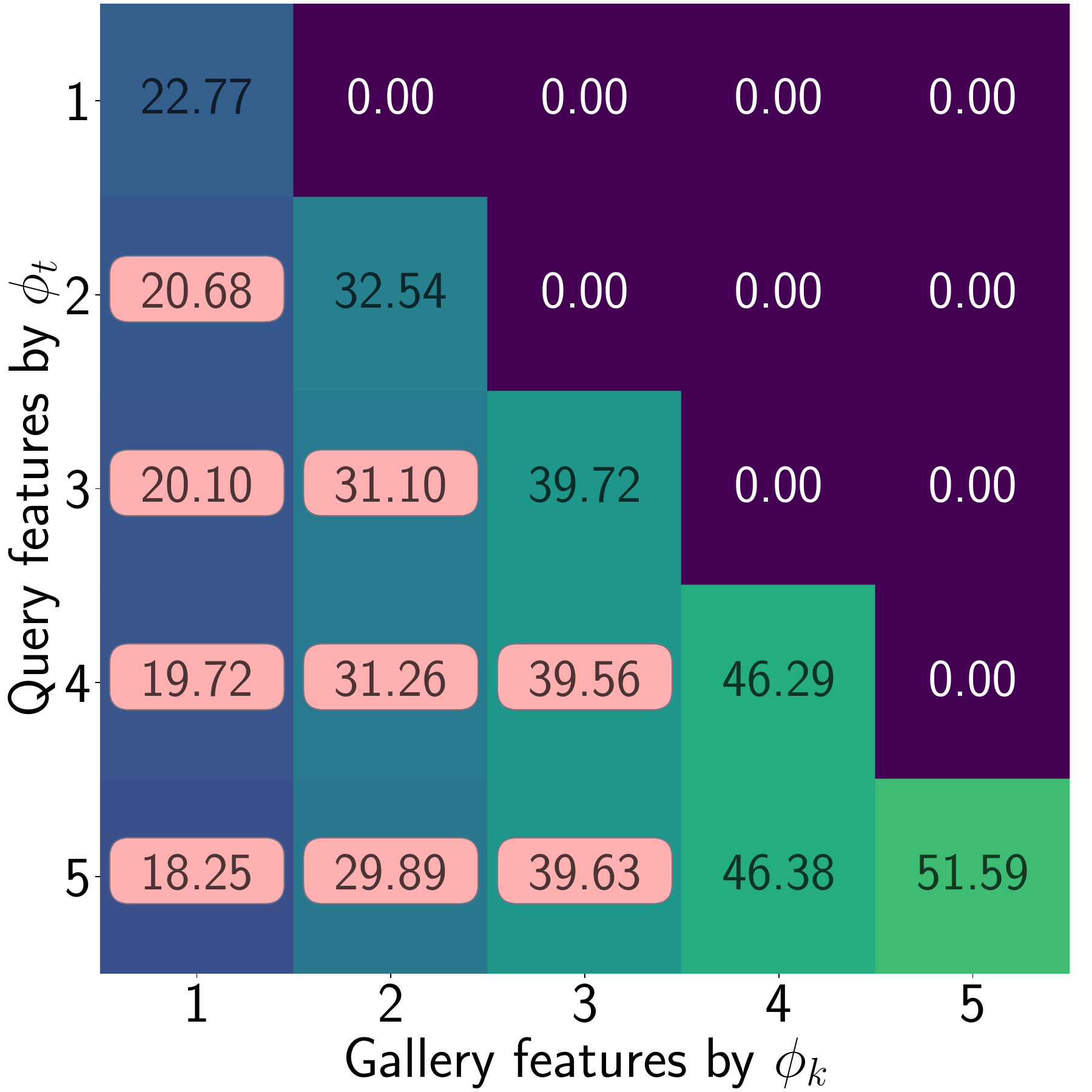}
        \caption{LCE}
    \label{fig:cifar_lce_5step}
    \end{subfigure}
    \hspace{-10pt}
    \begin{subfigure}{0.33\linewidth}
        \centering
        \includegraphics[width=1\linewidth]{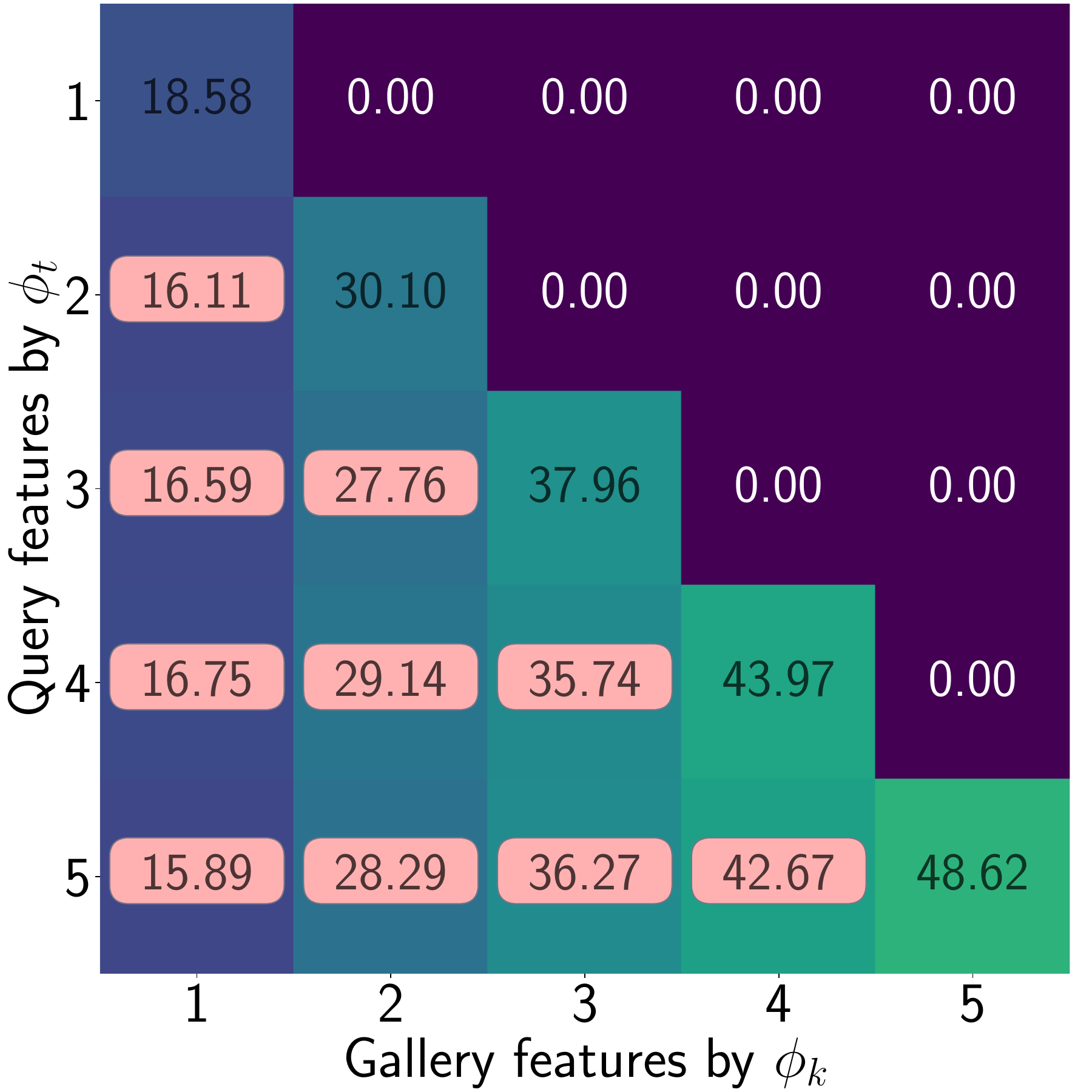}
        \caption{CoReS}
    \label{fig:cifar_cores_5step}
    \end{subfigure}
        \begin{subfigure}{0.33\linewidth}
        \centering
        \includegraphics[width=1\linewidth]{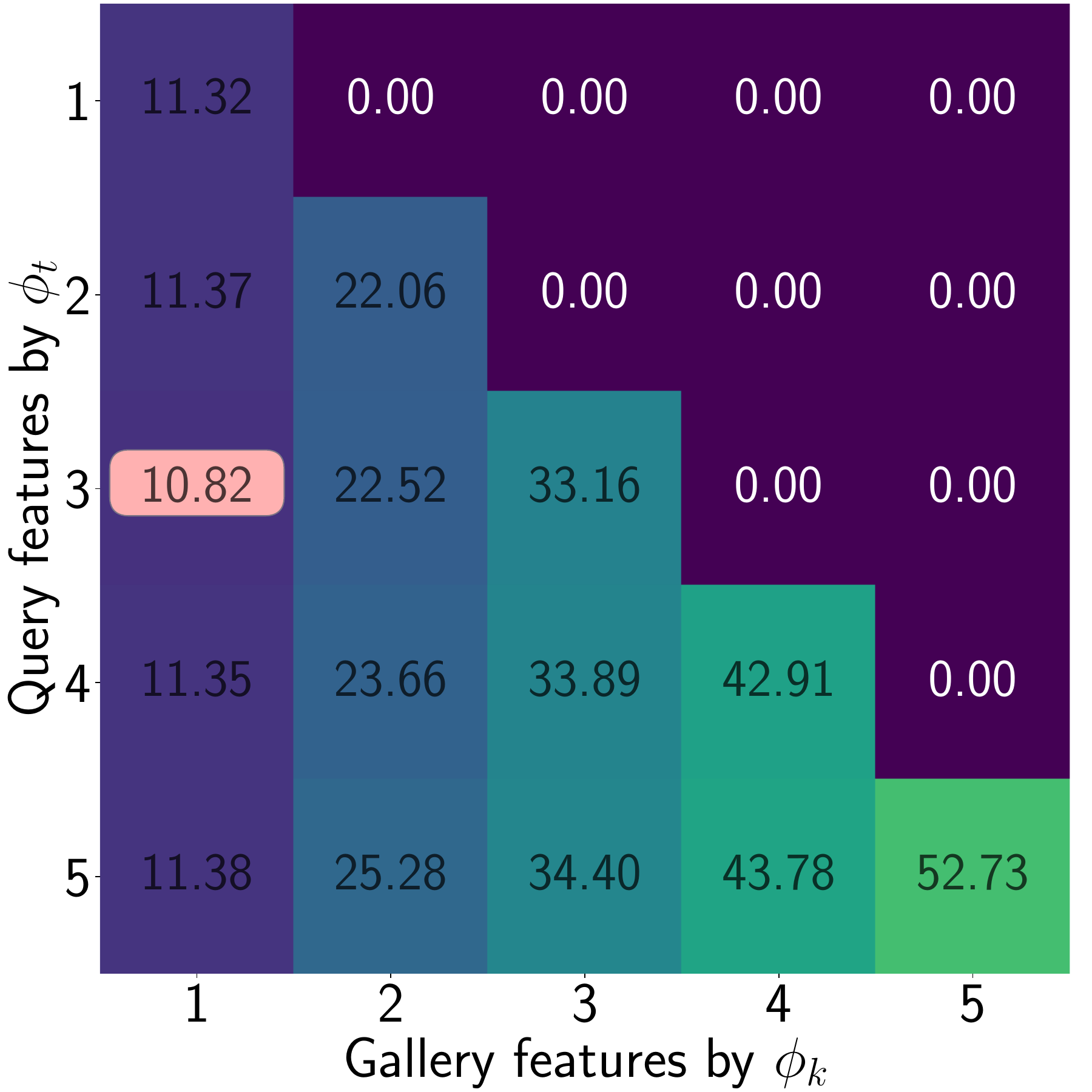}
        \caption{PSP}
    \label{fig:cifar_soft_5step}
    \end{subfigure}
    \hspace{-10pt}
    \begin{subfigure}{0.33\linewidth}
        \centering
        \includegraphics[width=1\linewidth]{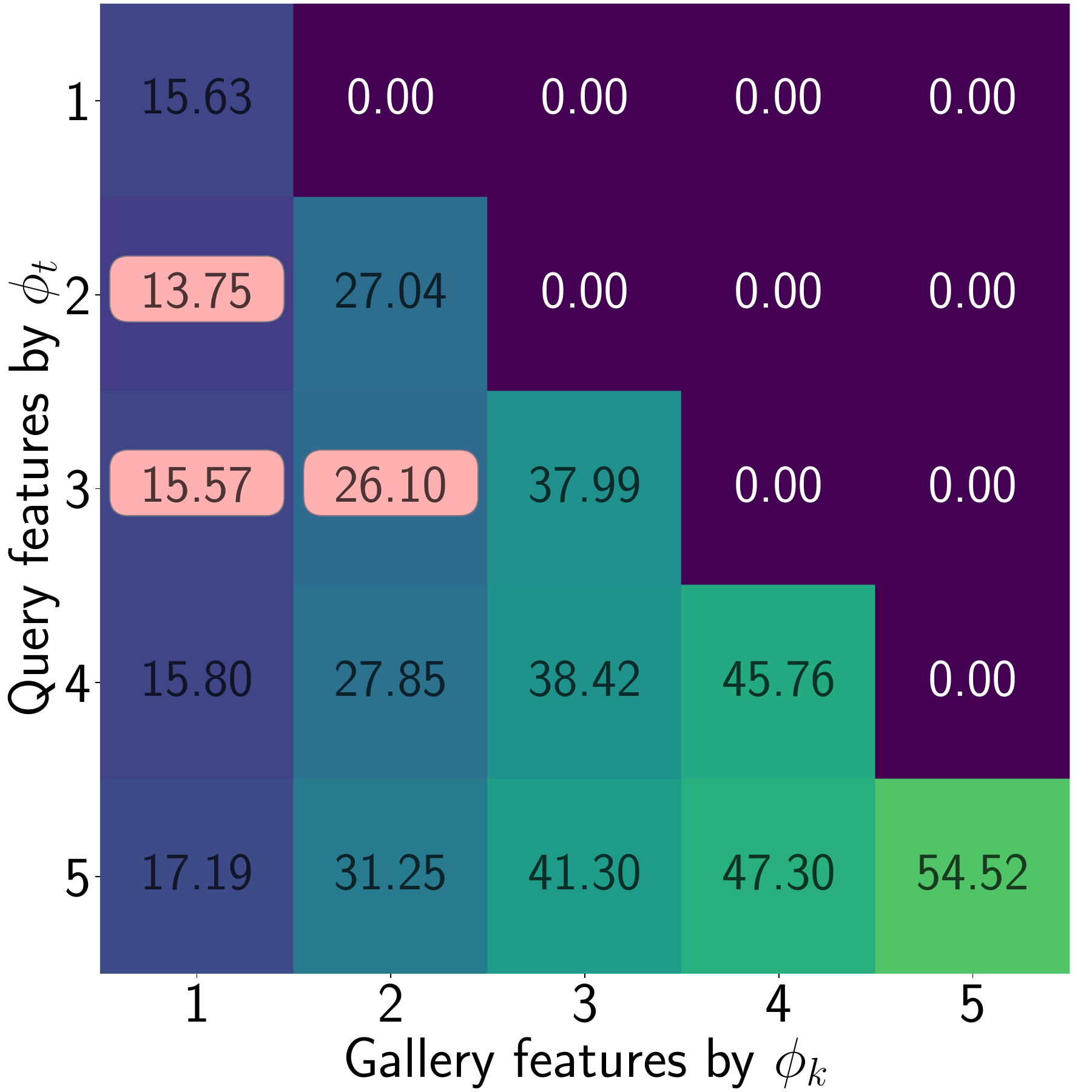}
        \caption{LSP}
    \label{fig:cifar_logits_5step}
    \end{subfigure}
    \caption{Compatibility matrices showing CMC@$1$ retrieval metric values at each model update step for the 5-step scenario on the CIFAR-100 dataset. Each matrix entry failing to satisfy the compatibility criterion defined in Eq.~\ref{eq:multistepecc} is highlighted with a light-red background.}
    \label{fig:comp_matrix_cifar_5step}
\end{figure*}

\section{Additional Results on Independently Trained Models from Public Repositories}\label{sec:compatiiblity_matrix_ind_trained}

\begin{figure*}[t]
    % % \vspace{-0.3cm}
    \centering
        \begin{subfigure}{0.33\linewidth}
        \centering
        \includegraphics[width=0.95\linewidth]{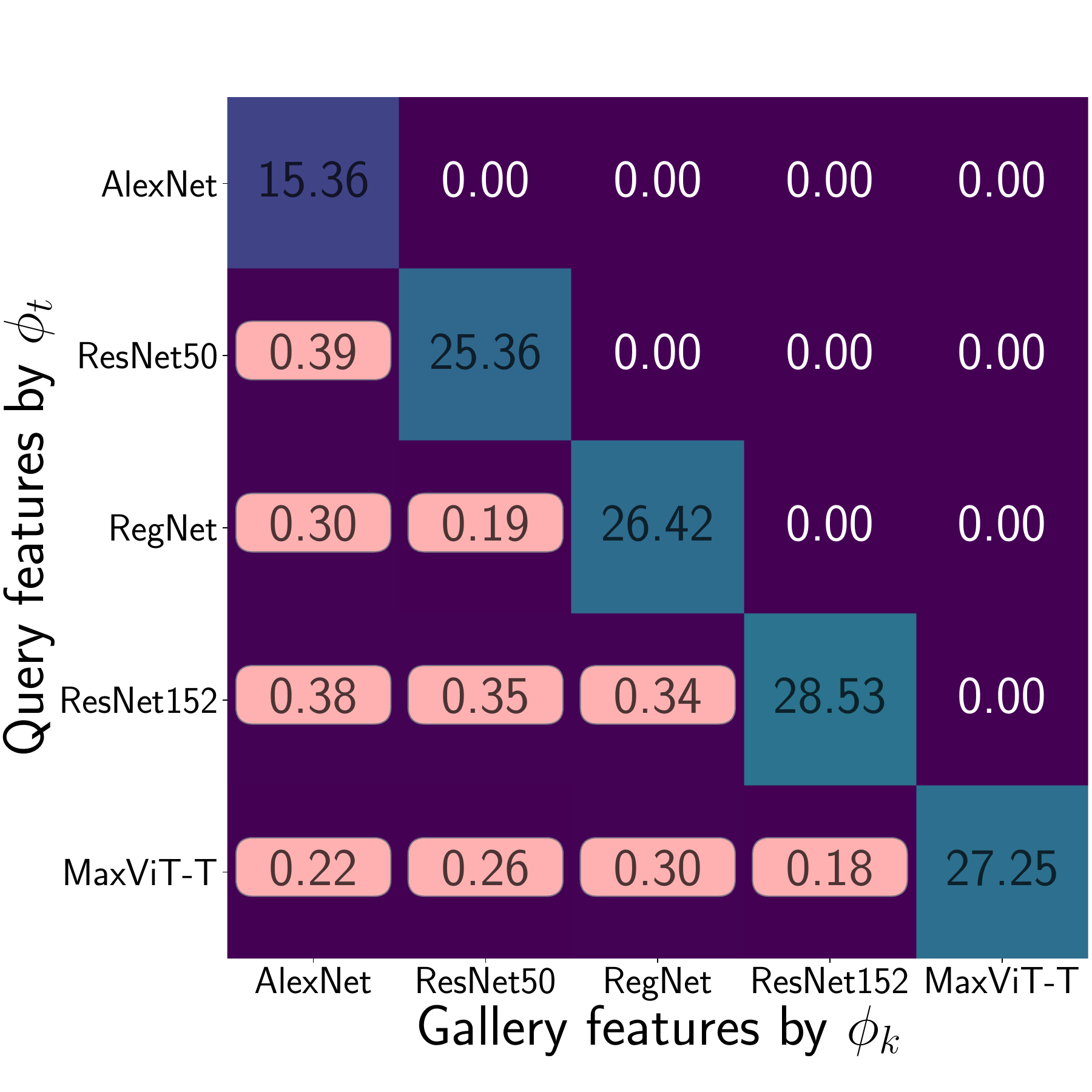}
        \caption{Baseline}
    \label{fig:place_penu_5step}
    \end{subfigure}
    \hspace{-10pt}
    \begin{subfigure}{0.33\linewidth}
        \centering
        \includegraphics[width=0.95\linewidth]{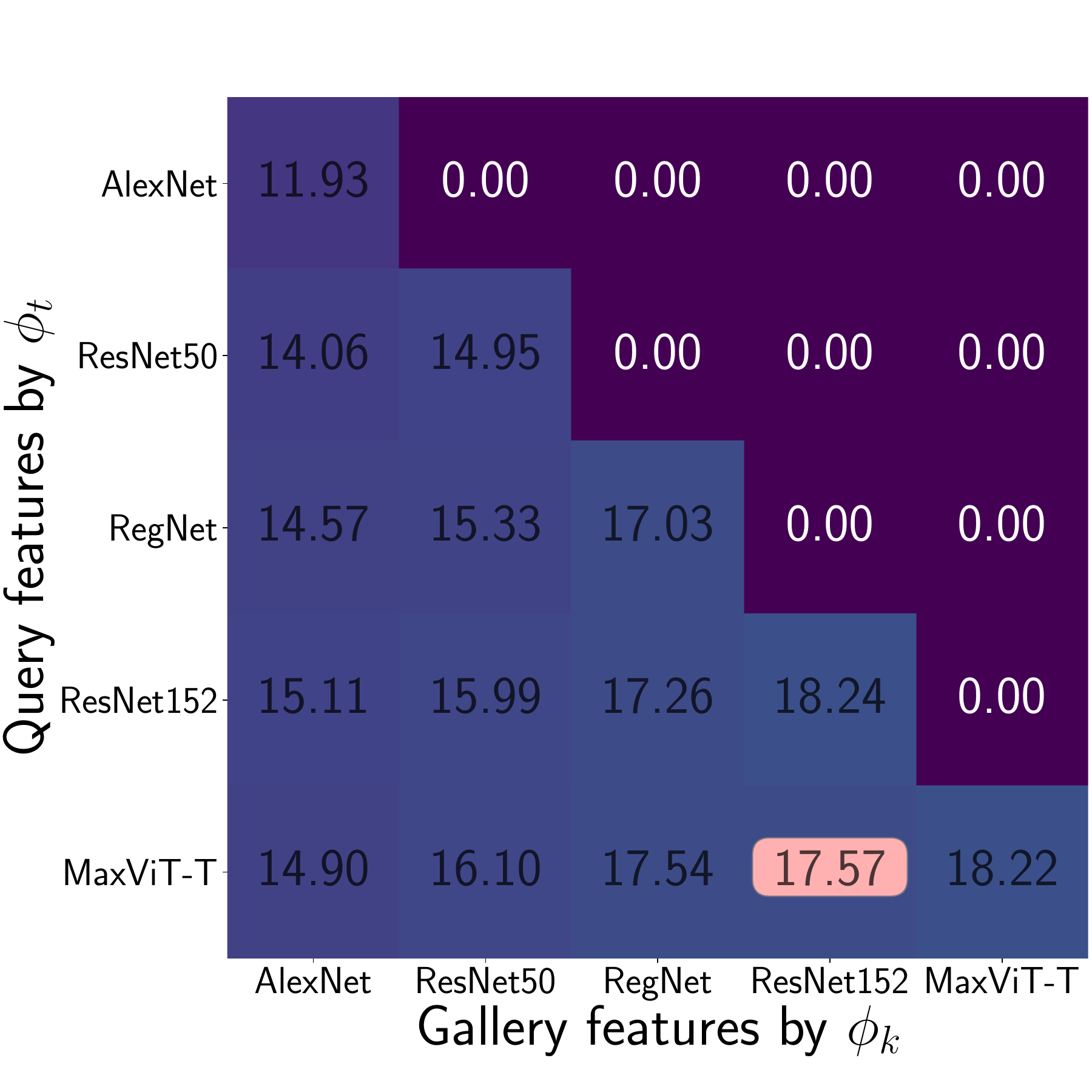}
        \caption{PSP}
    \label{fig:place_softmax_5step}
    \end{subfigure}
    \hspace{-10pt}
    \begin{subfigure}{0.33\linewidth}
        \centering
        \includegraphics[width=0.95\linewidth]{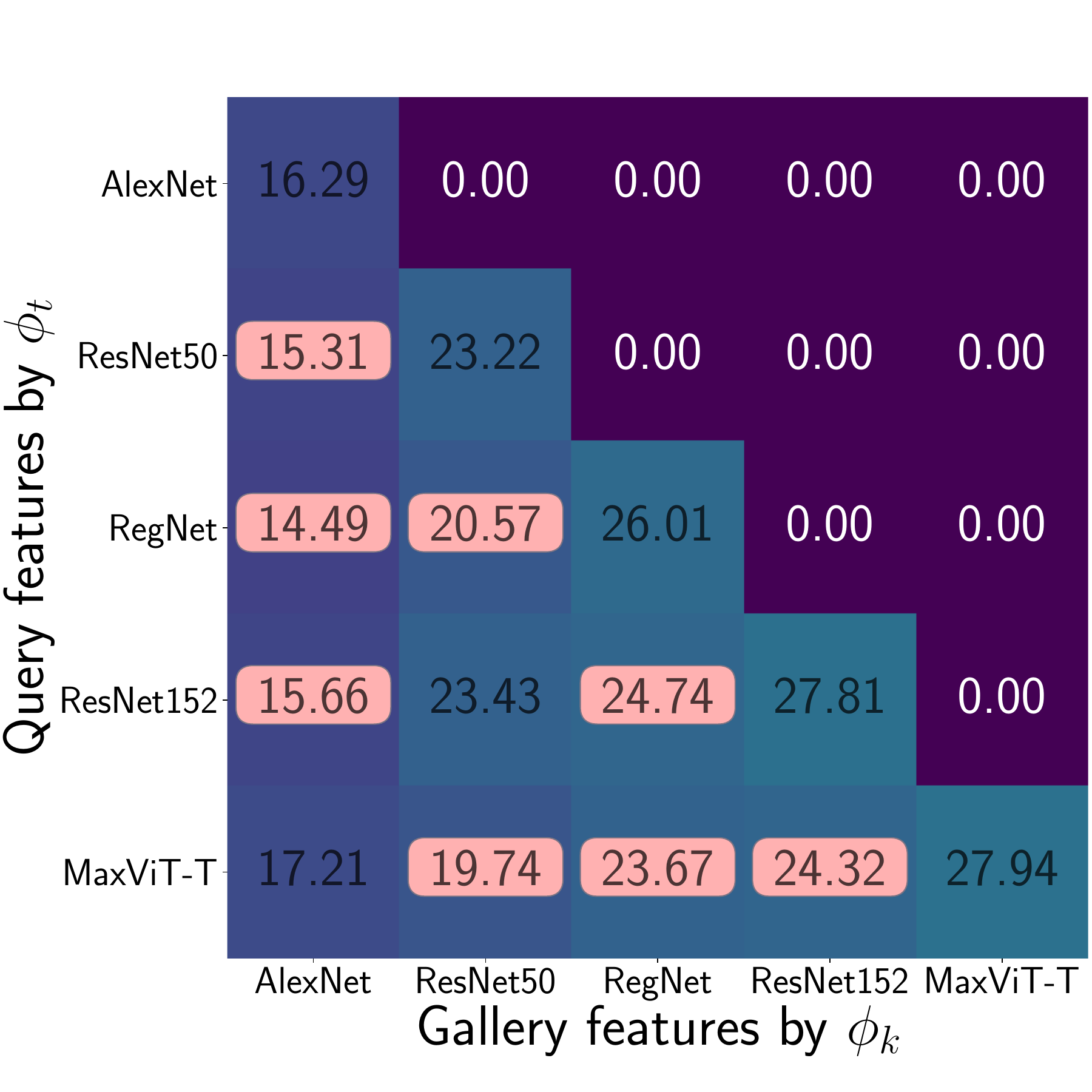}
        \caption{LSP}
    \label{fig:place_logit_5step}
    \end{subfigure}
    \caption{Open-set scenario. Compatibility matrices for five steps with increasingly expressive models (AlexNet, ResNet-50, RegNetX\_3.2GF, ResNet-152, and MaxViT\_T), all pretrained on ImageNet-1K. The matrices display retrieval performance on the Places365 dataset. 
     Entries that do not satisfy the compatibility criterion in Eq.~\ref{eq:multistepecc} are highlighted with a light-red background.
    Model representations obtained from: (a) encoder output (Baseline); (b) softmax outputs (PSP); (c) logits (LSP).}
    \label{fig:pretrained_comp_matrix_places}
\end{figure*}

In this appendix, we present the compatibility matrices corresponding to the experimental results reported in Tab.~\ref{tab:pretrained_models}  of Sec.~\ref{sec:compatibility_pretrained_models}.

Specifically, Fig.~\ref{fig:pretrained_comp_matrix_places} shows the CMC@$1$ values of the compatibility matrices in the open-set scenario using Places365 as the evaluation set. The diagonal entries represent \textit{same-model accuracy}, while the lower triangular entries show \textit{cross-model accuracy}. These experiments involve sequential model updates using AlexNet, ResNet-50, RegNetX\_3.2GF, ResNet-152, and MaxViT\_T, each independently pretrained on ImageNet-1K, with Places365 serving as the test set.

Furthermore, Fig.~\ref{fig:pretrained_comp_matrix_vit_imagenet} and~\ref{fig:pretrained_comp_matrix_vit_places} show the compatibility matrices for closed-set and open-set scenarios, respectively, using three Vision Transformer (ViT) architectures: ViT-B/32 and ViT-B/16, both trained from scratch on ImageNet-1K, and ViT-L/16, which is pretrained using weakly supervised learning on the IG-3.6B dataset and subsequently fine-tuned end-to-end on ImageNet-1K~\citep{singh2022revisiting}.

The results show that in the closed-set scenario, PSP achieves the highest compatibility and retrieval performance. Conversely, in open-set scenarios, while PSP maintains superior compatibility, it exhibits reduced retrieval performance. LSP achieves higher compatibility than the Baseline but lower than PSP; however, it substantially outperforms both the Baseline and PSP in cross-model retrieval performance, as evidenced by the off-diagonal values in the compatibility matrices.

\begin{figure*}[t]
    % \vspace{-0.5cm}
    \centering
    \begin{subfigure}{0.33\linewidth}
        \centering
        \includegraphics[width=0.7\linewidth]{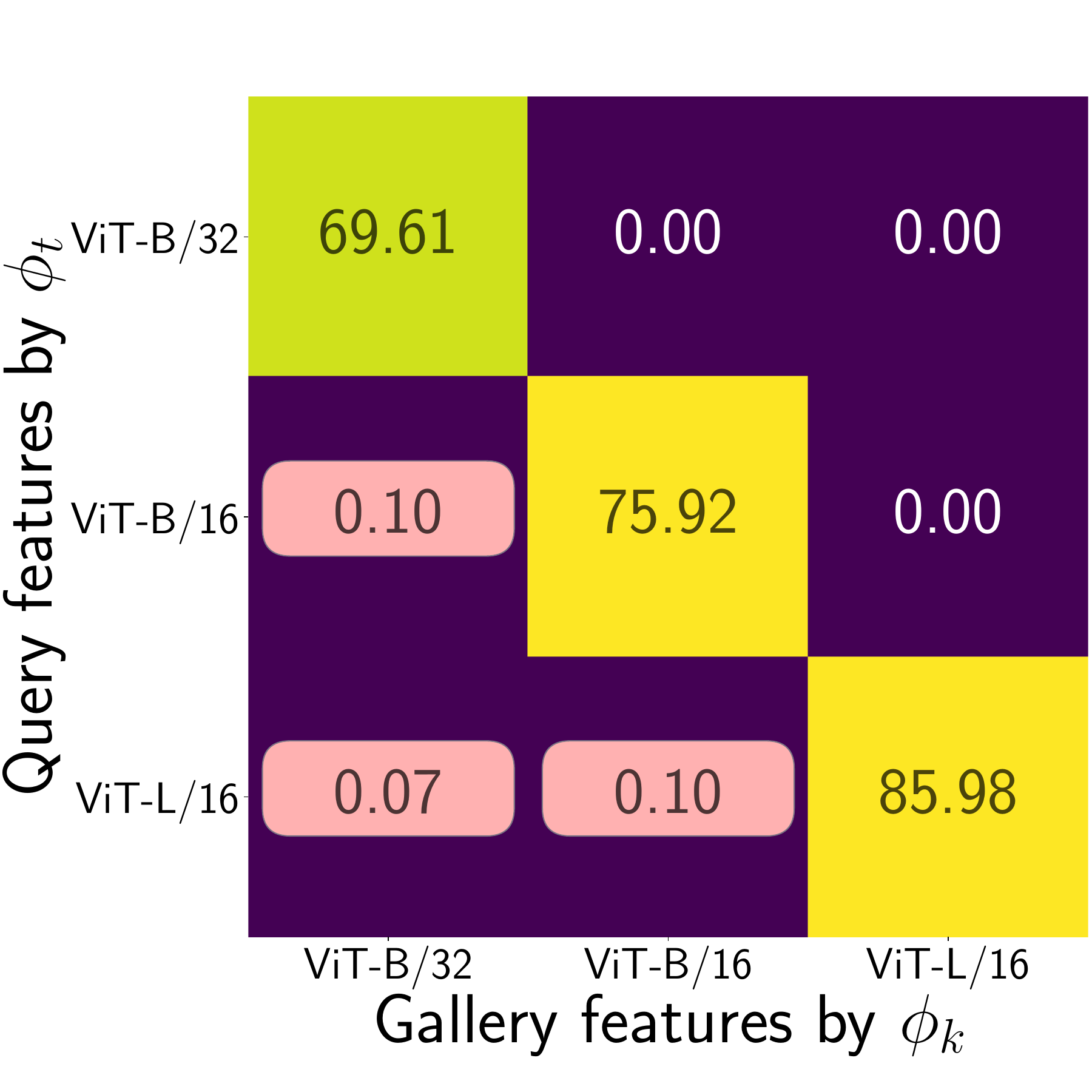}
        \caption{Baseline}
    \label{fig:prelogit_3step}
    \end{subfigure}
    \hspace{-10pt}
    \begin{subfigure}{0.33\linewidth}
        \centering
        \includegraphics[width=0.7\linewidth]{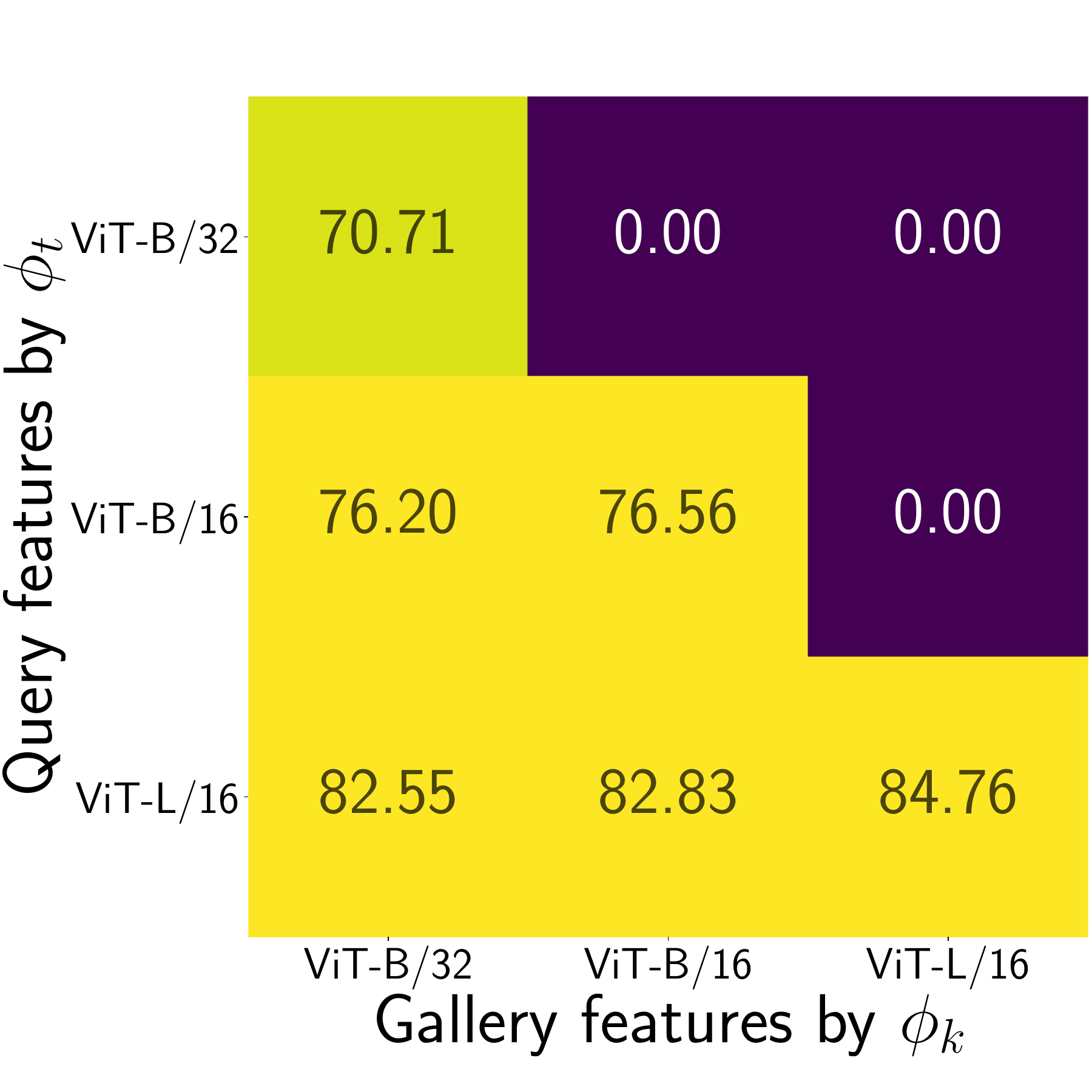}
        \caption{PSP}
    \label{fig:softmaxp_3step}
    \end{subfigure}
    \hspace{-10pt}
    \begin{subfigure}{0.33\linewidth}
        \centering
        \includegraphics[width=0.7\linewidth]{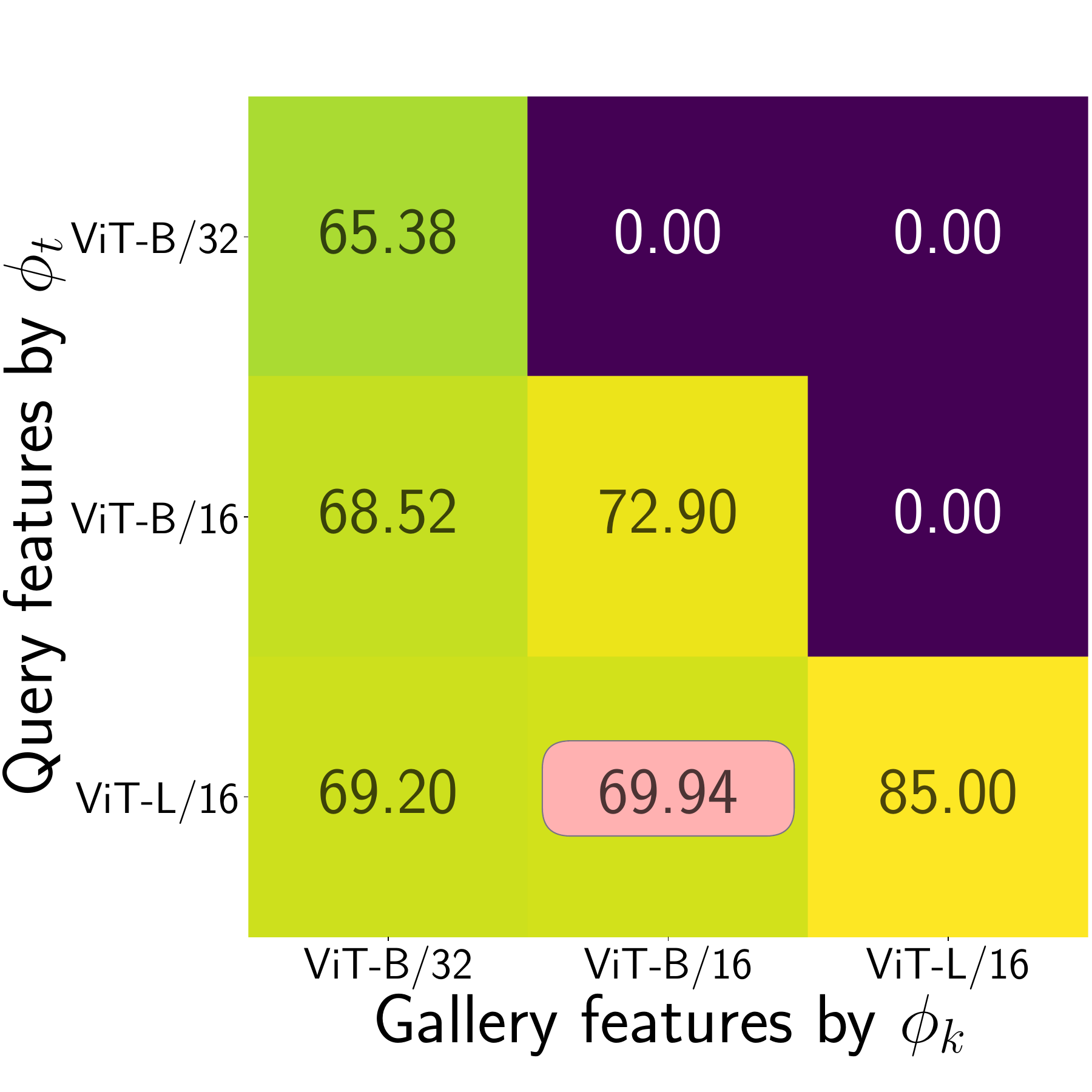}
        \caption{LSP}
    \label{fig:logit_3step}
    \end{subfigure}
    \caption{Closed-set scenario. Compatibility matrices for three steps with advanced Vision Transformer (ViT) model architectures (ViT-B/32, ViT-B/16, and ViT-L/16), all pretrained on ImageNet-1K. The matrices show closed-set retrieval performance on the ImageNet-1K test set.  Entries that do not satisfy the compatibility criterion in Eq.~\ref{eq:multistepecc} are highlighted with a light-red background.
    Model representations obtained from: (a) encoder output (Baseline); (b) softmax outputs (PSP); (c) logits (LSP).}
    \label{fig:pretrained_comp_matrix_vit_imagenet}
\end{figure*}

\begin{figure*}[t]
    % \vspace{-0.5cm}
    \centering
    \begin{subfigure}{0.33\linewidth}
        \centering
        \includegraphics[width=0.7\linewidth]{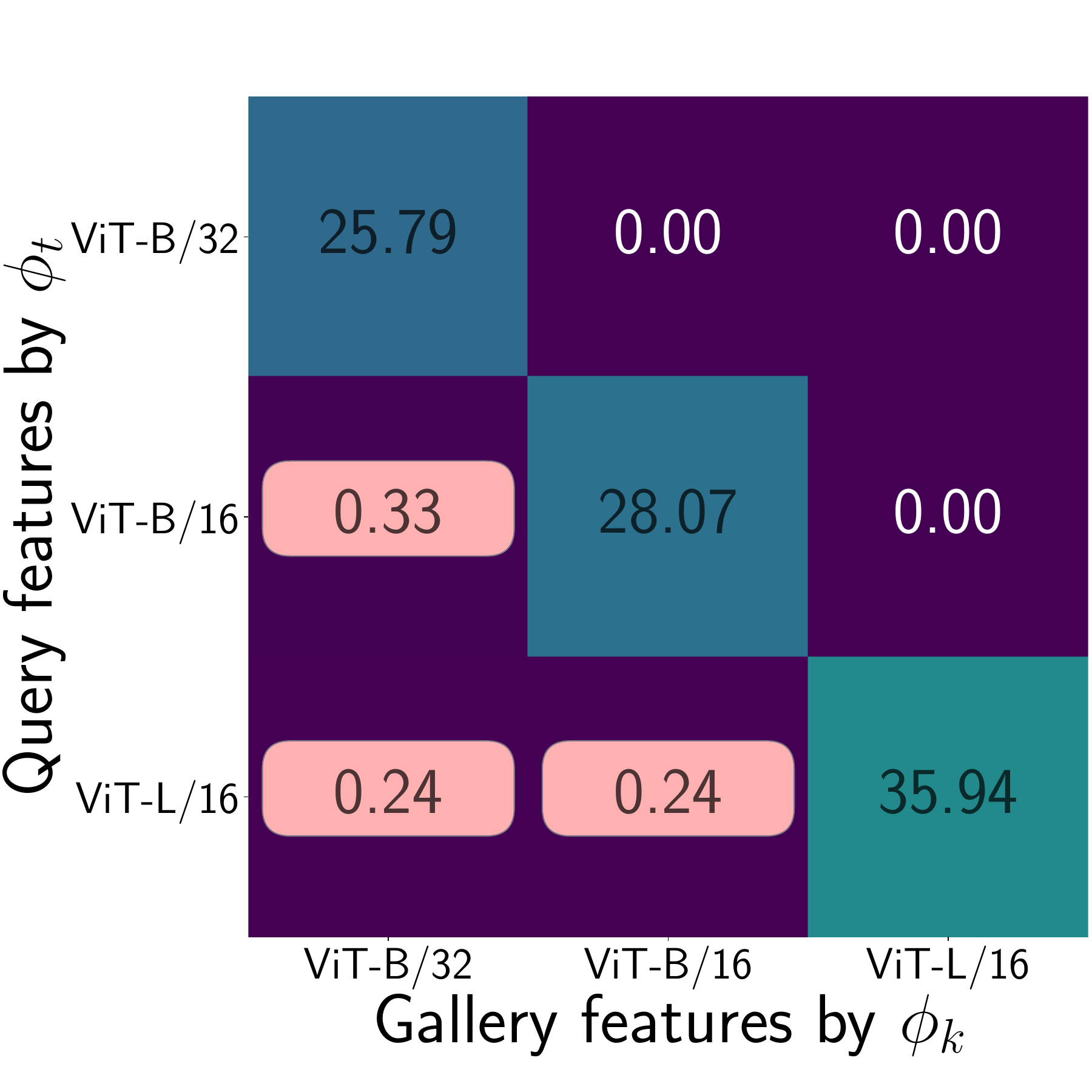}
        \caption{Baseline}
    \label{fig:place_penu_3step}
    \end{subfigure}
    \hspace{-10pt}
    \begin{subfigure}{0.33\linewidth}
        \centering
        \includegraphics[width=0.7\linewidth]{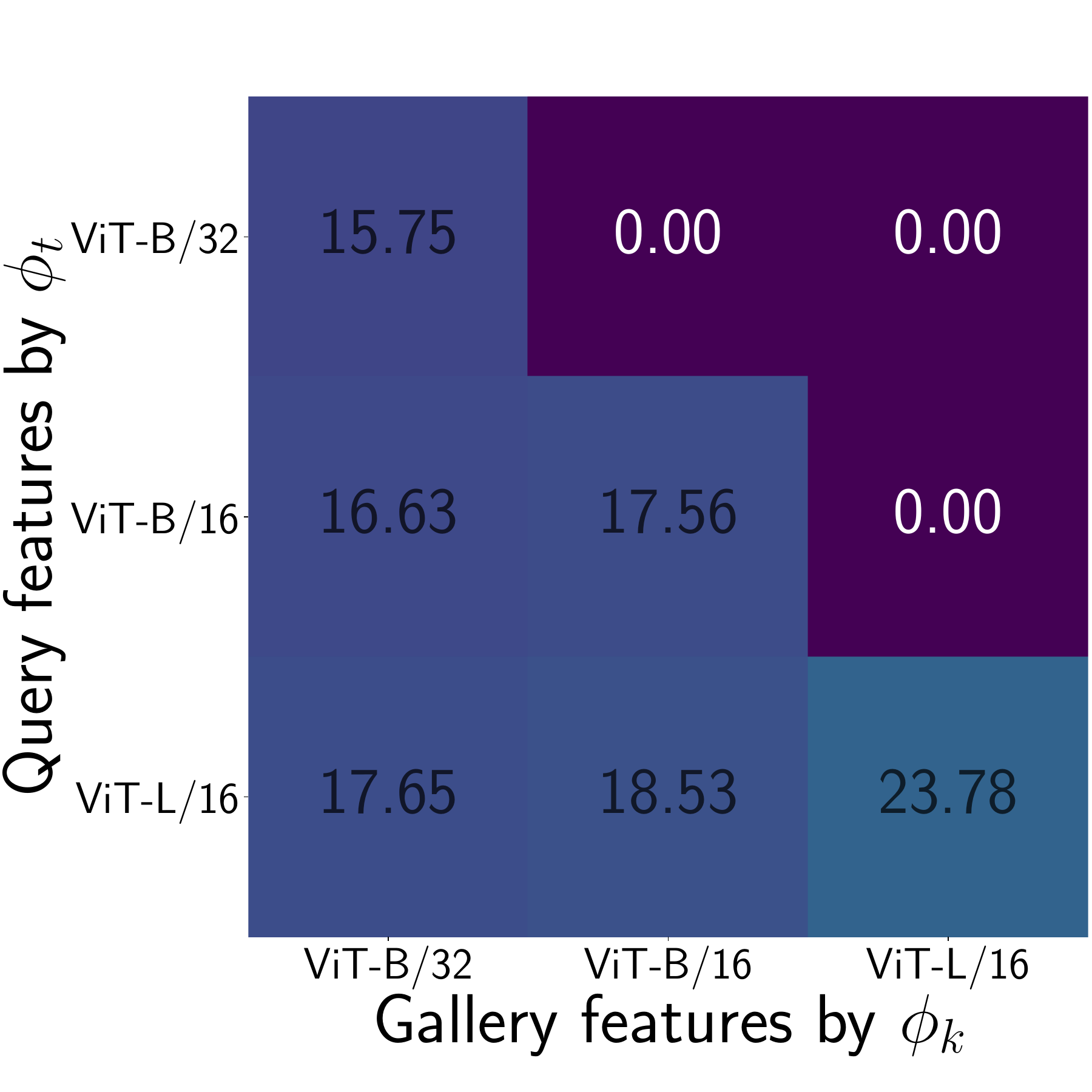}
        \caption{PSP}
    \label{fig:place_soft_3step}
    \end{subfigure}
    \hspace{-10pt}
    \begin{subfigure}{0.33\linewidth}
        \centering
        \includegraphics[width=0.7\linewidth]{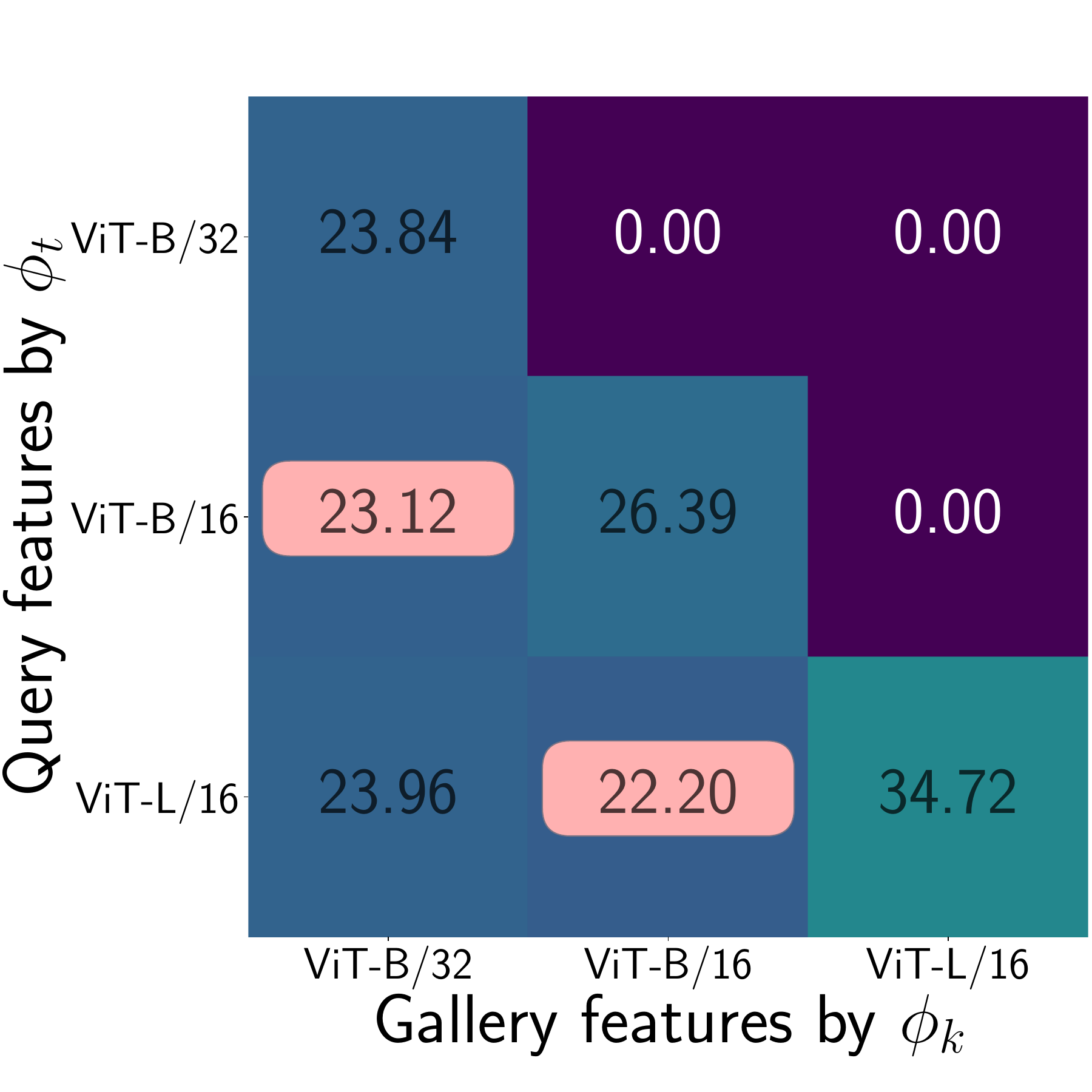}
        \caption{LSP}
    \label{fig:place_logit_3step}
    \end{subfigure}
    \caption{Open-set scenario. Compatibility matrices for three steps with advanced Vision Transformer (ViT) model architectures (ViT-B/32, ViT-B/16, and ViT-L/16), all pretrained on ImageNet-1K. The matrices show open-set retrieval performance on the Places365 test set. 
     Entries that do not satisfy the compatibility criterion in Eq.~\ref{eq:multistepecc} are highlighted with a light-red background.
     Model representations obtained from: (a) encoder output (Baseline); (b) softmax outputs (PSP); (c) logits (LSP).}
    \label{fig:pretrained_comp_matrix_vit_places}
\end{figure*}

\subsection{Extension to Self-Supervised and Vision-Language Encoders via Linear Probing}\label{sec:linear_probe}

In addition to supervised ImageNet-pretrained classifiers from torchvision \citep{paszke2019pytorch,marcel2010torchvision}, we extend our analysis to self-supervised pretrained encoders and pretrained vision-language models in a linear-probe setting. Specifically, we consider two families of publicly available pretrained models: (i) three checkpoints from the DINO family, namely DINO \citep{caron2021emerging}, DINOv2 \citep{oquab2023dinov2}, and DINOv3 \citep{simeoni2025dinov3}, obtained from the Hugging Face Hub \citep{wolf2019huggingface}; and (ii) three CLIP variants pretrained by OpenAI, namely ViT-B/32, ViT-B/16, and ViT-L/14, obtained from the OpenCLIP repository \citep{ilharco_gabriel_2021_5143773}. In all cases, the image encoder is used as a frozen feature extractor.

For each checkpoint, we train a bias-free linear classifier on top of the extracted features using ImageNet-1K as the training set for 30 epochs, while keeping the encoder weights frozen. We then construct PSP and LSP representations from the resulting classifier outputs, using softmax outputs and logits, respectively. As a baseline, we directly use the $\ell_2$-normalized encoder output without any additional training. Retrieval performance is evaluated on the ImageNet-1K test set, following the same protocol adopted in the main benchmark.

\begin{table*}[t]
    \centering
    \caption{Compatibility results for self-supervised and vision-language pretrained encoders in a linear-probe setting on ImageNet-1K. For each checkpoint, a linear probe is trained on top of frozen backbone features. Dark blue numbers indicate the highest values, while light blue numbers indicate the second-highest values.}
    \label{tab:pretrained_ssl_models}
    \begin{subtable}[t]{0.49\textwidth}
        \centering
        \small
        \caption{DINO family: DINO-ViT-S $\xrightarrow{}$ DINOv2-ViT-S/14 $\xrightarrow{}$ DINOv3-ViT-L/16}
        \label{tab:pretrained_ssl_models_dino}
        \begin{tabular}{l ccc c ccc}
            \toprule
            \textsc{Method}
            & {$AC$} & {$AA$} & {$ACA$} \\
            \midrule
            Baseline
                & 0 & 35.50 & 0  \\
            PSP
                & \textcolor[RGB]{0,0,139}{\textbf{1.00}}
                & \textcolor[RGB]{0,0,139}{\textbf{71.00}}
                & \textcolor[RGB]{0,0,139}{\textbf{72.00}}
                \\
            LSP
                & \textcolor[RGB]{65,105,225}{\textbf{0.67}}
                & \textcolor[RGB]{65,105,225}{\textbf{68.70}}
                & \textcolor[RGB]{65,105,225}{\textbf{47.02}}
                \\
            \bottomrule
        \end{tabular}%
    \end{subtable}
    \hfill
    \begin{subtable}[t]{0.49\textwidth}
        \centering
        \small
        \caption{CLIP family: CLIP ViT-B/32 $\xrightarrow{}$ CLIP ViT-B/16 $\xrightarrow{}$ CLIP ViT-L/14}
        \label{tab:pretrained_ssl_models_clip}
        \begin{tabular}{l ccc c ccc}
            \toprule
            \textsc{Method}
            & {$AC$} & {$AA$} & {$ACA$}  \\
            \midrule
            Baseline
                & 0 & 30.70 & 0  \\
            PSP
                & \textcolor[RGB]{0,0,139}{\textbf{1.00}}
                & \textcolor[RGB]{0,0,139}{\textbf{60.10}}
                & \textcolor[RGB]{0,0,139}{\textbf{59.20}}
                 \\
            LSP
                & \textcolor[RGB]{65,105,225}{\textbf{0.33}}
                & \textcolor[RGB]{65,105,225}{\textbf{52.10}}
                & \textcolor[RGB]{65,105,225}{\textbf{16.42}}
                 \\
            \bottomrule
        \end{tabular}%
    \end{subtable}
\end{table*}

\begin{figure*}[t]
    \vspace{-0.2cm}
    \centering
        \begin{subfigure}{0.32\linewidth}
        \centering
        \includegraphics[width=0.78\linewidth]{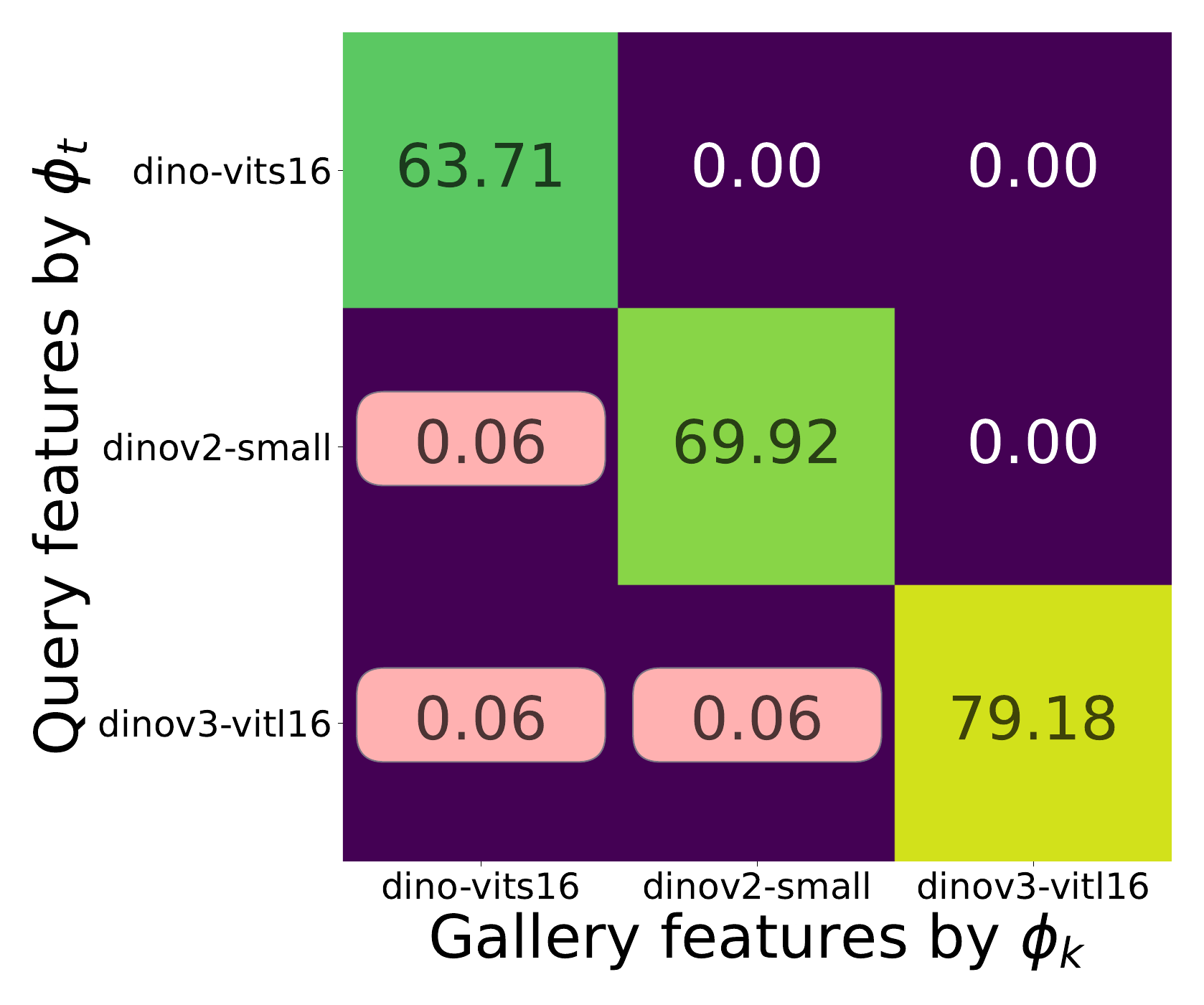}        \caption{Baseline}
    \label{fig:pretrained_comp_matrix_dino_imagenet_logit_prelogit}
    \end{subfigure}
    % \hspace{-5pt}
    \hfill
    \begin{subfigure}{0.32\linewidth}
        \centering
        \includegraphics[width=0.78\linewidth]{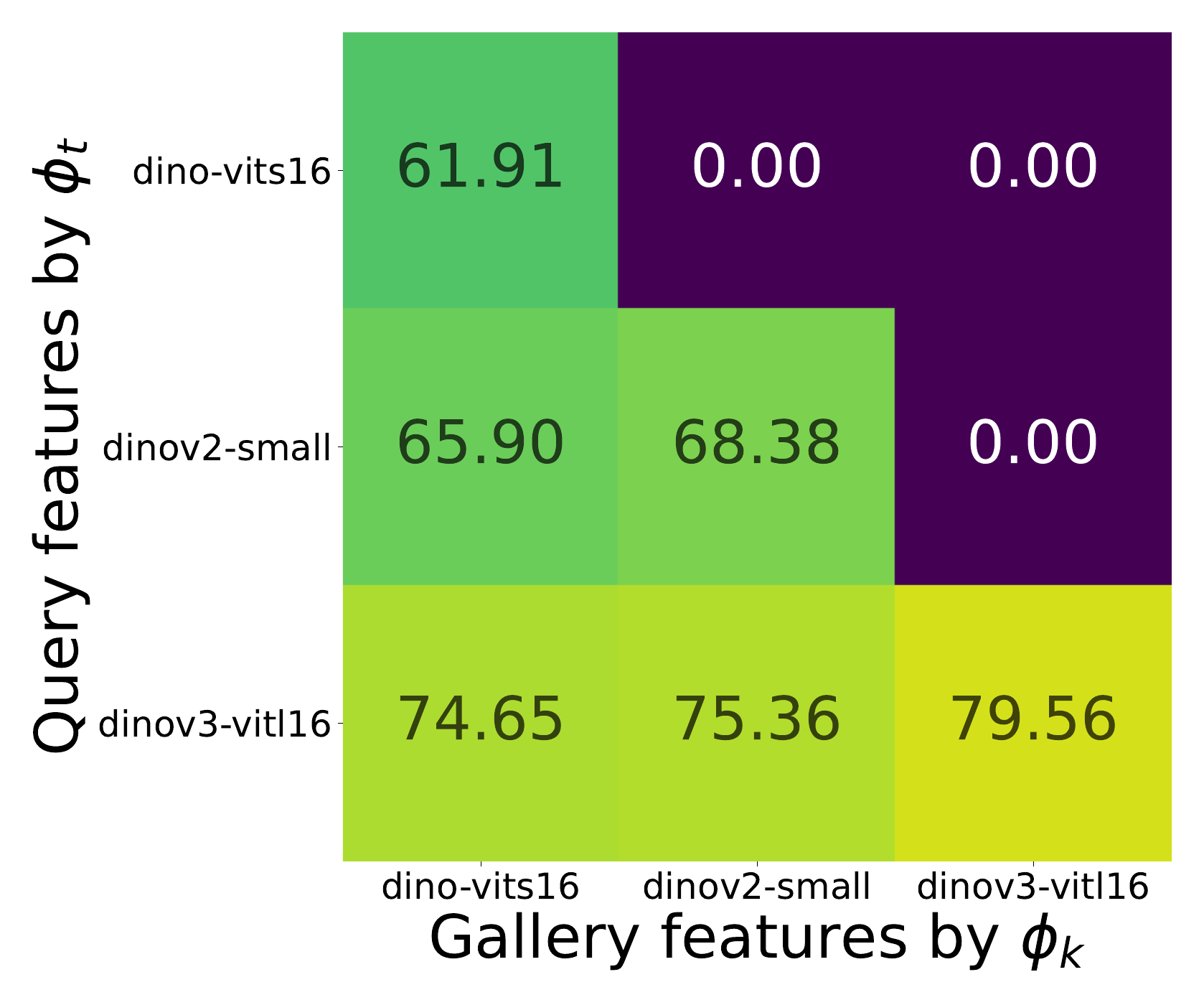}
        \caption{PSP}
    \label{fig:pretrained_comp_matrix_dino_imagenet_logit_softmax}
    \end{subfigure}
    \hfill
    % \hspace{-5pt}
    \begin{subfigure}{0.32\linewidth}
        \centering
        \includegraphics[width=0.78\linewidth]{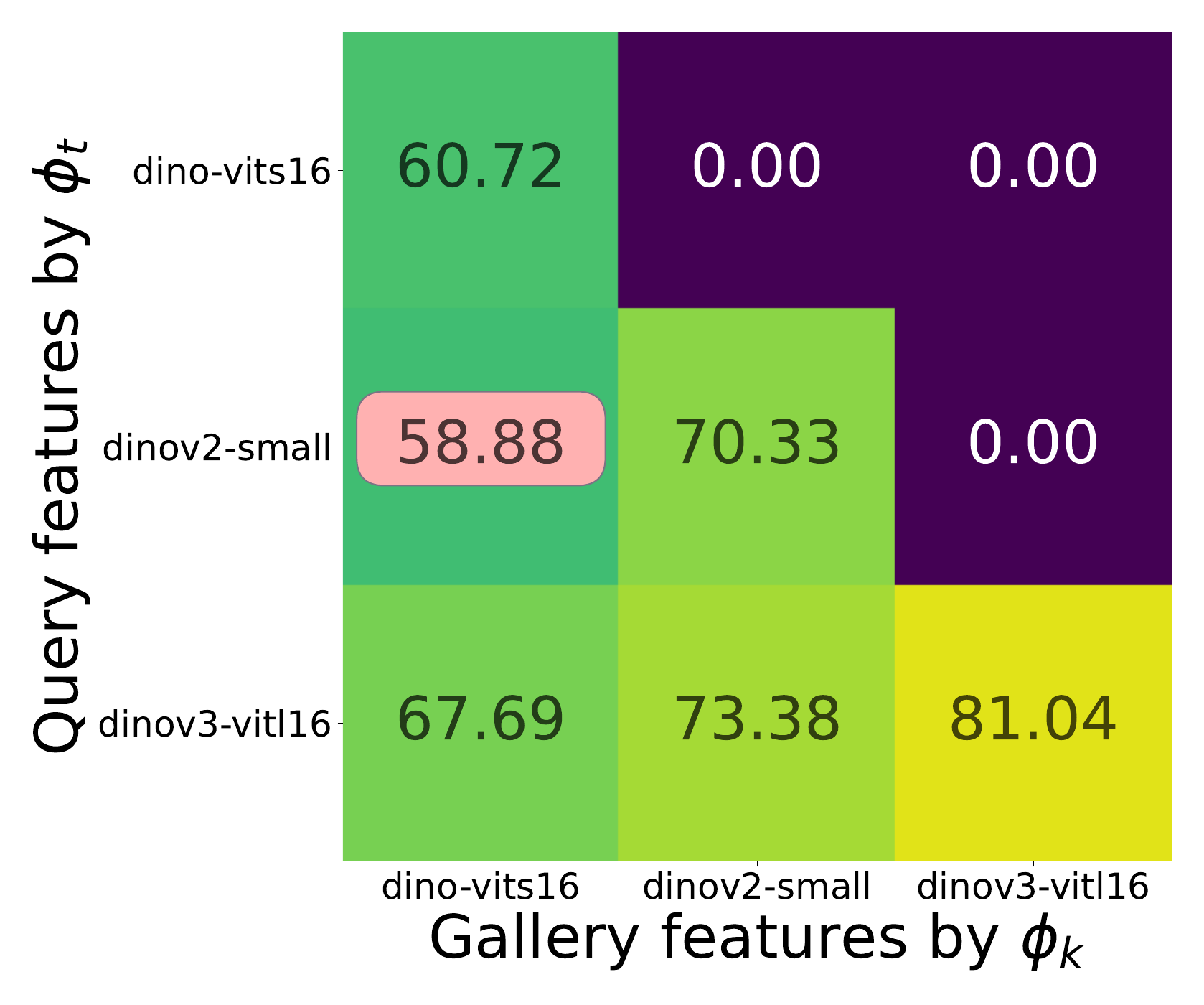}
        \caption{LSP}
    \label{fig:pretrained_comp_matrix_dino_imagenet_logit}
    \end{subfigure}
    \caption{Compatibility matrices for three steps with self-supervised Vision Transformer checkpoints from the DINO family (DINO-ViT-S, DINOv2-ViT-S/14, and DINOv3-ViT-L/16) obtained from Hugging Face Hub~\citep{wolf2019huggingface}. For each checkpoint, we train a linear probe on ImageNet-1K with the backbone frozen, and construct PSP/LSP features from the resulting classifier outputs. The matrices show closed-set retrieval performance on the ImageNet-1K test set. Entries that do not satisfy the compatibility criterion in Eq.~\ref{eq:multistepecc} are highlighted with a light-red background. Model representations obtained from: (a) encoder output (Baseline); (b) softmax outputs (PSP); (c) logits (LSP).}
\label{fig:pretrained_comp_matrix_dino_imagenet}
\end{figure*}

\begin{figure*}[t]
    \vspace{-0.2cm}
    \centering
        \begin{subfigure}{0.32\linewidth}
        \centering
        \includegraphics[width=0.78\linewidth]{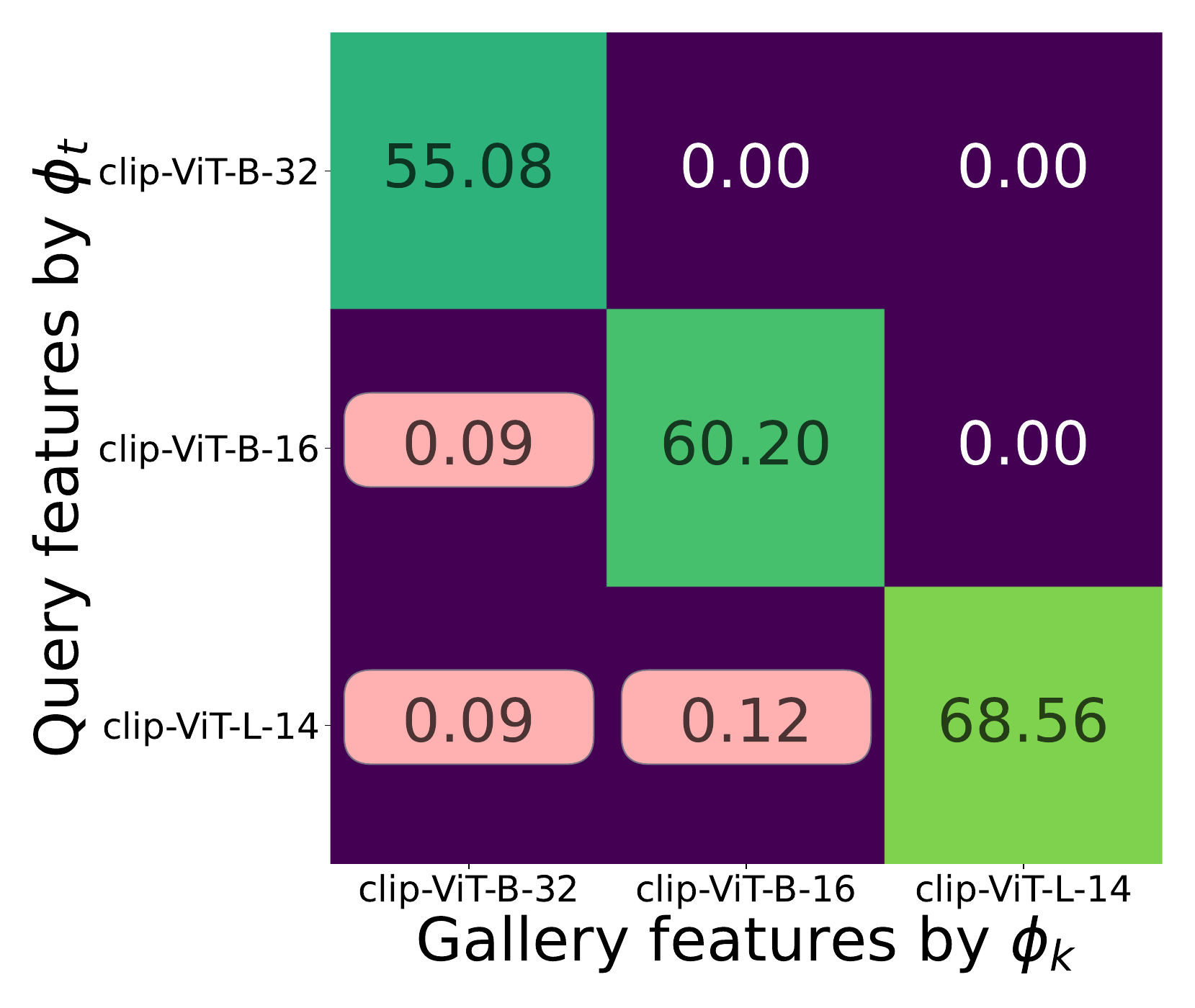}        \caption{Baseline}
    \label{fig:pretrained_comp_matrix_dino_imagenet_logit_prelogit_clip}
    \end{subfigure}
    % \hspace{-5pt}
    \hfill
    \begin{subfigure}{0.32\linewidth}
        \centering
        \includegraphics[width=0.78\linewidth]{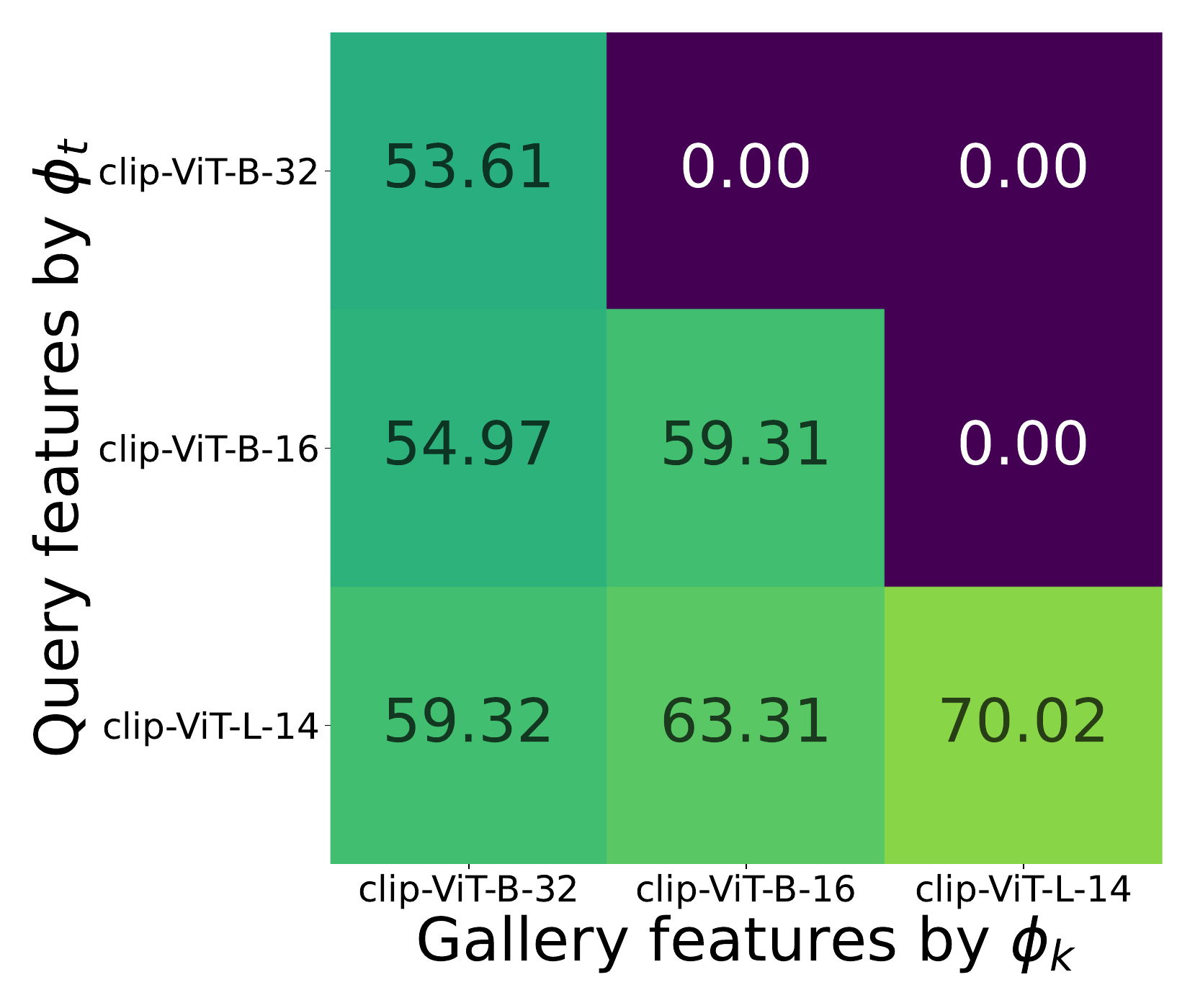}
        \caption{PSP}
    \label{fig:pretrained_comp_matrix_dino_imagenet_logit_softmax_clip}
    \end{subfigure}
    \hfill
    % \hspace{-5pt}
    \begin{subfigure}{0.32\linewidth}
        \centering
        \includegraphics[width=0.78\linewidth]{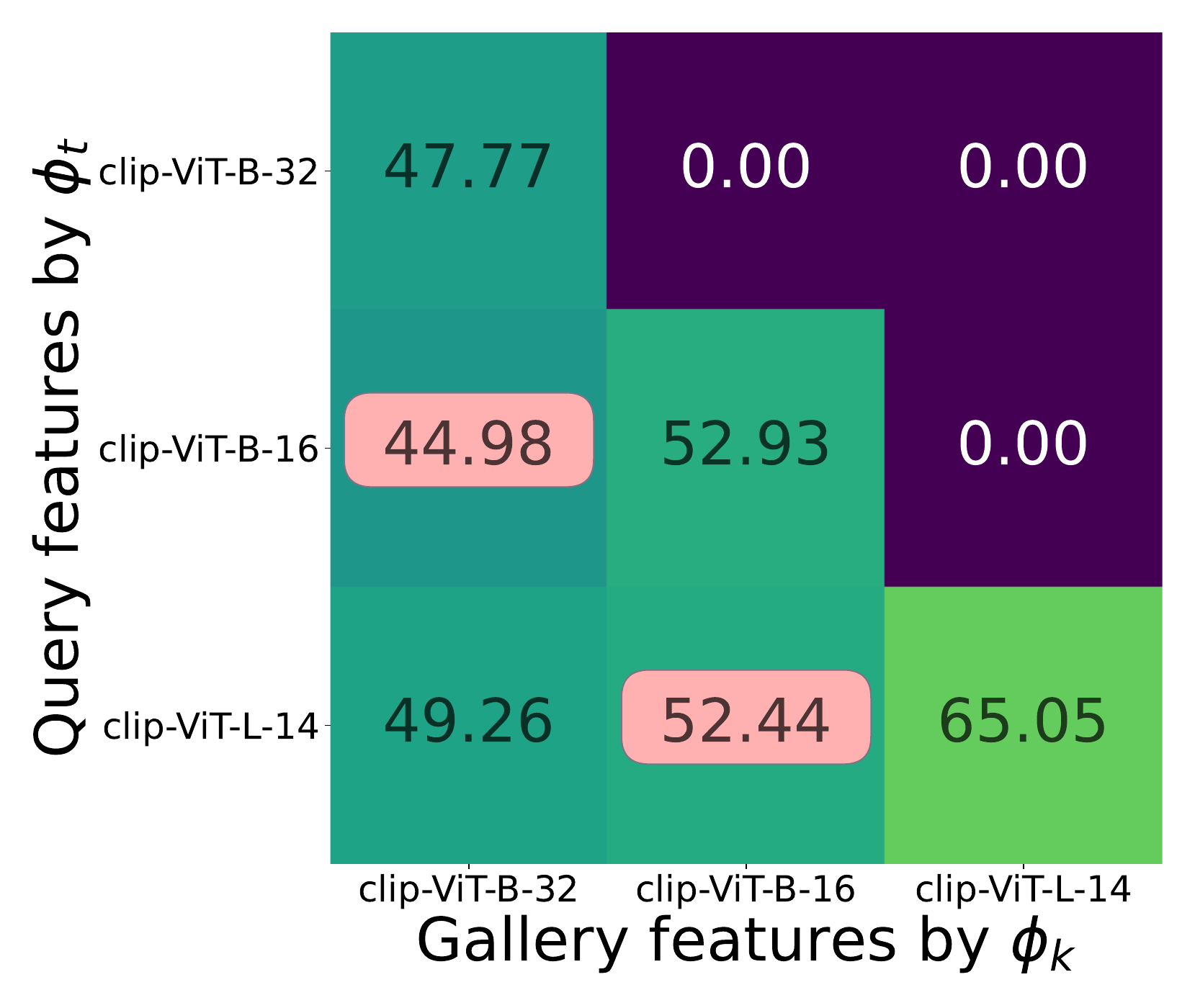}
        \caption{LSP}
    \label{fig:pretrained_comp_matrix_dino_imagenet_logit_clip}
    \end{subfigure}
    \caption{Compatibility matrices for three steps with vision-language CLIP family (ViT-B/32, ViT-B/16, and ViT-L/14) pretrained by OpenAI. For each checkpoint, we train a linear probe on ImageNet-1K with the backbone frozen, and construct PSP/LSP features from the resulting classifier outputs. The matrices show closed-set retrieval performance on the ImageNet-1K test set. Entries that do not satisfy the compatibility criterion in Eq.~\ref{eq:multistepecc} are highlighted with a light-red background. Model representations obtained from: (a) encoder output (Baseline); (b) softmax outputs (PSP); (c) logits (LSP).}
\label{fig:pretrained_comp_matrix_clip_imagenet}
\end{figure*}

Tab.~\ref{tab:pretrained_ssl_models} summarizes the compatibility results obtained in this linear-probe setting on ImageNet-1K. For both DINO and CLIP model families, PSP and LSP achieve substantially higher compatibility metrics than the baseline, confirming that the proposed framework can be extended beyond standard supervised classifiers.

For the DINO family, the observed trend is consistent with that reported for supervised ImageNet-pretrained models in Sec.~\ref{sec:compatibility_pretrained_models}. In particular, PSP achieves the strongest compatibility results, with AC~=~1.0 and ACA~=~72.00, while LSP achieves AC~=~0.67 and ACA~=~47.02. As shown in Fig.~\ref{fig:pretrained_comp_matrix_dino_imagenet}, the baseline achieves reasonable same-model retrieval performance, but cross-model retrieval remains negligible, indicating that independently pretrained self-supervised encoders do not naturally produce aligned representation spaces across updates.

For the CLIP family, we observe the same overall pattern, as shown in Fig.~\ref{fig:pretrained_comp_matrix_clip_imagenet}. PSP again achieves the strongest compatibility results (AC~=~1.0, ACA~=~59.20), while LSP improves substantially over the baseline but remains below PSP. The consistency of this behavior across both model families provides additional empirical support for the theoretical analysis in Sec.~\ref{sec:theo_section}.

Overall, these results show that a simple linear-probe protocol provides a practical way to apply PSP and LSP to pretrained encoders that do not natively expose a classification head, including both self-supervised and vision-language models.

\end{document}